\setlist[description]{leftmargin=0.3cm,labelindent=0cm}
\newtheorem{theorem}{Theorem}[section]
\newtheorem{lemma}{Lemma}[section]
\newtheorem{corollary}{Corollary}[section]
\newtheorem{defn}{Definition}[section]
\newtheorem{assum}{Assumption}[section]
\newtheorem{remark}{Remark}[section]
\newtheorem{problem}{Problem}[section]
\newtheorem{example}{Example}[section]
\newcommand{\RR}{{\mathbb R}}
\newcommand{\ZZ}{{\mathbb Z}}
\newcommand{\cC}{{\mathcal C}}
\newcommand{\cF}{{\mathcal F}}
\newcommand{\cG}{{\mathcal G}}
\newcommand{\cI}{{\mathcal I}}
\newcommand{\cM}{{\mathcal M}}
\newcommand{\cP}{{\mathcal P}}
\newcommand{\cX}{{\mathcal X}}
\newcommand{\cAOTS}{{\mathcal A}_{\textsc{\tiny OTS}}}
\newcommand{\cAPA}{{\mathcal A}_{\textsc{\tiny PA}}}
\newcommand{\OTS}{{\textsc{\tiny OTS}}}
\newcommand{\cHMA}{{\mathcal H}_{\textsc{\tiny MA}}}
\newcommand{\EMA}{E_{\textsc{\tiny MA}}}
\newcommand{\XMA}{X_{\textsc{\tiny MA}}}
\newcommand{\IMA}{I_{\textsc{\tiny MA}}}
\newcommand{\GMA}{G_{\textsc{\tiny MA}}}
\newcommand{\RMA}{R_{\textsc{\tiny MA}}}
\newcommand{\LOTS}{L_{\textsc{\tiny OTS}}}
\newcommand{\EOTS}{E_{\textsc{\tiny OTS}}}
\newcommand{\EPA}{E_{\textsc{\tiny PA}}}
\newcommand{\DPA}{D_{\textsc{\tiny PA}}}
\newcommand{\HPA}{H_{\textsc{\tiny PA}}}
\newcommand{\SMA}{\Sigma_{\textsc{\tiny MA}}}
\newcommand{\PiPA}{\Pi_{\textsc{\tiny PA}}}
\newcommand{\QPA}{Q_{\textsc{\tiny PA}}}
\newcommand{\QMA}{Q_{\textsc{\tiny MA}}}
\newcommand{\SPA}{\Sigma_{\textsc{\tiny PA}}}
\newcommand{\phiMA}{\phi_{\textsc{\tiny MA}}}
\newcommand{\yMA}{y_{\textsc{\tiny MA}}}
\newcommand{\oEMA}{\overline{E}_{\textsc{\tiny MA}}}
\newcommand{\oSMA}{\overline{\Sigma}_{\textsc{\tiny MA}}}
\newcommand{\sH}{\mathscr{H}}
\newcommand{\sF}{\mathscr{F}}
\newcommand{\sB}{\mathscr{B}}
\newcommand{\ol}{\overline}
\newcommand{\conv}{\textup{co}}
\newcommand\tqed{\leavevmode\unskip\penalty9999 \hbox{}\nobreak\hfill\quad\hbox{$\triangleleft$}}
\DeclareMathOperator*{\argmin}{\arg\!\min}
\title{\Large \bf
A Modular Framework for Motion Planning using Safe-by-Design Motion Primitives}
\author{Marijan Vukosavljev, Zachary Kroeze, Angela P. Schoellig, and Mireille E. Broucke% <-this % stops a space
\thanks{Marijan Vukosavljev, Zachary Kroeze, and Mireille E. Broucke are with the Dept. of Electrical and Computer Engineering, 
University of Toronto, Canada (e-mails: mario.vukosavljev@mail.utoronto.ca, zach.kroeze@mail.utoronto.ca, 
broucke@control.utoronto.ca). Angela P. Schoellig is with the University of Toronto Institute for Aerospace Studies (UTIAS), 
Canada (email: schoellig@utias.utoronto.ca). Supported by the Natural Sciences and Engineering Research Council of Canada (NSERC). }%
}
\begin{document}
%\linespread{0.9} % in case tight for space!
\maketitle
\begin{abstract}
We present a modular framework for solving a motion planning problem among a group of robots. The proposed framework utilizes a finite set of low level {\em motion primitives} to generate motions in a gridded workspace. The constraints 
on allowable sequences of motion primitives are formalized through a {\em maneuver automaton}. At the high level, a control policy determines 
which motion primitive is executed in each box of the gridded workspace. We state general conditions on 
motion primitives to obtain provably correct behavior so that a library of safe-by-design motion primitives 
can be designed. The overall 
framework yields a highly robust design by utilizing feedback strategies at both the low and high levels. We provide specific designs for motion primitives and control policies suitable for multi-robot motion planning; the modularity of our approach enables one to independently customize the designs of each of these components.
Our approach is experimentally validated on a group of quadrocopters.
\end{abstract}

\section{Introduction}
\label{sec:intro}

This paper presents a modular, hierarchical framework for motion planning and control of robotic systems. While motion planning has received a great deal of attention by many researchers, because the problem is highly complex especially when there are several robotic agents working together in a cluttered environment, significant challenges remain. 
%\textcolor{blue}{The problem of motion planning for multi-robot systems is a highly active area of research, encompassing a wide variety applications such as automated warehouse operation, transportation of goods, and surveillance \cite{}.
%However, due to the high complexity of the problem}, especially when there are many robotic agents working together in a cluttered environment, significant challenges remain. 
Hierarchy, in which the control design has several layers, is an architectural strategy to overcome this complexity. Almost all hierarchical frameworks for motion planning aim to balance flexibility in the control specification at the high level, guarantees on correctness and safety at the low level, and computational feasibility overall. 

Historically motion planning was focused on high level planning algorithms, while suppressing details on the dynamic capabilities of the robots at the low level \cite{LAV06}. 
Taking full account of low level dynamics in combination with solving the high level planning problem can lead to a computationally intractable problem. 
%Recent literature on this problem includes \cite{CHOS03,LAV09,PAP09,DIM13,GAZ14,KUM14,SCHW14}. 
Despite the wealth of available research \cite{CHOS03,KUM14,AYAN10,BELTA08}, computationally efficient solutions to the motion planning problem with tight integration of high and low levels are highly sought after. 

\begin{figure}%[t]
\centering%
\includegraphics[width=0.9\linewidth,trim=0cm 0cm 0cm 0cm, clip=true]{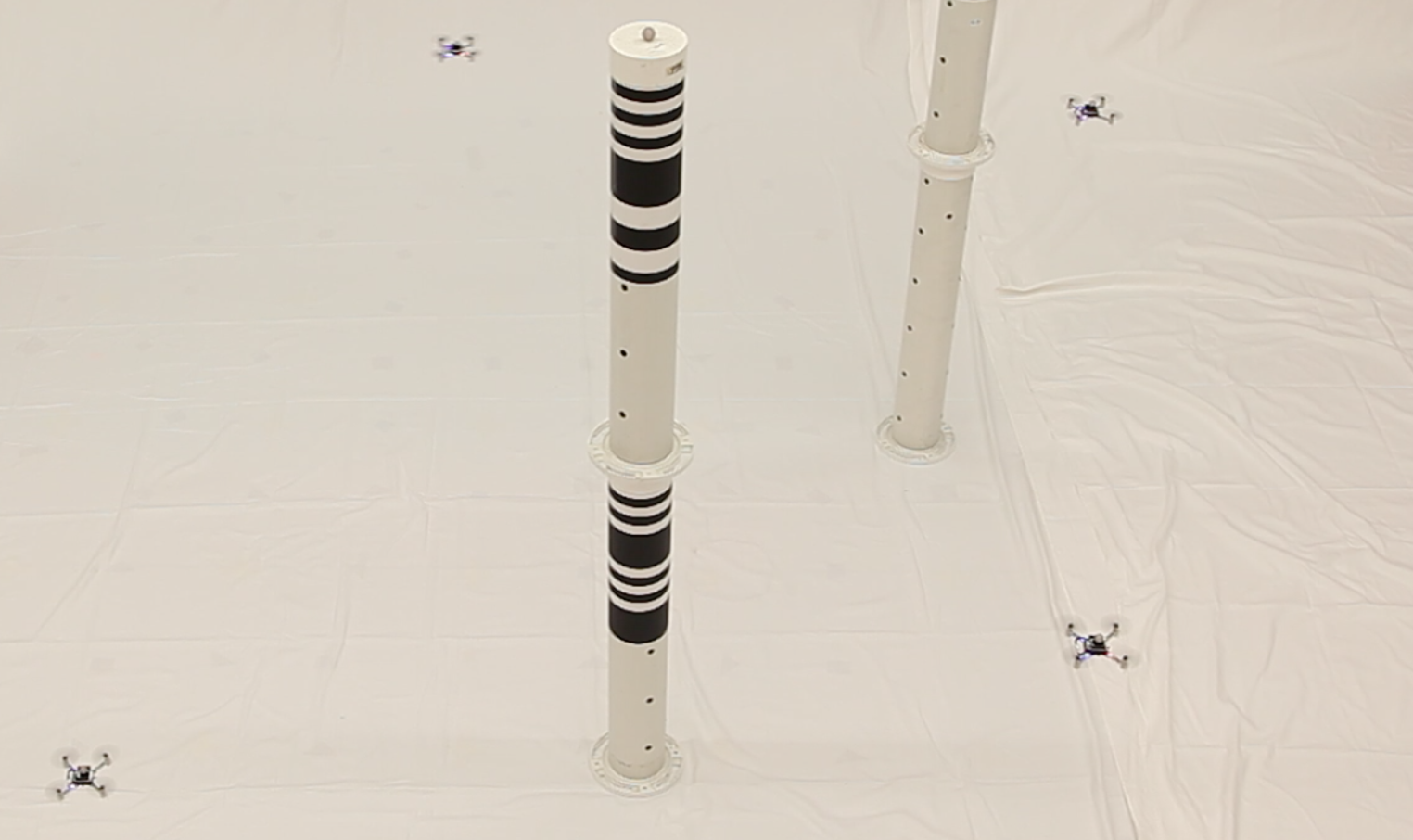}
\caption{Crazyflie quadrocopters navigate in a cluttered environment. Video results are available at http://tiny.cc/modular-3alg.}
\vspace{-2mm}
\label{fig:exp_setup}%
\end{figure}

We propose a modular hierarchical framework so that one can independently plug and play both low level controllers 
and high level planning algorithms in order to realize a balance between flexibility at the high level, 
safety at the low level, and computational feasibility.
To make a customizable approach feasible, we introduce three assumptions. First, the output space of the underlying dynamical system has translational symmetry, namely position invariance, a property satisfied by many robotic models \cite{FRAZ05}. Second, the output space is gridded uniformly into rectangular boxes. Finally, the control capabilities are discretized into a finite set of {\em motion primitives}, where the low level describes the implementation of the motion primitives while the high level selects the motion primitives. Together, these assumptions imply that motion primitives can be designed over a single box, so that they can then be reapplied to any other box.

Now we give an overview of the features and techniques we employ, and we highlight other frameworks that share those features. 
%\textcolor{blue}{This paper aims to provide a} 
We  provide general formulation of motion primitives for nonlinear systems so that they can be applied to multi-robot systems.
%\textcolor{blue} but potentially also to other robotic systems.
We focus on reach-avoid specifications in a priori known environments, in which the system must reach a desired configuration in a safe manner \cite{AYAN10,CHOS03,SATT14,LAV09}. Reach-avoid offers a fairly rich behavior set so that, for instance, a fragment of linear temporal logic (LTL) can 
be encoded as a sequence of reach-avoid problems \cite{WOL13}, as we also show in our applications.

As we have mentioned, we abstract the output space into rectangular regions \cite{KUM14} rather than more general polytopic regions \cite{CHOS03,PAP09,BELTA08,LAV09} in order to exploit symmetry.
% in order to be able to implement the same motion primitives over any box. 
Motion primitives have been employed in various ways \cite{KUM14, SATT14} and we encode feasible sequences of motion primitives by a {\em maneuver automaton} \cite{FRAZ05}.
In contrast to the motion primitive methods above, our implementation of the low-level control design of motion primitives is based on {\em reach control theory}\cite{RB06,BG14}, which provides a highly flexible and intuitive set of design tools that have two notable advantages over tracking: first, it is not necessary to find feasible open-loop trajectories to track; second, safety constraints on the system states during the execution and concatenation of motion primitives can be guaranteed by design.
Finally, planning at the high level is based on standard {\em shortest path algorithms} \cite{LAV06, BRO05} applied to the graph arising from the synchronous product of the discrete part of the maneuver automaton and the graph arising from the output space partition. The high-level plan generates a {\em control policy}, which selects the motion primitives over the gridded output space.
The modularity of our approach enables one to employ other closed-loop methods such as potential methods \cite{CHOS03} or vector-field shaping \cite{LAV09} for low-level control design, and standard or customized graph search algorithms to generate a high-level plan.

There are three main contributions of this work. First, we provide the complete theoretical details on the requirements for the low-level control design and high-level plan, and show that these two levels operate consistently to solve the reach-avoid problem. 
Second, we formulate the parallel composition of maneuver automata in order to obtain correct-by-design motion primitives for a system composed of individual subsystems, such as in the case of multiple vehicles.
%Second, we formulate the parallel composition of maneuver automata, which is a method of composing motion primitives for individual subsystems. 
%Since the design of motion primitives can be challenging for high order systems and since many systems have a decoupled structure, such as in the case of multiple vehicles, parallel composition provides a way to describe the combined motion capabilities of the system. 
Finally, the modularity and effectiveness of our framework is experimentally validated on a group of quadrocopters in several illustrative scenarios. In particular, we feature a novel and versatile design of motion primitives based on double integrators and we show how the customizability of the high level plan generation can be used to easily trade-off solution quality with computational efficiency. This paper is an extension of our previous work \cite{VUK17}, which now supplies all the theoretical details along with proofs on correctness, the parallel composition construction, additional approaches to generate control policies, and more elaborate experimental results.

The paper is organized as follows. In the next section we highlight our contributions relative to the literature. 
In Section~\ref{sec:prob} we present a formal problem statement. The modular framework is introduced in 
Section~\ref{sec:method}. We define the output transition system, the maneuver automaton, the product automaton, 
and the high level plan, each of which contribute to realizing a solution of the motion planning problem. 
In Section \ref{sec:main}, we prove that our overall methodology solves the motion planning problem. 
In Section~\ref{sec:paracomp} we give the procedure for composing motion primitives. In Section~\ref{sec:MAexample} 
we present specific motion primitives for a double integrator system. 
In Section~\ref{sec:application} we consider several methods to generate high level plans, which are experimentally demonstrated on quadrocopters. We conclude the paper in 
Section~\ref{sec:conclusion}. 

{\bf \em Notation.} \
Let $\ZZ$ denote the integers and $\RR$ denote the real numbers. Let $| \cdot |$ denote the cardinality of a set. If $A$ is a set, we denote its power set as $2^A$. If $A$ and $B$ are sets, let $A \setminus B$ denote the usual set difference. If there are $n$ sets $A_i$, let $\prod_{i = 1}^n A_i$ denote the usual cartesian product. 
Given a function $f : A \rightarrow B$, the image of $A_1 \subset A$ under $f$ and the preimage of $B_1 \subset B$ under $f$ are defined the usual way, and are denoted as $f(A_1) \subset B$ and $f^{-1}(B_1) \subset A$, respectively.
%Let $|| \cdot ||$ denote the Euclidean norm on $\RR^n$. 
%For a set $A \subset \RR^n$, we denote its interior as $\inter(A)$. 
Let $\textrm{co}\{v_1,\ldots,v_m\}$ denote the convex hull of the vectors $v_1,\ldots,v_m \in \RR^n$. Given two vectors $v,w \in \RR^n$, we denote the component-wise multiplication (or Hadamard product) as $v \circ w$. Let $\mathcal{X}(\mathbb{R}^{n})$ denote the set of globally Lipschitz vector fields on $\RR^n$.

\section{Related Literature}

The literature on motion planning is vast and encompasses many research communities. As such, we have categorized some common approaches and discussed how they relate to our method.
%we have categorized and discussed only the most related work to our approach, and we highlight the papers that have been inspirational for our work.

\subsection{Graph Search and Trajectory Planning}

Motion planning has often been addressed as a discrete planning problem, for which many standard graph search algorithms exist \cite{LAV06}. Recent work on the multi-agent reach-avoid problem has developed novel algorithms in the context of applications such as manufacturing and warehouse automation, aiming to balance computational efficiency with solution quality. For example, a centralized approach is given in \cite{YU18}, discretizing the workspace into a lattice and using integer linear programming to minimize the total time for robots to traverse in high densities. In \cite{HALP17}, a sampling-based roadmap is constructed in the joint robot space using individual robot roadmaps, which is shown to be asymptotically optimal. Prioritized planning enables to safely coordinate many vehicles and is considered in a centralized and decentralized fashion in \cite{CAP15}. Subdimensional expansion computes mainly decentrally, but coordinates in the joint search space when agents are neighboring \cite{CHOS15}. While such approaches typically provide various theoretical guarantees on the proposed algorithms, dynamical models and application on real robotic systems is often not considered.

The modularity of our framework is complementary, as it potentially enables existing multi-agent literature on gridded workspaces to be used directly or adapted for the generation of a high-level plan when used in conjunction with our proposed formulation of motion primitives.
However, the consideration of continuous time dynamics may complicate the application of discrete planning methods in two ways. First, we must contend with constraints on successive motion primitives so that the continuous time behavior is acceptable - for example, avoiding abrupt changes in velocity. Second, we must contend with non-deterministic transitions to neighboring boxes, because motion primitives may allow more than one next box to be reached \cite{BEL08b} - for example, modeling the joint asynchronous motion capabilities of a multi-robot system. 
%In the case of non-trivial constraints or non-determinism, graph search algorithms would need to take these factors into consideration. 
%To address these factors, we have provided a specific planning algorithm based on non-deterministic Dijkstra \cite{BRO05}.

%A second inspirational paper is \cite{WOL13}, since it deals with general non-deterministic transition systems and LTL specifications. They identify the difficulty to solve general LTL control problems, so they focus on a fragment of LTL that reduces to solving a sequence of reach-avoid problems. They provide high level planning strategies that potentially can have wide practical applicability. As such, we have built our framework in order that it can be extended to address their fragment of LTL; see Remarks \ref{rem:costtogo} (iii) and \ref{rem:reach-avoidseq}. As with much of the literature on discrete planning on graphs, they assume a transition system is already known. Our approach of designing motion primitives and applying them to a gridded output space provides an explicit construction for these transition systems.

%A standard formulation for multi-robot motion planning is the formation change problem. 
Trajectory tracking methods have also been applied to the formation change problem on real vehicles with complex dynamics.
A sequential convex programming approach is given in \cite{DAND12}, which computes discretized, non-colliding positional trajectories for a modest number of quadrocopters. More recently, an impressive number of quadrocopters were coordinated in \cite{AYAN17}, by first computing a sequence of grid-based waypoints and then refining it into smoother piecewise polynomials. However, since these open-loop trajectories are computed offline, deviations from the computed trajectories could result in crashes. On the other hand, our approach is more robust as it is completely untimed, carefully monitoring the progress of vehicles over the grid in a reactive way based on the measured box transitions.

\subsection{Formal Methods}
%Several papers have been inspirational for our work. 
A growing body of research has explored the use of formal methods in motion planning.
This paper has been particularly inspired by \cite{BELTA08}, which provides a general framework for solving control problems for affine systems with LTL specifications. Their approach involves constructing a transition system over a polyhedral partition of the state space that arises from linear inequality constraints that constitute the atomic propositions of the LTL specification. Transitions between states of the transition system can occur if there exists an affine or piecewise affine feedback steering all continuous time trajectories from one polyhedral region to a contiguous one. Similar works to \cite{BELTA08} include \cite{CHOS03, HVS06, LAV09}, which consider the simpler reach-avoid problem. Single and multi-robot applications followed shortly after in \cite{PAP09} and \cite{AYAN10} respectively.

The appeal of these approaches is derived from their generality and faithful account of the low level system capabilities. On the downside, these methods generally do not scale well to larger state space dimensions, and so they would have limited applicability to large multi-robot systems. 
Our approach specializes these ideas by exploiting symmetry in the system dynamics and grid partition in order to strike a better balance between generality and computational efficiency. In particular, our feedback controllers are given as motion primitives, which can be designed independently of the obstacle and goal locations.

More recent works have also built on these formal method approaches, investigating more complex and realistic multi-robot problems. For example, service requests by multiple car robots in a city-like environment with communication constraints was considered in \cite{BELT12}. A cooperative task for ground vehicles was addressed in a distributed manner, enabling knowledge sharing amongst neighbors and reconfiguration of the motion plan in real time \cite{DIM15}. Tasks such as picking up objects are considered in conjunction with motion requirements in \cite{GAZ14}. Since these works consider only fairly simple vehicle dynamics, they place greater emphasis on the synthesis of discrete plans satisfying the task specification. On the other hand, this paper considers the simpler reach-avoid problem in order to develop a formulation of motion primitives for nonlinear systems with symmetries.

% These works each consider specific dynamics and design compatible feedback controllers, which might be simplified using our formulation of motion primitives.}
%It is mainly the complexity of the temporal specification that generally prohibits application of these methods to large robotic groups
%These works each consider specific dynamics and design compatible feedback controllers, but still do not exploit potential computational advantages that could be obtained by using motion primitives as we propose.
%Persistent surveillance for quadrocopters was formulated as a bounded LTL specification in \cite{BELT15}

\subsection{Motion Primitives}

The usage of motion primitives has become popular recently in robotics, as they serve to simplify the motion planning problem by using predefined executable motion segments. Many variations exist, which have designed motion primitives using timed reference trajectories to control a formation of quadrocopters \cite{KUM14}, paths on a state space lattice for a mobile robots \cite{MIH09, KOE14}, and funnels in the state space centered about a reference trajectory for a car \cite{SATT14} and a small airplane \cite{TED17}.

We have been inspired by ideas in \cite{FRAZ05}, from which we borrowed the term ``maneuver automaton". They define a motion primitive either as an equivalence class of trajectories or a timed maneuver between two classes, whereas we define a motion primitive as a feedback controller over a polyhedral region in the state space. In our formulation, concatenations between motion primitives are possible only across contiguous boxes in the output space, which provides a strict safety guarantee during concatenation. Moreover, this enables our approach to simplify obstacle avoidance to a discrete planning problem over safe boxes as in \cite{KOE14}, bypassing the need to concatenate motion primitive trajectories using numerical optimization techniques as in \cite{FRAZ05}.

Our presentation of the maneuver automaton gives explicit constraints on the design of motion primitives so that they can used reliably for high level planning.
%While providing a general constructive methodology for maneuver automata of arbitrary nonlinear systems with symmetries is beyond the scope of this work, existing tools from reachability analysis can be employed. To alleviate the difficulty of constructing maneuver automata, we}
We have also introduced the notion of parallel composition of maneuver automata to build motion primitives for multi-robot systems. While our construction resembles existing methods of parallel composition \cite{WON15, PAP01}, we additionally prove that our construction preserves desired properties that enable consistency between the low and high levels. 
To the authors' best knowledge, this paper is the first rigorous treatment of feedback-based motion primitives defined on a uniformly gridded output space. 

\section{Problem Statement} 
\label{sec:prob}

Consider the general nonlinear control system
\begin{equation}
\label{eq:thesystem}
\dot{x} = f(x,u), \;\;\;\;\; y = h(x),
\end{equation}
where $x \in \RR^n$ is the state, $u \in \RR^\mu$ is the input, and $y \in \RR^p$ is the output. Let 
$\phi(\cdot,x_0)$ and $y(\cdot,x_0)$ denote the state and output trajectories of \eqref{eq:thesystem} 
starting at initial condition $x_0 \in \RR^n$ and under some open-loop or feedback control. 

Let $\cP \subset \RR^p$ be a feasible set in the output space and let $\cG \subset \cP$ be a goal set. In multi-vehicle motion planning contexts, $\cP$ represents the feasible joint output configurations of the system, which can arise from specifications involving obstacle avoidance, collision avoidance, communication constraints, and others. We consider the following problem.

\begin{problem}[Reach-Avoid]
\label{prob:reachavoid} 
We are given the system \eqref{eq:thesystem}, a non-empty feasible set $\cP \subset \RR^p$ and a non-empty goal set $\cG \subset \cP$. 
Find a feedback control $u(x)$ and a set of initial conditions $\cX_0 \subset \RR^n$ such that for each $x_0 \in \cX_0$ we have
\begin{itemize}
\item[(i)]
{\bf Avoid}: 
$y(t,x_0) \not \in \RR^p \setminus \cP$ for all $t \geq 0$,
\item[(ii)]
{\bf Reach}:
there exists $T \geq 0$ such that for all $t \geq T$, $y(t,x_0) \in \cG$.
\end{itemize}
\end{problem}

We make an assumption regarding the outputs of the system \eqref{eq:thesystem} in order to exploit symmetry; 
see \cite{FRAZ05} for an exposition on nonlinear control systems with symmetries.

\begin{assum}
\label{assum:symmetry}
First, we assume that there is an injective map $o : \{1, \ldots, p\} \rightarrow \{1, \ldots, n\}$ associating each output to a distinct state, so that $h(x) = (x_{o(1)}, \ldots, x_{o(p)})$. 
We define the (injective) insertion map $h^{-1}_o : \RR^p \rightarrow \RR^n$ as $h^{-1}_o(y) = x$, which satisfies $h(x) = y$ and $x_i = 0$ for all $i \in \{1, \ldots, n\} \setminus \{o(1), \ldots, o(p)\})$.
Second, we assume that the system has a {\em translational invariance} with respect to its outputs. That is, for all $x\in \RR^n$, $u\in \RR^{\mu}$ and $y \in \RR^p$, we have $f(x,u) = f(x + h^{-1}_o(y), u)$.
\tqed
\end{assum}

The assumption that the outputs of the system are a subset of the states is used in our framework to be able to design feedback 
controllers in the full state space that achieve desirable behavior in the output space. The second statement says that the vector 
field is invariant to the value of the output. In the literature this condition is called a {\em symmetry of the system} or 
{\em translational invariance}. This assumption is satisfied for many robotic systems, for example, when the outputs are positions. 
Also, we will see in Section~\ref{sec:MAexample} that it significantly simplifies our control design. 

\section{Modular Framework} 
\label{sec:method}

\begin{figure}%[t]
\centering%
\includegraphics[width=1\linewidth,trim=0cm 0cm 0cm 0cm, clip=true]{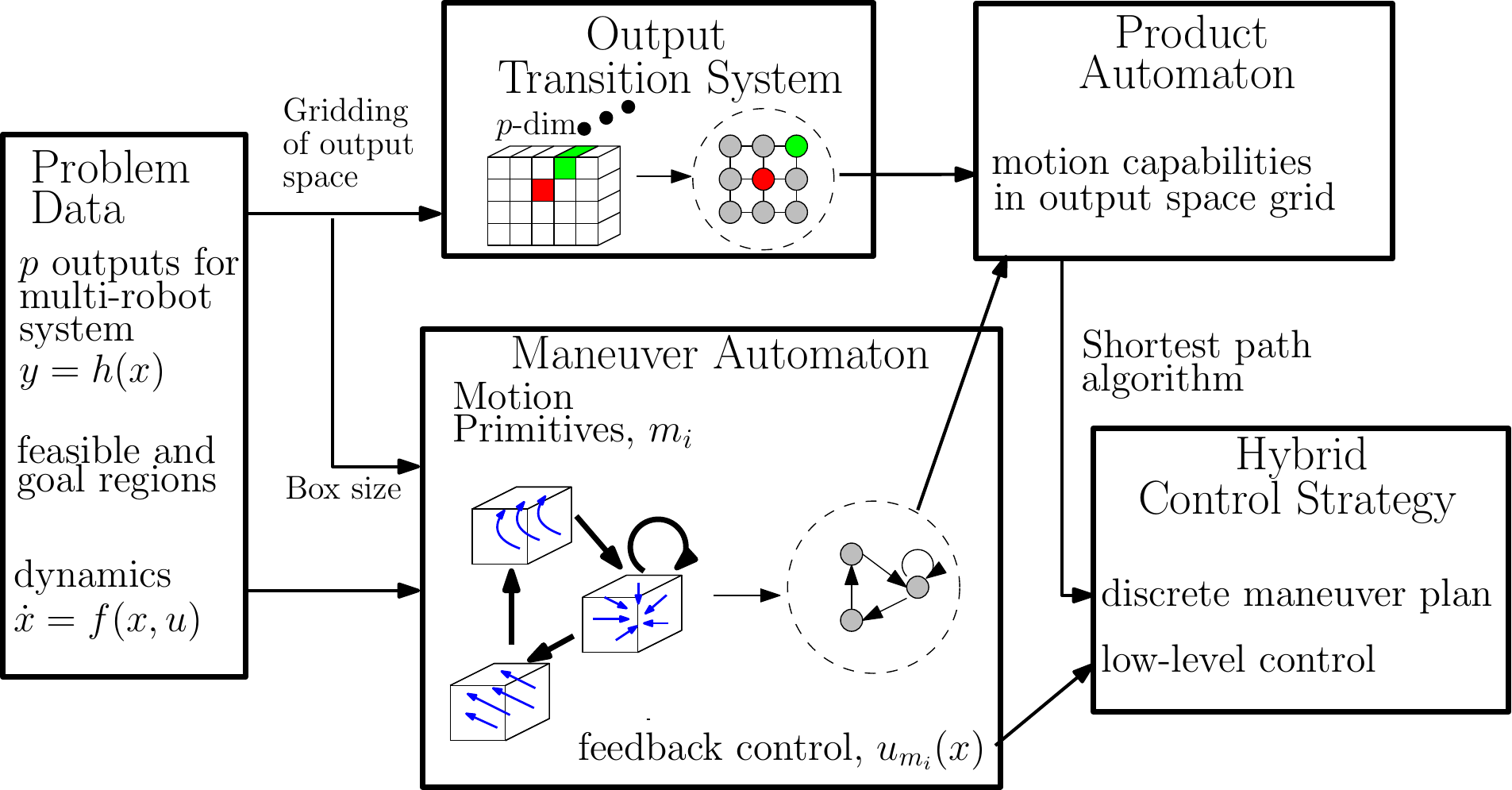}
\caption{Our modular framework consists of five modules.} 
\vspace{-2mm}
\label{fig:methodology}%
\end{figure}

In this section we present our methodology to solve the motion planning problem in the form of an architecture
that breaks down Problem~\ref{prob:reachavoid}. This architecture consists of five main modules, as depicted in
Figure~\ref{fig:methodology}.
\begin{itemize}

\item 
The {\em Problem Data} include the system \eqref{eq:thesystem} with $p$ outputs satisfying 
Assumption~\ref{assum:symmetry} and a reach-avoid task to be executed in the output space.

\item 
The {\em Output Transition System (OTS) is a directed graph whose nodes (called 
locations) represent $p$-dimensional boxes on a gridded output space and whose edges describe which boxes in the output space are contiguous.}

\item 
The {\em Maneuver Automaton} (MA) is a hybrid system whose modes correspond to so-called 
motion primitives. Each motion primitive is associated with a closed-loop vector field by applying a feedback law 
to \eqref{eq:thesystem}. The edges of the MA represent feasible successive motion primitives. Each motion primitive 
generates some desired behavior of the output trajectories of the closed-loop system over a box in the output space. 
Because of the uniform gridding of the output space into boxes and because of the symmetry in the outputs described 
in Assumption \ref{assum:symmetry}, motion primitives can be designed over only one canonical box $Y^*$. 

\item 
The {\em Product Automaton} (PA) is a graph which is the synchronous product of the OTS and the discrete
part of the MA. It represents the combined constraints on feasible motions in the output space and feasible successive 
motion primitives. 

\item The {\em Hybrid Control Strategy} is a combination of low level controllers obtained from the design of 
motion primitives, and a high level plan on the product automaton.
%obtained by applying a shortest path algorithm adapted to non-deterministic graphs on the PA \cite{BRO05}.

\end{itemize}
Next we describe in greater detail the OTS, MA, and PA. 

\subsection{Output Transition System} \label{sec:OTS}

\begin{figure}%[t]
\centering%
\includegraphics[width=1.0\linewidth,trim=0cm 0cm 0cm 0cm, clip=true]{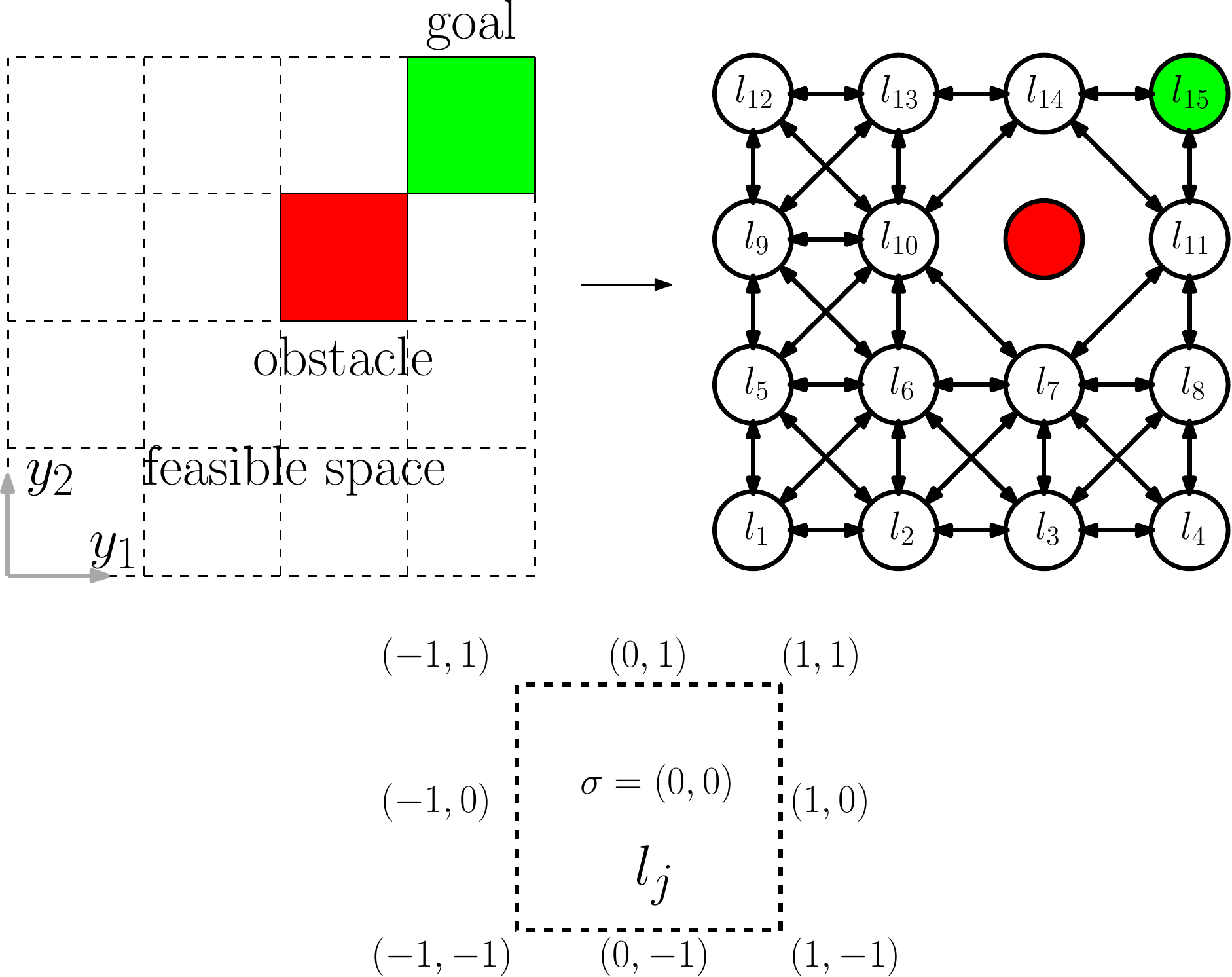}
\caption{A two output ($p = 2$) example of a reach-avoid task. Shown on the left is the feasible space $\cP$ consisting of 
15 non-obstacle boxes (not red) and the goal region $\cG$ (green). The output transition system (OTS), which 
abstracts the box regions and their neighbour connectivity, is shown on the right. Shown below, the 
possible offsets towards a neighbouring box are labelled using $\Sigma =  \{-1,0,1\}^2$.}%
\vspace{-2mm}
\label{fig:OTSsquares}%
\end{figure}

The OTS provides an abstract description of the {\em workspace} or {\em output space} associated with the system 
\eqref{eq:thesystem}. It serves to capture the feasible motions of output trajectories of the system 
\eqref{eq:thesystem} in a gridded output space, as in Figure~\ref{fig:OTSsquares}. 
Specifically, we partition the output space into $p$-dimensional boxes with lengths $d = (d_1, \ldots, d_p)$, where $d_i > 0$ is the length of $i$-th edge. We use a finite number of boxes to under-approximate the feasible set $\cP$. Enumerating the boxes as $\{ Y_1, \ldots, Y_{n_L} \}$, the $j$-th box can be expressed in the form
$Y_j := \prod_{i = 1}^p \left[ \eta_{ji} d_i, (\eta_{ji} + 1) d_i \right]$,
where $\eta_{ji} \in \ZZ$, $i = 1,\ldots,p$ are constants. 
%Thus we have that
We require that
$\bigcup_{j = 1}^{n_L} Y_j \subset \cP$.
Among these boxes, we assume there is a non-empty set of indices $I_g \subset \{1, \ldots, n_L\}$, so that 
we may similarly under-approximate the goal region as 
$\bigcup_{j \in I_g} Y_j \subset \cG \subset \cP$.
We define a canonical $p$-dimensional box with edge lengths $d_i > 0$ given by
$Y^* = \prod_{i = 1}^p [0, d_i]$.
Each box $Y_j$, $j = 1,\ldots, n_L$ is a translation of $Y^*$ by
an amount $\eta_{ji} d_i$ along the $i$-th axis.

\begin{defn}
Given the lengths $d$ and a non-empty goal index set $I_g$, an {\em output transition system} (OTS) is a tuple $\cAOTS = ( \LOTS, \Sigma, E_{\OTS}, \LOTS^g )$
with the following components:
\begin{description}
\item[State Space]
$\LOTS := \{l_1,\ldots, l_{n_L} \} \subset \ZZ^p$ is a finite set of nodes called {\em locations}.
Each location $l_j \in \LOTS$ is associated with a safe box $Y_j\subset \cP $ in the output space and hence we write $l_j = (\eta_{j1}, \ldots, \eta_{jp})$.

\item[Labels]
$\Sigma := \{-1, 0, 1\}^p \subset \ZZ^p$ is a finite set of {\em labels}.
A label $\sigma \in \Sigma$ is used to identify the offset between neighbouring boxes.

\item[Edges]
$\EOTS \subset \LOTS \times \Sigma \times \LOTS$ is a set of {\em directed edges} where $(l_{j},\sigma,l_{j'}) \in \EOTS$ if 
$j \neq j'$, $Y_{j} \cap Y_{j'} \neq \emptyset$, and $\sigma = l_{j'} - l_j \in \Sigma$. Thus, for each $i = 1, \ldots, p$, 
the neighbouring box $l_{j'}$ is either one box to the left ($\sigma_i = -1$), the same box ($\sigma_i = 0$), or one box to the 
right ($\sigma_i = 1$). In this manner $\sigma$ records the offset between contiguous boxes.

\item[Final Condition]
$\LOTS^g = \{ l_j \in \LOTS ~|~ j \in I_g \}$ denotes the set of locations associated with goal boxes.

\tqed
\end{description}
\end{defn}

\begin{remark}
We observe that the OTS is deterministic. That is, for a given $l \in \LOTS$ and $\sigma \in \Sigma$, there is at most one $l' \in \LOTS$ such that $(l, \sigma, l') \in \EOTS$. This follows immediately from the fact that $\sigma = l' - l$ records the offset between the neighbouring boxes.
\end{remark}

Figure \ref{fig:OTSsquares} shows a sample OTS for a simple scenario. The OTS locations are associated with 15 feasible boxes, including a goal box for the reach-avoid task. The OTS edges are shown as bidirectional arrows; for example, interpreting $l_1 = (0,0)$ and $l_6 = (1,1)$ on the grid, then $e = (l_6, (-1, -1), l_1) \in \EOTS$.

\subsection{Maneuver Automaton} 
\label{sec:MA}

The \textit{maneuver automaton} (MA) is a hybrid system consisting of a finite automaton and continuous time dynamics 
in each discrete state. The discrete states of the finite automaton correspond to {\em motion primitives}, while 
transitions between discrete states correspond to the allowable transitions between motion primitives. The continuous 
time dynamics are given by closed-loop vector fields \eqref{eq:thesystem} with a control law designed based on 
reach control theory (any other feedback control design method can be used).

Before presenting the MA, we first explain how this module is used in the overall framework. To solve 
Problem~\ref{prob:reachavoid}, we assign motion primitives to the boxes $Y_j$ of the partitioned output space such that obstacle regions are avoided and the goal region is eventually reached. The discrete part of the MA encodes the constraints on successive motion primitives. Such constraints might arise from a non-chattering requirement, continuity requirement, or requirement on correct switching between regions of the state space. A dynamic programming algorithm for assignment of motion primitives on boxes is addressed in Section~\ref{sec:dpp}; the salient point about this algorithm at this stage is that it only uses the discrete part of the MA.

In contrast, the continuous time part of the MA is used both for simulation of the closed-loop dynamics to verify that the motion primitives are well designed, as well as for the implementation of the low level feedback in real-time. The motion primitives are defined only on the canonical box $Y^*$ to simplify the design. This simplification is possible because of the translational symmetry provided by Assumption~\ref{assum:symmetry} and the fact that each box $Y_j$ is a translation of $Y^*$. 
In simulation, a given motion primitive can cause output trajectories to reach certain faces of $Y^*$. If a face is reached, the output trajectory is interpreted as being reset to the opposite face and another motion primitive is selected to be implemented over $Y^*$ (of course, the real experimental output trajectories do not undergo resets but move continuously from box to box in the output space). 
The selection of the next motion primitive is constrained by a combination of the previous motion primitive and the face of $Y^*$ that is reached. 
The discrete transitions in the MA encode these constraints. 

\begin{defn}
Consider the system \eqref{eq:thesystem} satisfying Assumption \ref{assum:symmetry} and the box $Y^*$ with lengths $d$. 
The {\em maneuver automaton} (MA) is a tuple $\cHMA = ( \QMA, \Sigma, \EMA, \XMA, \IMA, \GMA, \RMA, \QMA^0)$, where

\begin{description}%[labelindent=1cm] %[style=multiline,leftmargin=3cm,font=\normalfont]

\item[State Space]
$\QMA = M \times \RR^n$ is the hybrid state space, where $M = \{ m_1, \ldots, m_{n_M} \}$ is a finite set
of nodes, each corresponding to a motion primitive.

\item[Labels]
$\Sigma$, the same labels used in the OTS.

\item[Edges]
$\EMA \subset M \times \Sigma \times M$ is a finite set of edges.

\item[Vector Fields]
$\XMA : M \rightarrow \mathcal{X}(\mathbb{R}^{n})$ is a function assigning a globally Lipschitz closed-loop vector 
field to each motion primitive $m \in M$. That is, for each $m \in M$, we have $\XMA(m) = f( \cdot, u_m(\cdot) )$ 
where $u_m(\cdot)$ is a feedback controller associated with $m \in M$.

\item[Invariants]
$\IMA : M \rightarrow 2^{\RR^n}$ assigns a bounded invariant set $\IMA(m)$ to each $m \in M$. We impose that 
$\IMA(m) \subset h^{-1}(Y^*)$. The set $\IMA(m)$ defines the region on which the vector field $\XMA(m)$ is defined. 
Note that there is no requirement that the invariant is a closed set.

\item[Enabling Conditions]
$\GMA : \EMA \rightarrow \{ g_e \}_{e \in \EMA}$ assigns to each edge $e = (m, \sigma, m') \in \EMA$, a non-empty enabling or 
guard condition $g_e \subset \RR^n$. We require that $g_e \subset \IMA(m)$. We make an additional requirement 
that $g_e$ lies on a certain face of $Y^*$ determined by the label $\sigma = (\sigma_1,\ldots,\sigma_p) \in \Sigma$. Defining the 
face associated with $\sigma$ as
\begin{eqnarray*}
\cF_{\sigma} = \left \lbrace y \in Y^* ~\middle |~
\begin{cases}
y_i = 0,        & \text{~if~} \sigma_i = -1 \\
y_i = d_i,     & \text{~if~} \sigma_i = 1 \\
y_i \in [0, d_i], & \text{~if~} \sigma_i = 0
\end{cases}
\right \rbrace \,,
\end{eqnarray*}
we require that also $g_e \subset h^{-1}(\cF_{\sigma})$. 

\item[Reset Conditions]
$\RMA: \EMA \rightarrow \{ r_e \}_{e \in \EMA}$ assigns to each edge $e = (m,\sigma,m')\in \EMA$
a reset map $r_e : \RR^n \rightarrow \RR^n$. We require that $r_e(x) = x - h^{-1}_o(d \circ \sigma)$, where $\circ$ is the Hadamard product.
This definition says that the $i$-th output component is reset to the right face of $Y^*$, $x_{o(i)} = d_i$, 
if $\sigma_i = -1$, reset to the left face $x_{o(i)} = 0$ if $\sigma_i = 1$, and unchanged otherwise.
Overall, resets of states are determined by the event $\sigma \in \Sigma$ and only affect the 
output coordinates in order to maintain output trajectories inside the canonical box $Y^*$. 

\item[Initial Conditions]
$\QMA^0 \subset \QMA$ is the set of initial conditions given by 
$\QMA^0 = \{ (m,x) \in \QMA ~|~ x \in \IMA(m) \}$. 

\tqed
\end{description}
\end{defn}

\begin{example} \label{ex:MAex}
Suppose the system is a double integrator and the first state is the translationally invariant output $y$. The box $Y^*$ is simply a segment. Let $M = \{ \sH, \sF, \sB \}$, where {\em Hold} ($\sH$) stabilizes $y$, {\em Forward} ($\sF$) increases $y$, and {\em Backward} ($\sB$) decreases $y$. Referring to Figure~\ref{fig:MATrans2}, if $\sF$ is the current motion primitive and $y$ reaches the right face of $Y^*$, then the event $1 \in \Sigma$ occurs and we may select $\sH$ or $\sF$ as the next motion primitive.
To correctly implement the discrete evolution of the MA in the continuous state space, an invariant and feedback control must be associated with each motion primitive, while an enabling and reset condition must be associated with each edge; see Figure \ref{fig:DoubleVF}.  Formal details are given in Section \ref{sec:MAexample}.
\end{example}

\begin{figure}%[t]
\centering%
\includegraphics[width=0.7\linewidth,trim=0cm 0cm 0cm 0cm, clip=true]{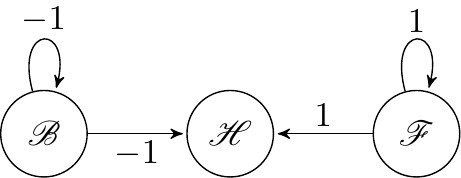}
\caption{The maneuver automaton edges $\EMA$ for the double integrator dynamics with $p = 1$.
There are three motion primitives: Hold ($\mathscr{H}$), Forward ($\mathscr{F}$), and Backward ($\mathscr{B}$). 
}%
\vspace{-2mm}
\label{fig:MATrans2}%
\end{figure}

\begin{figure}[t]
\centering%
\includegraphics[width=\linewidth,trim=3.5cm 3.5cm 2cm 2cm, clip=true]{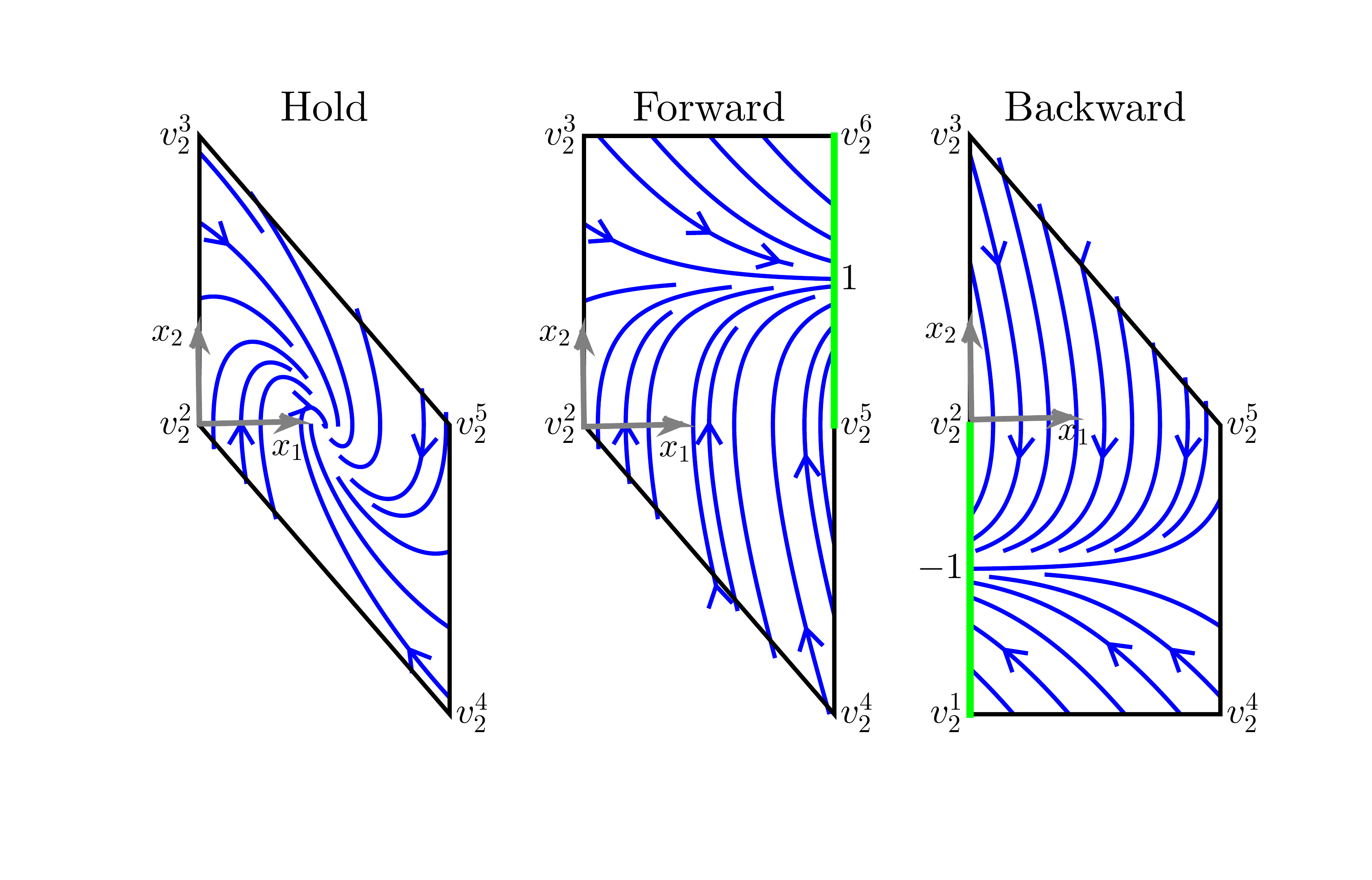} \\
\caption{The closed-loop vector fields in the $(x_1,x_2)$ position-velocity state space for the Hold, Forward, and Backward motion primitives.
}%
\vspace{-2mm}
\label{fig:DoubleVF}%
\end{figure}

We now formulate assumptions on the motion primitives so that correct continuous time behavior is 
ensured at the low level for consistency with the high level. For each $m \in M$, define the set 
of possible events as
\begin{equation}
\label{eq:SMA}
\SMA(m) := \{ \sigma\in \Sigma ~|~ (\exists m'\in M) (m,\sigma,m') \in \EMA \} \,.
\end{equation}

\begin{assum}
\label{assum:MP}
\begin{itemize}
\item[]
\item[(i)] 
For all $m \in M$, $\varepsilon:=(0,\ldots,0)\not\in \SMA(m)$.

\item[(ii)]
For all $e_1,e_2 \in \EMA$ such that $e_1 = (m_1, \sigma, m_2)$ and $e_2 = (m_1, \sigma, m_3)$, $g_{e_1} = g_{e_2}$.

\item[(iii)]
For all $e_1, e_2 \in \EMA$ such that $e_1 = (m_1, \sigma_1, m_2)$ and $e_2 = (m_1, \sigma_2, m_3)$, if $\sigma_1 \neq \sigma_2$, 
then $g_{e_1} \cap g_{e_2} = \emptyset$.

\item[(iv)] 
For all $e_1, e_2 \in \EMA$ such that $e_1 = (m_1, \sigma_1, m_2)$ and $e_2 = (m_2, \sigma_2, m_3)$, 
$r_{e_1}(g_{e_1}) \cap g_{e_2} = \emptyset$.

\item[(v)]
For all $e = (m_1,\sigma,m_2) \in \EMA$, $r_e(g_e) \subset \IMA(m_2)$. 

\item[(vi)] 
For all $m\in M$, if $\SMA(m) = \emptyset$ then for all $x_0 \in \IMA(m)$ and $t\geq 0$, 
$\phiMA(t,x_0)\in \IMA(m)$.

\item[(vii)]
For all $m \in M$, if $\SMA(m) \neq \emptyset$, then for all $x_0 \in \IMA(m)$ there exist
(a unique) $\sigma \in \SMA(m)$ and (a unique) $T \geq 0$ such that for all $e = (m,\sigma,m') \in \EMA$ 
and for all $t \in [0,T]$, $\phiMA(t,x_0) \in \IMA(m)$ and $\phiMA(T,x_0) \in g_{e}$.

\tqed
\end{itemize}
\end{assum}

Condition (i) disallows tautological chattering behavior that arises by erroneously interpreting continuous evolution of 
trajectories in the interior of $Y^*$ as ``discrete transitions'' of the MA (see Section \ref{sec:main} for definitions).
Condition (ii) imposes that guard sets are independent of the next motion primitive. Since guard sets arise as the set of 
exit points of closed-loop trajectories from $Y^*$ under a given motion primitive, it is reasonable that these exit points 
should depend only on the current motion primitive $m \in M$, and not on the choice of next motion primitive. 
Condition (iii) imposes that all guard sets corresponding to different labels are non-overlapping.
This ensures that when the continuous trajectory reaches a guard $g_e$, then it is unambiguous which edge of the MA is
taken next; namely $e \in \EMA$. Conditions (v), (vi), and (vii) are placed to guarantee that the MA is non-blocking. 
These conditions are based on known results in the literature \cite{LYG99}; see Lemma~\ref{lem:nb}.
In order for condition (vii) to make sense, there must exist a unique label $\sigma \in \Sigma$ and a unique time 
$T \ge 0$ for an MA trajectory to reach a guard set. First, we have uniqueness of solutions since the vector fields are globally Lipschitz. Second, 
the unique MA trajectory can only reach one guard set by condition (iii); this in turn means there is a unique $\sigma$. 
Obviously there exists a unique time to reach the guard set. Conditions (vi) and (vii) work together to state that 
either all trajectories do not leave, or all trajectories do eventually leave. 
Referring to Figure~\ref{fig:DoubleVF}, all closed-loop state trajectories within the invariant of $\sF$ reach the guard set shown in green on the right. For either choice of next feasible motion primitive, $\sH$ or $\sF$, trajectories enter the next invariant on the left due to the reset.
Finally, condition (iv) eliminates potential chattering Zeno behavior, see Remark \ref{rem:Zeno1}. 

\begin{remark}
\label{rem:MA}
We make several further observations about the MA.

%(i) Motion primitives are defined only on $Y^*$ so trajectories of the MA undergo resets when MA output trajectories reach a guard lying on a face of $Y^*$ so that they will continue to evolve on $Y^*$. In contrast, the trajectories of \eqref{eq:thesystem} and of the real physical system do not undergo resets. 

(i) The MA is non-deterministic in the sense that given $m \in M$ and $\sigma \in \Sigma$, 
there may be multiple $m' \in M$ such that $(m,\sigma,m') \in \EMA$. The discrete part of the MA
is non-deterministic in a second sense: for each $m \in M$, the cardinality of the set 
$\SMA(m)$ may be greater than one. The latter situation corresponds to the fact that for different 
initial conditions $x_1, x_2 \in \IMA(m)$ of the continuous part, the associated output trajectories can reach 
different guard sets. In essence, which guard is enabled is interpreted, at the high level, as an 
uncontrollable event \cite{WON15}. 
%Referring to Figure~\ref{fig:ctrStrat}, the motion primitive $(\sF, \sF)$ may cause the events $(1,0)$, $(0,1)$, or $(1,1) \in \Sigma$.
Remark~\ref{rem:PAnondet} further illustrates these two types of 
non-determinism in the case of the PA. 

(ii) The set of events $\Sigma$ in the MA correspond to the same events $\Sigma$ in the OTS. This correspondence is used 
in the product automaton PA, described in the next section, to synchronize transitions in the MA with transitions in the OTS. 
The interpretation is that when a continuous trajectory of the MA (over the box $Y^*$) undergoes a reset with the label 
$\sigma \in \Sigma$, the associated continuous trajectory of \eqref{eq:thesystem} in the box $Y_j$ enters a neighboring 
box $Y_{j'}$ with the offset $\sigma = l_{j'} - l_j$. Obviously, this interpretation assumes
that the vector of box lengths $d$ is the same in both OTS and MA. 
%$d > 0$ is the same in both OTS and MA. 
\tqed

\end{remark}

\subsection{Product Automaton} \label{sec:product}

In this section we introduce the {\em product automaton} (PA). It is constructed as the synchronous product of the
OTS and the discrete part of the MA, namely $(M, \Sigma, \EMA)$. The purpose of the PA is to merge the constraints on 
successive motion primitives with the constraints on transitions in the OTS in order to enforce feasible and safe motions. 
As such, it captures the overall 
feasible motions of the system -- any high level plan must adhere to these feasible motions. 

\begin{defn}
We are given an OTS $\cAOTS$ and an MA $\cHMA$ satisfying Assumption~\ref{assum:MP}. We define the {\em product automaton} 
(PA) to be the tuple $\cAPA = ( \QPA, \Sigma, \EPA, \QPA^f )$, where

\begin{description}

\item[State Space]
$\QPA \subset  \LOTS \times M$ is a finite set of PA states. A PA state $q = (l,m) \in \QPA$ satisfies the 
following: if there exists $\sigma\in\Sigma$ and $m'\in M$ such that $(m,\sigma,m')\in\EMA$, then there exists 
$l'\in \LOTS$ such that $(l,\sigma,l')\in \EOTS$. That is, $(l,m) \in \QPA$ if all faces that can be reached by 
motion primitive $m \in M$ lead to a neighboring box of the box associated with location $l \in \LOTS$ of the OTS.

\item[Labels]
$\Sigma$ is the same set of labels used by the OTS and the MA.

\item[Edges]
$\EPA \subset \QPA \times \Sigma \times \QPA$ is a set of directed edges defined according to the following
rule. Let $q = (l,m) \in \QPA$, $q' = (l',m') \in \QPA$, and $\sigma \in \Sigma$.
If $(l,\sigma,l') \in \EOTS$ and $(m,\sigma,m')\in \EMA$, then $(q,\sigma,q') \in \EPA$.

\item[Final Condition]
$\QPA^f \subset \LOTS^g \times M$ is the set of final PA states.

\tqed
\end{description}
\end{defn}

\begin{remark} 
\label{rem:PAnondet}
Formally an automaton is said to be non-deterministic if there exists a state with more than one outgoing edge 
with the same label. The PA is non-deterministic. First, consider a PA state $q = (l,m) \in \QPA$. Because the MA 
allows for more than one feasible next motion primitive $m'$ such that $(m,\sigma,m') \in \EMA$,  
the PA will also have multiple next PA states $q' = (l', m')$ such that $(q,\sigma,q') \in \EPA$. 
Second, there can be multiple possible labels $\sigma \in \Sigma$ such that $e = (q,\sigma,q') \in \EPA$ for some
$q' \in \QPA$. Thus, the PA inherits the two types of non-determinism of the MA that we discussed in Remark~\ref{rem:MA}. 
For example, consider the PA fragment in Figure~\ref{fig:PA_nondet}. For the first type of non-determinism, 
observe that there are two PA edges $(q_1, \sigma_1, q_2) \in \EPA$ and $(q_1, \sigma_1, q_3) \in \EPA$ with 
the same label. For the second type, observe that there are two possible events $\sigma_1, \sigma_2 \in \Sigma$ 
from $q_1$, each with its own set of PA edges. Note also some additional structure: since the OTS is 
deterministic, the box state is $l_2$ in both $q_2$ and $q_3$, corresponding to the OTS edge 
$(l_1, \sigma_1, l_2) \in \EOTS$. 
\tqed
\end{remark}

\begin{figure}
\centering%
\includegraphics[width=0.7\linewidth,trim=0cm 0cm 0cm 0cm, clip=true]{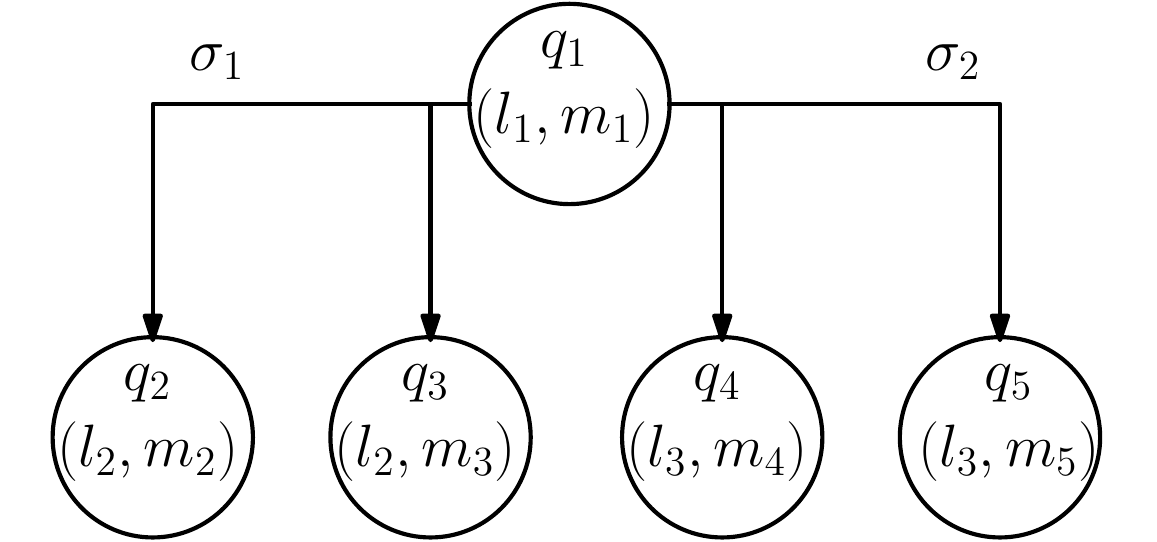}
\caption{A fragment of a generic PA, showing a state and its neighbours.
}%
\vspace{-2mm}
\label{fig:PA_nondet}%
\end{figure}

\subsection{High-Level Plan} \label{sec:dpp}

In this section we formulate the notion of a control policy on the PA, which gives a rule for selecting subsequent PA states by choosing the next motion primitive. Informally, the objective of the high level plan is to produce a control policy and find a set of initial PA states such that a goal PA state is eventually reached.
%In this section we formulate a high level plan on the PA.  Informally, the objective of the high level plan is to find a set of initial PA states and to develop a rule for selecting subsequent PA states (effectively by choosing the next motion primitive) such that a goal PA state is eventually reached. 
To this end, in this section we also develop a Dynamic Programming Principle (DPP) suitable for use on the PA. 
Because of the two types of non-determinism of the PA, existing 
algorithms cannot be applied directly \cite{BRO05,WOL13}. By adapting the algorithm in \cite{BRO05}, we 
obtain two formulations of the DPP, one of which is more computationally 
efficient as it exploits certain structure in the PA; further details are provided in 
Remark~\ref{rem:DPP}. 

First some notation will be useful. Recall from \eqref{eq:SMA}, given $m\in M$, $\SMA(m)$ is the set of 
all labels $\sigma \in \Sigma$ on outgoing edges $e \in \EMA$ starting at $m$. 
Similarly, $\SPA(q)$ is the set of all labels $\sigma \in \Sigma$ on outgoing edges 
$e \in \EPA$ starting at $q$. That is,
\[
\SPA(q) := \{ \sigma \in \Sigma ~|~ (\exists q' \in \QPA) (q,\sigma,q') \in \EPA \} \,.
\]

Now we formalize the semantics of the PA.
A {\em state} of the PA is a pair $q = (l,m) \in \QPA$ where $l \in \LOTS$ is a location in
the OTS and $m \in M$ is a motion primitive. 
A {\em run} $\pi$ of $\cAPA$ is a finite or infinite sequence of states $\pi = q^0 q^1 q^2 \dots$, with
$q^i = (l^i,m^i) \in \QPA$ and for each $i$, there exists $\sigma^i \in \SPA(q^i)$
such that $(q^i,\sigma^i,q^{i+1}) \in \EPA$. We define the length of a run to be $n_{\pi}$; 
for infinite runs $n_\pi$ is defined to be $\infty$. 
We consider a subset of runs $\PiPA(q)$ starting at $q \in \QPA$ that satisfy one further property. 
If the run $\pi$ is infinite, then $\pi \in \PiPA(q)$ if $q^0 = q$. 
Instead if the run $\pi$ is finite, then $\pi \in \PiPA(q)$ if $q^0 = q$ and additionally, 
$\SPA( q^{n_{\pi}} ) = \emptyset$. 
It is the latter requirement -- that the last PA state of a finite run may not have outgoing edges in the PA -- which is of interest. The interpretation is that we regard the event labels between PA states as uncontrollable, so if any event is possible, then it must occur eventually. Thus without loss of generality, each run $\pi = q^0 q^1 \dots$ is the prefix of a run $\pi' \in \PiPA(q^0)$. Further elaboration is given in Remark~\ref{rem:costtogo}~(ii).

Given $q \in \QPA$ and $\sigma \in \SPA(q)$, the set of {\em admissible motion primitives} is 
\[
\cM(q,\sigma) := \{ m' \in M ~|~ (\exists q' = (l',m')) ~ (q,\sigma,q') \in \EPA \} \,.
\]
More generally, given $q \in \QPA$ and $\SPA(q) = \{ \sigma_1, \ldots, \sigma_k \}$, the set of
admissible motion primitives at $q$ is
\[
\cM(q):= \{ (m_1, \ldots, m_k) ~|~ m_i\in \cM(q,\sigma_i),\, i=1,\dots,k \},
\]
Next we introduce the notion of a control policy. Given $q \in \QPA$ and 
$\SPA(q) = \{ \sigma_1, \ldots, \sigma_k \}$, an {\em admissible control assignment} at $q$ 
is a vector
\[
c(q) = (c(q,\sigma_1), \ldots, c(q,\sigma_k)) \,,
\]
where $c(q,\sigma_i) \in \cM(q,\sigma_i)$, or equivalently $c(q) \in \cM(q)$. 
Notice that $c(q)$ is a vector whose dimension varies as a function of the cardinality of the set $\SPA(q)$. 
An \textit{admissible control policy} $c: \QPA \times \Sigma \rightarrow M$ is a map that assigns an
admissible control assignment at each $q \in \QPA$. Thus, for each $q \in \QPA$ and $\sigma \in \SPA$,
$c(q,\sigma) \in \cM(q,\sigma)$. The set of all admissible control policies is denoted by $\cC$.

Consider an admissible control policy $c \in \cC$ and a state $q \in \QPA$. We denote the set of runs in 
$\PiPA(q)$ induced by $c$ as $\Pi_c(q)$. Formally, $\pi = q^0 q^1 \cdots \in \Pi_c(q)$ if $q^0 = q$, and for all $i \geq 0$ 
and $i <  n_{\pi}$, $m^{i+1} = c(q^i, \sigma^i)$. Similarly, we denote the subset of runs in $\Pi_c(q)$
that eventually reach a state in $\QPA^f$ as $\Pi_c^f(q)$. Formally, $\pi \in \Pi_c^f(q)$ if there exists 
an integer $i \in \{0,\ldots,n_{\pi}\}$ such that $q^i \in \QPA^f$. For $\pi \in \Pi_c^f(q)$, we define
\[
r_{\pi} := \min \{ i \in \{0, \ldots, n_{\pi}\} ~|~ q^i \in \QPA^f \} \,.
\]

Next we define an instantaneous cost $\DPA : \EPA \rightarrow \mathbb{R}$, which satisfies $\DPA(e) > 0$ for all $e \in \EPA$, and a terminal cost $\HPA : \QPA \rightarrow \RR$. 
Now consider the run $\pi = q^0 q^1 \dots q^{n_{\pi}} \in \Pi_c^f(q)$ with $q^0 = q$, 
$c(q^i,\sigma^i) = m^{i+1}$, and $e^i := (q^i,\sigma^i,q^{i+1}) \in \EPA$.
We define a \textit{cost-to-go} $J : \QPA \times \cC \rightarrow \RR$ by
\begin{footnotesize}
\begin{eqnarray*}
J(q,c) = \begin{cases}
\max \limits_{\pi \in \Pi_c(q)}
\left \lbrace
\sum \limits_{j=0}^{r_{\pi}-1} \DPA(e^j) + \HPA(q^{r_{\pi}})
\right\rbrace, & \Pi_c(q) = \Pi^f_c (q) \\
\infty, & \textrm{~otherwise} \,.
\end{cases}
\end{eqnarray*}
\end{footnotesize}

\begin{remark}
\label{rem:costtogo}
There are several notable features of our formula for the cost-to-go.

(i) For a given $q \in \QPA$, there may be multiple runs $\pi \in \Pi_c(q)$ due to the (second, non-standard type of) non-determinism of the PA.
As such, we assume the worst case and take the maximum over $\Pi_c(q)$ in the cost-to-go.
% because of this non-determinacy in $\cAPA$
%: it is uncertain which, among the possibly multiple trajectories allowed by $c$, that will be taken so we assume the worst-case situation. 
Moreover, we require $\Pi_c(q) = \Pi^f_c (q)$ for a finite cost-to-go so that $r_{\pi}$ is well-defined and all runs starting at $q$ eventually reach $\QPA^f$.
%Moreover, we require $\Pi_c(q) = \Pi^f_c (q)$ for a finite cost-to-go, otherwise there may exist a run starting at $q$ and applying control policy $c$ that does not reach $\QPA^f$. Also, when $\Pi_c(q) = \Pi^f_c (q)$, $r_{\pi}$ is well-defined.

(ii) We have assumed that finite runs must terminate on PA states that have no outgoing edges. 
%The motivation for this choice becomes clear in light of the formulation of the cost-to-go. 
Suppose we included 
in $\Pi_c(q)$ finite prefixes of (finite or infinite) runs. These necessarily would be finite runs with final 
PA states that have outgoing edges. Then if we take a finite or infinite run that eventually reaches a goal 
PA state, certain finite prefixes of that run may not yet have reached a goal PA state, and we would get 
$\Pi_c(q) \neq \Pi_c^f(q)$ and an infinite cost-to-go. This anomaly arises from creating an artificial situation 
in which not all runs starting at an initial PA state reach a goal PA state because we included (unsuccessful) 
finite prefixes of successful runs. 

(iii) The cost-to-go function also accounts for infinite runs by using the variable $r_{\pi}$ to record the first time a goal PA state is reached and by taking the cost only over the associated prefix of the infinite run. 
Although our primary focus is on reach-avoid specifications, in which finite runs terminate on goal PA states with no outgoing edges, infinite runs allow us to extend our framework to a fragment of LTL where, for example, a goal PA state is reached always eventually; see Remark \ref{rem:reach-avoidseq} for further details. 
%Allowing infinite runs seems to contradict a reach-avoid specification in which we only want finite runs that terminate on goal PA states with no outgoing edges. 
%The reason we also allow infinite runs in the formulation of the high level plan is that it allows us to extend our framework to a fragment of LTL where, for example, a goal PA state is reached always eventually; see Remark \ref{rem:reach-avoidseq} for further details. 

\end{remark}

\begin{figure}
\centering%
\includegraphics[width=0.7\linewidth,trim=0cm 0cm 0cm 0cm, clip=true]{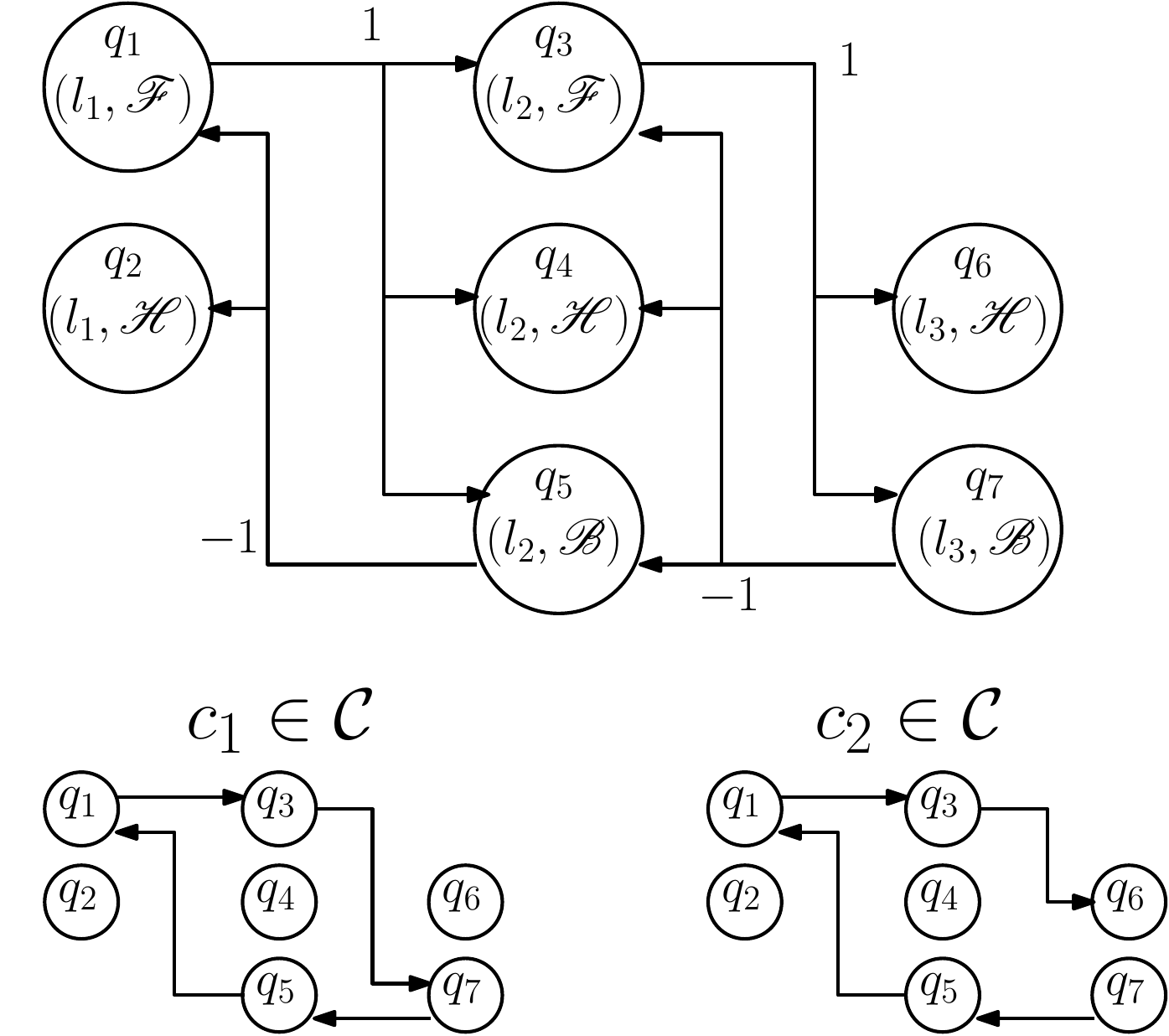}
\caption{At the top, a PA is depicted for a single output system having three motion primitives $M = \{\sH, \sF, \sB\}$ 
over three boxes $\LOTS = \{l_j \; | \; j = 1,2,3\}$. The numbers $1,-1 \in \Sigma$ on the edges (shown as arrows) 
are the corresponding labels. The bottom pictures show the reduced set of transitions induced by control polices 
$c_1, c_2 \in \cC$. 
}%
\vspace{-2mm}
\label{fig:ctrpolicyex1D}%
\end{figure}

\begin{example}
Consider the PA shown at the top of Figure~\ref{fig:ctrpolicyex1D} corresponding to a single output system with the three 
motion primitives $M = \{\sH, \sF, \sB\}$ from Example \ref{ex:MAex} over three boxes $\LOTS = \{l_j \; | \; j = 1,2,3\}$. 
%The motion primitive $\sF$ causes the output to increase  or ``move forward'', $\sB$ causes the output to ``move backward'', and $\sH$ acts as a hold. 
Suppose that $\DPA(e) = 1$ for all $e \in \EPA$ and that $\HPA = 0$ for all $q \in \QPA$. 

First consider the feasible control policy $c_1 \in \cC$ with the control assignments: $c_1(q_1,1) = \sF$, $c_1(q_3,1) = \sB$, 
$c_1(q_5,-1) = \sF$, and $c_1(q_7,-1) = \sB$. The bottom left of Figure~\ref{fig:ctrpolicyex1D} shows how the control policy 
trims away possible edges in the PA. Now suppose that $\QPA^f = \{ q_7\}$. Choosing the initial condition $q_1 \in \QPA$
and under the assumption that we do not include finite runs that terminate at PA states with outgoing edges, 
we can see that $\Pi_{c_1}(q_1)$ consists of only the single infinite run $\pi = q_1 q_3 q_7 q_5 q_1 \ldots$. 
Even though this run is infinite, $\pi \in \Pi_{c_1}^f(q_1)$, $r_{\pi} = 2$, and $J(q_1, c_1) = 2$. Similarly, 
we compute $J(q_5, c_1) = 3$, $J(q_3, c_1) = 1$, $J(q_7, c_1) = 0$, and $J(q_2, c_1) = J(q_4, c_1) = J(q_6, c_1) = \infty$. 
In contrast, the feasible control policy $c_2 \in \cC$ shown on the bottom right of Figure~\ref{fig:ctrpolicyex1D} only contains finite runs.
\tqed

%Next, consider the feasible control policy $c_2 \in \cC$ with the control assignments: $c_2(q_1,1) = \sF$, $c_2(q_3,1) = \sH$, $c_2(q_5,-1) = \sF$, and $c_2(q_7,-1) = \sB$. This control policy is shown on the bottom right of Figure~\ref{fig:ctrpolicyex1D}. Suppose that $\QPA^f = \{q_6 \}$. Then we find $J(q_7, c_2) = 4$, $J(q_5, c_2) = 3$,$J(q_1, c_2) = 2$, $J(q_3, c_2) = 1$, $J(q_6, c_2) = 0$, and $J(q_2, c_2) = J(q_4, c_2) = \infty$. The difference between $c_1$ and $c_2$ is that runs are infinite under $c_1$ but finite under $c_2$. 

%Finally, suppose we had omitted the extra condition that finite runs must terminate on PA states with no outgoing edges. Considering $c_1 \in \cC$ at $q_1 \in \QPA$, then $\Pi_{c_1}(q_1)$ would contain an infinite number of finite runs $\{q_1, q_1 q_3, q_1 q_3 q_7, \ldots \}$ as well as the infinte run already noted. In particular, the two runs $q_1, q_1 q_3 \not \in \Pi_{c_1}^f(q_1)$, so by definition of the cost-to-go, we would obtain the undesired result $J(q_1, c_1) = \infty$. A similar problem would arise with the control policy $c_2$. 
\end{example}

Next we define the \textit{value function} $V : \QPA \rightarrow \RR$ to be
\[
V(q) := \min \limits_{c \in \cC} J(q,c) \,.
\]
The value function satisfies a dynamic programming principle (DPP) that takes into account the 
non-determinacy of $\cAPA$; see \cite{BRO05} where a slightly different result is proved. The proof 
is found in the appendix.

\begin{theorem}
\label{thm:value}
Consider $q \in \QPA \setminus \QPA^f$ and suppose $| \SPA(q) | > 0$. 
Then $V$ satisfies
\begin{eqnarray}
V(q) & = & \min \limits_{c(q) \in \cM(q)}
           \left \lbrace
           \max \limits_{\sigma \in \SPA(q)}
           \{ \DPA(e) + V(q') \}
           \right \rbrace
           \label{eq:valuea} \\
     & = & \max \limits_{\sigma \in \SPA(q)}
           \left \lbrace
           \min \limits_{\bar{m} \in \cM(q,\bar{\sigma})}
           \{ \DPA(\bar{e}) + V(\bar{q}) \}
           \right \rbrace   \,,
           \label{eq:valueb}
\end{eqnarray}
where $q' = (l',c(q,\sigma)) \in \QPA$, $e = (q,\sigma,q') \in \EPA$, $\bar{q} = (\bar{l},\bar{m}) \in \QPA$, 
and $\bar{e}=(q,\sigma,\bar{q}) \in \EPA$.
\end{theorem}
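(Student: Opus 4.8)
The plan is to read \eqref{eq:valuea} as the Bellman equation of a min--max (robust) shortest-path problem in which the controller selects the next motion primitive through $c(q)\in\cM(q)$ while an adversary selects the worst event $\sigma\in\SPA(q)$, and then to obtain \eqref{eq:valueb} from \eqref{eq:valuea} by a combinatorial min--max interchange that is valid because $\cM(q)$ is a product set. The first step is a fixed-policy recursion: for every $c\in\cC$ and every $q\notin\QPA^f$,
\[
J(q,c)=\max_{\sigma\in\SPA(q)}\{\DPA(e)+J(q',c)\},
\]
where $q'=(l',c(q,\sigma))$ and $e=(q,\sigma,q')$. This is proved by splitting each run $\pi\in\Pi_c(q)$ into its first edge and its continuation from $q^1=q'$, noting that the worst case over runs factors as the worst first event composed with the worst continuation, and that $\Pi_c(q)=\Pi_c^f(q)$ holds exactly when it holds from every one-step successor (so the two sides agree in the $\infty$ case). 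Combining this recursion with $J(q',c)\ge V(q')$ and the observation that the successors $q'$ depend on $c$ only through $c(q)\in\cM(q)$ gives $J(q,c)\ge \min_{c(q)\in\cM(q)}\max_{\sigma}\{\DPA(e)+V(q')\}$ for every $c$; minimizing over $c$ proves one inequality in \eqref{eq:valuea}.

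The reverse inequality is the main obstacle, since attaining the right-hand side requires assembling optimal continuations at the several successors $q'$ into one global policy, and the PA may contain cycles (the infinite runs of Figure~\ref{fig:ctrpolicyex1D}), so naive patching need not be consistent. I would resolve this using strict positivity of the stage cost. Put $\delta:=\min_{e\in\EPA}\DPA(e)>0$, which exists because $\EPA$ is finite, and define a policy $\hat c$ by taking $\hat c(q)$ to be a minimizer of the right-hand side of \eqref{eq:valuea} at each $q$. For the realized successors under $\hat c$ the already-proved inequality gives $V(q)\ge\DPA(e)+V(q')\ge\delta+V(q')$, so $V(q')\le V(q)-\delta$ whenever $V(q)<\infty$. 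Thus $V$ decreases by at least $\delta$ along every $\hat c$-transition until a state of $\QPA^f$ is reached; as the finitely many states realize only finitely many values, each run induced by $\hat c$ from a finite-value state reaches $\QPA^f$ in a bounded number of steps, whence $\Pi_{\hat c}(q)=\Pi_{\hat c}^f(q)$. A backward induction on the value $V(q)$ (well founded because successors have strictly smaller value, with base case $V=\HPA$ on $\QPA^f$), using the fixed-policy recursion and $J(q',\hat c)=V(q')$ at the successors, then yields $J(q,\hat c)$ equal to the right-hand side of \eqref{eq:valuea}; since $V(q)\le J(q,\hat c)$, this is the reverse inequality and \eqref{eq:valuea} follows. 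Infinite values need no separate treatment: if the right-hand side is $\infty$ then $V\ge{}$RHS already forces equality, while if all successor values are finite the same construction shows $V(q)$ is finite as well.

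Finally I would derive \eqref{eq:valueb} from \eqref{eq:valuea} by separability. With $\SPA(q)=\{\sigma_1,\ldots,\sigma_k\}$, set $f_i(m):=\DPA(e)+V(q')$ where $q'=(l_i',m)$ for $m\in\cM(q,\sigma_i)$, the location $l_i'$ being unique because the OTS is deterministic. Since $\cM(q)=\prod_{i=1}^{k}\cM(q,\sigma_i)$ is a Cartesian product, \eqref{eq:valuea} reads $\min_{(m_1,\ldots,m_k)}\max_i f_i(m_i)$, and because the objective couples the coordinates only through the outer maximum the elementary identity $\min_{(m_1,\ldots,m_k)}\max_i f_i(m_i)=\max_i\min_{m_i}f_i(m_i)$ holds: ``$\ge$'' is immediate from $f_i(m_i)\ge\min_m f_i(m)$, and ``$\le$'' follows by minimizing each coordinate independently. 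The right-hand side is exactly \eqref{eq:valueb}, completing the proof.
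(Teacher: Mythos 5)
Your proposal is correct, but it proves the hard direction of \eqref{eq:valuea} by a genuinely different argument than the paper. For the inequality $V(q) \le \min_{c(q)\in\cM(q)}\max_{\sigma}\{\DPA(e)+V(q')\}$, the paper argues by contradiction: it supposes some assignment $\hat{c}(q)\in\cM(q)$ beats $V(q)$, patches it together with a policy that is optimal at the worst-case successor $q^*$, and derives $J(q,c)<V(q)$. This is short but glosses over whether the patched policy behaves consistently when runs from $q^*$ revisit $q$ (the PA has cycles). Your argument handles exactly this issue explicitly: you build the greedy policy $\hat{c}$ globally, use strict positivity of the stage cost ($\delta:=\min_{e\in\EPA}\DPA(e)>0$, available since $\EPA$ is finite) to show $V$ drops by at least $\delta$ along every $\hat{c}$-transition, conclude termination from finiteness of the state space, and then run a well-founded backward induction on value levels to get $J(q,\hat{c})$ equal to the right-hand side. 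The price is that your proof leans on $\DPA>0$ and finiteness in a way the paper's does not make explicit; the gain is rigor about cycles, and as a by-product you get $J(q,\hat{c})=V(q)$, which is essentially the content of Corollary~\ref{cor:policy}. One small point you should make explicit: to conclude $\Pi_{\hat{c}}(q)=\Pi^f_{\hat{c}}(q)$ you must also rule out finite runs that die at a state with no outgoing edges before reaching $\QPA^f$; this follows from the observation (stated in the paper right after the theorem) that any $q\in\QPA\setminus\QPA^f$ with $|\SPA(q)|=0$ has $V(q)=\infty$, whereas all states along your runs have finite value. For \eqref{eq:valueb}, your route is also different and cleaner: you read $\cM(q)=\prod_i \cM(q,\sigma_i)$ as a product set and apply the elementary interchange $\min_{(m_1,\ldots,m_k)}\max_i f_i(m_i)=\max_i\min_{m_i}f_i(m_i)$, which is valid precisely because the objective is separable; the paper instead reproves this by contradiction with per-$\sigma$ minimizers $\bar{c}(q,\sigma)$ and a policy construction. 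Both are correct; yours isolates the combinatorial content, while the paper's version simultaneously produces the optimal policy used later.
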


Notice that for all $q \in \QPA \setminus \QPA^f$ such that $| \SPA(q) | = 0$, $V(q) = \infty$ (since there can be no 
paths to the goal). Also, for all $q \in \QPA^f$, $V(q) = \HPA(q)$. 

\begin{remark}
\label{rem:DPP} 
In \eqref{eq:valuea} of Theorem~\ref{thm:value}, it is shown that $V(q)$ can be computed using the local information of $\cM(q)$ 
instead of using all of $\cC$. In \eqref{eq:valueb}, the result is taken one step further by showing that $V(q)$ can be calculated 
using only $\cM(q,\sigma)$ for each $\sigma\in\SPA(q)$. The benefit of \eqref{eq:valueb} becomes clear when we compare the 
cardinality of the sets over which the minimizations occur. Given $q \in \QPA$, let $\SPA(q)= \{\sigma_1,\ldots,\sigma_k\}$. 
In \eqref{eq:valuea} the minimization is over $\cM(q)$, and therefore the cardinality of the minimization set is 
$\prod_{i=1}^{k} |\cM(q,\sigma_k)|$. In \eqref{eq:valueb} the minimization is over $\cM(q,\sigma)$ for each $\sigma\in\SPA(q)$, 
and therefore the cardinality of the set is $|\cM(q,\sigma)|$. While both \eqref{eq:valuea} and \eqref{eq:valueb} can be used to 
compute $V(q)$, in general \eqref{eq:valueb} will be more computationally efficient.
\end{remark}

\begin{corollary} \label{cor:policy}
Consider the control policy $c^*$ such that for all $q\in\QPA$, and $\sigma\in\SPA(q)$
\[
c^*(q,\sigma) \in
\argmin_{m'\in \cM(q,\sigma)} \{ \DPA(e) + V(q') \} \,,
\]
where $q'=(l',m')$, and $e = (q,\sigma,q')$. Then $c^*$ is an optimal control policy such that for all $q\in\QPA$, $V(q)=J(q,c^*)$.
\end{corollary}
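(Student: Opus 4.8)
The plan is to prove $J(q,c^*) = V(q)$ directly by a well-founded induction, using the characterization of $V$ from Theorem~\ref{thm:value} rather than establishing the two inequalities separately. First I would check that $c^*$ is a genuine element of $\cC$: for each $q$ and each $\sigma \in \SPA(q)$ the set $\cM(q,\sigma)$ is non-empty and finite, so the $\argmin$ is attained and $c^*(q,\sigma) \in \cM(q,\sigma)$, hence $c^* \in \cC$. Then I would dispose of the two degenerate regimes. If $q \in \QPA^f$, every run in $\Pi_{c^*}(q)$ has $r_{\pi} = 0$, so $J(q,c^*) = \HPA(q) = V(q)$. If $V(q) = \infty$, then since $V = \min_{c} J(q,c)$ we must have $J(q,c) = \infty$ for every $c \in \cC$, in particular $J(q,c^*) = \infty = V(q)$.

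The substantive case is $q \in \QPA \setminus \QPA^f$ with $V(q)$ finite. Writing $q'_{\sigma} := (l', c^*(q,\sigma))$ with $l'$ the unique OTS successor, and $e_{\sigma} := (q,\sigma,q'_{\sigma})$, the defining property of $c^*$ as the per-label minimizer, substituted into \eqref{eq:valueb}, yields the greedy recursion $V(q) = \max_{\sigma \in \SPA(q)} \{ \DPA(e_{\sigma}) + V(q'_{\sigma}) \}$. Since $\DPA(e_{\sigma}) > 0$, this forces $V(q'_{\sigma}) \le V(q) - \DPA(e_{\sigma}) < V(q)$ for every $\sigma$, so each greedy successor again has finite value. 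As $\QPA$ is finite, $\delta := \min_{e \in \EPA} \DPA(e) > 0$, so along any run in $\Pi_{c^*}(q)$ the value strictly decreases by at least $\delta$ at each non-goal step; such a run must therefore reach $\QPA^f$ after finitely many steps. This shows $\Pi_{c^*}(q) = \Pi_{c^*}^f(q)$ and that $k(q) := \max_{\pi \in \Pi_{c^*}(q)} r_{\pi}$ is finite.

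Next I would establish the policy-evaluation recursion for the fixed policy $c^*$. Every run in $\Pi_{c^*}(q)$ from a non-goal $q$ begins with a single uncontrollable event $\sigma \in \SPA(q)$, moves to $q'_{\sigma}$, and then continues as a run in $\Pi_{c^*}(q'_{\sigma})$; since $q \notin \QPA^f$ forces $r_{\pi} \ge 1$, the accumulated cost splits additively as $\DPA(e_{\sigma})$ plus the cost of the continuation. Taking the worst case over the first event and then over continuations gives $J(q,c^*) = \max_{\sigma \in \SPA(q)} \{ \DPA(e_{\sigma}) + J(q'_{\sigma},c^*) \}$. With both recursions in hand, I would finish by strong induction on $k(q)$: the base case $k(q) = 0$ is exactly $q \in \QPA^f$, already handled, and for the inductive step each $q'_{\sigma}$ satisfies $k(q'_{\sigma}) \le k(q) - 1$, so the hypothesis gives $J(q'_{\sigma},c^*) = V(q'_{\sigma})$ and comparing the two recursions yields $J(q,c^*) = V(q)$. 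Combined with the degenerate regimes, this proves $V(q) = J(q,c^*)$ for all $q$, and since $V(q) = \min_{c} J(q,c)$ this exhibits $c^*$ as optimal.

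The main obstacle, I expect, is the policy-evaluation recursion for $J(\cdot,c^*)$: one must argue carefully that the $\max$ over the branching and possibly infinite runs decomposes as an outer $\max$ over the first uncontrollable label followed by the worst-case continuation cost, and that the truncation at $r_{\pi}$ interacts correctly with this split. This is precisely where positivity of $\DPA$ and finiteness of $\QPA$ are essential, since together they guarantee that greedy runs reach the goal in boundedly many steps, so that all quantities being compared are finite and the induction on $k(q)$ is well-founded.
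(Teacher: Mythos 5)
Your proof is correct, but it takes a genuinely different route from the paper's. The paper's own proof of Corollary~\ref{cor:policy} is essentially two lines: for each fixed $q$, it forms a hybrid policy $\bar{c}^*$ that agrees with the greedy assignment $c^*(q)$ at $q$ and is an (assumed) optimal policy at the other states, and then cites equation \eqref{eq:JeqV}, already established inside the proof of Theorem~\ref{thm:value}, which states precisely that $J(q,\bar{c}^*) = V(q)$; since $q$ is arbitrary, the corollary is declared proved. What that buys is brevity---all the work was done in proving \eqref{eq:valueb}---but it leaves implicit exactly the point your induction makes explicit: knowing that, for each $q$ separately, \emph{some} policy agreeing with $c^*$ at $q$ achieves $V(q)$ does not by itself show that the single everywhere-greedy policy $c^*$ achieves $V(q)$ simultaneously at all $q$, because the hybrid policies $\bar{c}^*$ vary with $q$ and $J(q,c)$ depends on the policy along entire runs, not just at the first state (there is even a latent circularity, since runs from a successor $\bar{q}$ could in principle revisit $q$). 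Your well-founded induction on $k(q)$, powered by $\DPA(e) > 0$ and finiteness of $\QPA$, is the standard way to close that gap: greedy runs strictly decrease $V$ by at least $\delta$ per step, hence terminate in $\QPA^f$, and optimality of $c^*$ then propagates backward from the goal. The price you pay is having to establish the policy-evaluation recursion for $J(\cdot,c^*)$, which the paper simply asserts ``by the definition of $J$'' as \eqref{eq:J2a}; your treatment of the run decomposition and of the truncation at $r_{\pi}$ is the careful version of that assertion. In short, your argument is longer but self-contained, and it supplies the termination and uniformity reasoning that the paper's citation-based proof takes for granted.
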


\begin{comment}
\begin{proof}
Given $q\in\QPA$, let $c^*(q)$ be a vector of control assignments such that for all $\sigma\in\SPA(q)$
\[
c^*(q,\sigma) \in
\argmin_{m'\in \cM(q,\sigma)} \{ \DPA(e) + V(q') \} \,.
\]
Now consider $\bar{c}^* \in \cC$ such that $\bar{c}^*(q) = c^*(q)$ and $\bar{c}^*$ is any admissible optimal control policy for $q' \neq q$. In the proof of Theorem \ref{thm:value} \eqref{eq:JeqV} it is shown that
\[
J(q,\bar{c}^*) = V(q) \,.
\]
Therefore $\bar{c}^*$ is an optimal control policy at $q$ and $c^*(q)$ is an optimal vector of control assignments at $q$. Since $q$ is arbitrary, this proves the corollary.
\end{proof}
\end{comment}

Figure~\ref{fig:ctrStrat} shows a possible control policy for the scenario in Figure \ref{fig:OTSsquares}. Since there are two outputs, we use the motion primitives from Example \ref{ex:MAex} in each output; formal details are given in Section \ref{sec:paracomp}. The control policy was hand-computed. Notice that different 
routes may be taken from the same product state depending on the face reached, but ultimately the control policy ensures that all 
paths lead to the goal. 

\begin{figure}%[t]
\centering%
\includegraphics[width=0.7\linewidth,trim=0cm 0cm 0cm 0cm, clip=true]{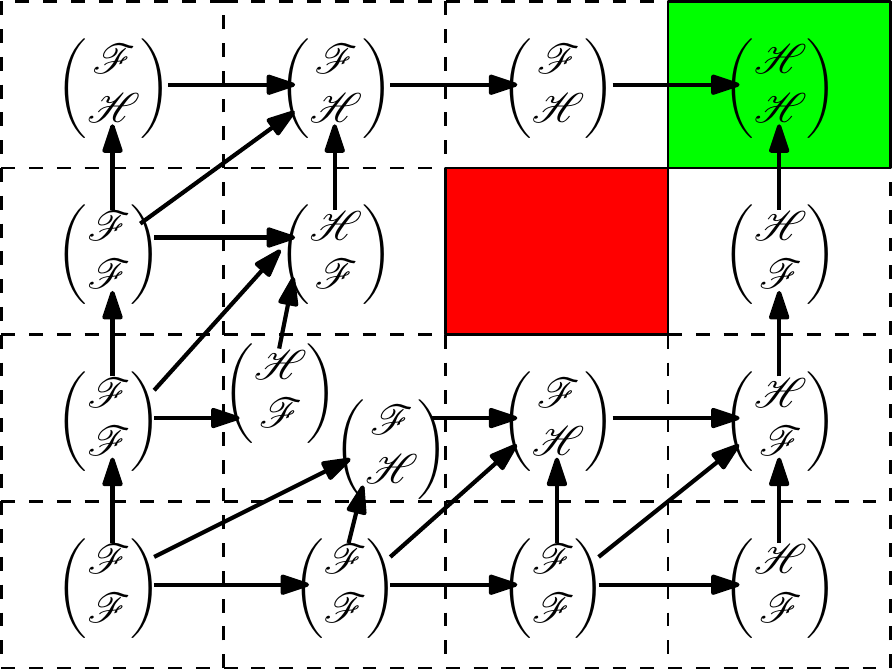}
\caption{This figure shows a discrete control strategy for the scenario shown in Figure \ref{fig:OTSsquares}. }%
\vspace{-2mm}
\label{fig:ctrStrat}%
\end{figure}

\section{Main Results}
\label{sec:main}

In this section we present our main results on a solution to Problem~\ref{prob:reachavoid}. Our final result combines 
the notion of a control policy at the high level with feedback controllers executing correct continuous time behavior 
at the low level. 
First, in accordance with the reach-avoid objective (see Remark~\ref{rem:costtogo}~(iii)), we assume the existence of motion primitives that can stabilize trajectories within a given box, that is, there exists $m \in M$ such that $\SMA(m) = \emptyset$. We restrict the final PA states to
be goal OTS states equipped with such motion primitives
\begin{equation} \label{eq:qpaf}
\QPA^f = \{ (l,m) \in \LOTS^g \times M ~|~ \SMA(m) = \emptyset \} \,.
\end{equation} 

Now suppose we have an admissible control policy $c \in \cC$ derived using Theorem~\ref{thm:value} or otherwise with
$\QPA^f$ as above. We present a complete solution to Problem~\ref{prob:reachavoid} including an initial condition 
set $\cX_0 \subset \RR^n$, a feedback control $u(x)$, and conditions on the motion primitives 
so that the reach-avoid specifications of Problem~\ref{prob:reachavoid} are met.

First we specify the initial condition set $\cX_0$. The set of feasible initial PA states is
\begin{equation} \label{eq:initPA}
\QPA^0 := \{ q \in \QPA \; | \; \Pi_c(q) = \Pi_c^f(q) \} \,.
\end{equation}
That is, a feasible initial PA state satisfies that every run (induced by the control policy) starting at the 
PA state eventually reaches a goal PA state. 

Now consider a state $x_0 \in \RR^n$. It can be used as an initial state of the system if there is some $ (l_j, m) \in \QPA^0$ for which the state is both in the box $Y_j$ and in the invariant of $m$. Recall that for all $y \in \RR^p$ and $j \in \{1,\ldots, n_L\}$, $y \in Y^*$ if and only if 
$y + d \circ l_j \in Y_j$. With this in mind, we define the set of initial states to be:
\begin{equation} 
\cX_0 = \bigcup_{(l_j,m) \in \QPA^0} 
            \bigl\{ x + h^{-1}_o(d \circ l_j)  ~|~  x \in \IMA(m)  \bigr\} \,. \label{eq:X0}
\end{equation}

Next we specify the feedback controllers to solve Problem~\ref{prob:reachavoid}. Consider any $q = (l_j,m) \in \QPA^0$.
Then for all $x \in \RR^n$ such that $x - h^{-1}_o(d \circ l_j) \in \IMA(m)$, we define the feedback 
\begin{equation}
\label{eq:u}
u(x,q) := u_m( x - h^{-1}_o( d \circ l_j ) ) \,.
\end{equation}
This defines a family of feedback controllers parametrized by $x$, the state of \eqref{eq:thesystem} and by the PA state
$q = (l_j,m)$. These feedbacks work in tandem with the control policy $c \in \cC$, which effectively determines the next 
feasible PA state $q' \in \QPA^0$. For example, suppose $q = (l_j,m) \in \QPA^0$ and suppose the label $\sigma \in \Sigma$
is measured. This event corresponds to $x \in g_e$ for $e = (m,\sigma,c(q,\sigma)) \in \EMA$. Let
$m' := c(q,\sigma)$ and let $l' \in \LOTS$ be the unique location of the OTS such that $(l,\sigma,l') \in \EOTS$. 
Then the next PA state is $q' = (l',m') \in \QPA^0$ and the controller that is applied in the next location $l' \in \LOTS$ 
is $u_{m'}(\cdot)$. 

The main result of the paper is the following. 
\begin{theorem}
\label{thm:main}
Consider the system \eqref{eq:thesystem} satisfying Assumption~\ref{assum:symmetry}, the non-empty feasible set 
$\cP \subset\RR^p$, and the goal set $\cG \subset \cP$. Let $d$ be the vector of box lengths such that the goal indices $I^g$ is non-empty. 
Consider an associated OTS $\cAOTS$, an MA $\cHMA$ satisfying Assumption~\ref{assum:MP}, a PA $\cAPA$ with $\QPA^f$ as in \eqref{eq:qpaf}, 
and an admissible control policy $c \in \cC$. Then the initial condition set $\cX_0$ given in \eqref{eq:X0} and the feedback 
controllers \eqref{eq:u} solve Problem~\ref{prob:reachavoid}.
\end{theorem}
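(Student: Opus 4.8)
The plan is to show that for each $x_0 \in \cX_0$ the closed-loop system generates a hybrid execution that projects onto a run of $\cAPA$ induced by the control policy $c$, and then to read off the Avoid and Reach guarantees from the structure of $\QPA^0$ together with Assumption~\ref{assum:MP}. First I would exploit the translational symmetry of Assumption~\ref{assum:symmetry} to reduce the real trajectory in any box to a canonical trajectory over $Y^*$. By \eqref{eq:X0} there is some $(l_{j_0},m_0) \in \QPA^0$ and $x_0' \in \IMA(m_0)$ with $x_0 = x_0' + h^{-1}_o(d \circ l_{j_0})$, and combining the definition of the feedback \eqref{eq:u} with the invariance $f(x,u) = f(x + h^{-1}_o(y),u)$ shows that the state trajectory of \eqref{eq:thesystem} under $u(\cdot,q^0)$ is exactly the shifted MA flow $\phiMA(t,x_0') + h^{-1}_o(d \circ l_{j_0})$. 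Thus the real execution in box $Y_{j_0}$ is a verbatim translate of the canonical MA flow over $Y^*$.

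Next I would build the execution inductively. While in mode $m_i$ over location $l_{j_i}$, Assumption~\ref{assum:MP}(vi)--(vii), via the non-blocking Lemma~\ref{lem:nb}, dictates the behavior: if $\SMA(m_i) = \emptyset$ the flow remains in $\IMA(m_i)$ for all time, and otherwise there is a unique label $\sigma_i \in \SMA(m_i)$ and a unique finite time $T_i \geq 0$ at which the canonical flow first reaches a guard $g_{e_i}$. Conditions (ii)--(iii) guarantee the hitting point selects $\sigma_i$ unambiguously, the control policy then sets $m_{i+1} = c(q^i,\sigma_i)$, and determinacy of the OTS fixes the unique successor location $l_{j_{i+1}}$ with $(l_{j_i},\sigma_i,l_{j_{i+1}}) \in \EOTS$, giving $q^{i+1} = (l_{j_{i+1}},m_{i+1})$. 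The MA reset $r_{e_i}$ corresponds in the output space to the continuous passage from $Y_{j_i}$ into the contiguous box $Y_{j_{i+1}}$; condition (v) lands the reset image in $\IMA(m_{i+1})$ so the induction continues, while condition (iv) forbids instantaneous re-triggering, so each subsequent $T_i > 0$. By construction the sequence $q^0 q^1 \cdots$ is a run in $\Pi_c(q^0)$. For the Avoid property I would then observe that in every mode the flow stays in $\IMA(m_i) \subset h^{-1}(Y^*)$, so the translated output stays in $Y_{j_i} \subset \bigcup_j Y_j \subset \cP$; since transitions occur only along PA edges, which require a compatible OTS edge into a feasible neighboring box, the output never leaves $\cP$ and (i) holds for all $t \geq 0$.

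For the Reach property, which I expect to be the crux, I would argue in two stages. First, $q^0 \in \QPA^0$ gives $\Pi_c(q^0) = \Pi_c^f(q^0)$ by \eqref{eq:initPA}, hence $J(q^0,c) < \infty$; because $\EPA$ is finite the instantaneous cost $\DPA$ attains a positive minimum, so the first-hitting index $r_\pi$ of $\QPA^f$ is uniformly bounded over all runs in $\Pi_c(q^0)$. Consequently the constructed execution undergoes only finitely many discrete transitions before reaching a goal PA state, each taking finite time $T_i$, so the goal state is reached after the finite elapsed time $T := \sum_{i < r_\pi} T_i$ and no Zeno phenomenon can occur. Second, at any goal state $(l,m) \in \QPA^f$ we have $\SMA(m) = \emptyset$ by \eqref{eq:qpaf}, so Assumption~\ref{assum:MP}(vi) keeps the canonical flow in $\IMA(m) \subset h^{-1}(Y^*)$ for all subsequent time; translating back, the output remains in the goal box $Y_j \subset \cG$ for all $t \geq T$, establishing (ii).

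The main obstacle is the careful bookkeeping in the inductive construction: verifying that the hybrid execution is genuinely well-defined, that the continuous label at each switch is uniquely determined, and that the execution is non-blocking and Zeno-free. This is precisely where the full strength of Assumption~\ref{assum:MP} and Lemma~\ref{lem:nb} is required, and it is the bridge that lets the purely discrete reachability guarantee encoded in $\QPA^0$ be transferred to the continuous-time trajectories of \eqref{eq:thesystem}.
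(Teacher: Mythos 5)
Your proposal follows essentially the same route as the paper's proof: you reduce the physical trajectory to the canonical MA flow via translational symmetry (the paper's Lemma~\ref{lem:traj}), you inductively construct the hybrid execution and its induced PA run using Assumption~\ref{assum:MP} and the control policy (this is precisely the content of the paper's Lemma~\ref{lem:executionruns}, which you re-derive inline), you get Avoid from $\IMA(m)\subset h^{-1}(Y^*)$ box-by-box, and you get Reach from \eqref{eq:qpaf} together with Assumption~\ref{assum:MP}(vi). In substance the argument is correct, including the bookkeeping across box transitions that the paper makes explicit in its induction \eqref{eq:phi2phima}.

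The one place you diverge is the termination step for Reach, and your route there is needlessly indirect and slightly shaky. You infer $J(q^0,c)<\infty$ from $\Pi_c(q^0)=\Pi_c^f(q^0)$ and then extract a uniform bound on $r_\pi$ from the positive minimum of $\DPA$ over the finite edge set. But the implication $\Pi_c(q^0)=\Pi_c^f(q^0)\Rightarrow J(q^0,c)<\infty$ is not immediate: the maximum in the definition of $J$ ranges over a possibly infinite family of runs, and it is finite only by a K\"onig-type argument (finite branching over labels at each state, plus the fact that every run terminates at a goal state, which has no outgoing edges, forces the run tree to be finite). None of this machinery is needed. Since the run you construct lies in $\Pi_c(q^0)=\Pi_c^f(q^0)$, the definition of $\Pi_c^f$ already provides a finite index at which it enters $\QPA^f$, and this is exactly how the paper (inside Lemma~\ref{lem:executionruns}) concludes that the induced run is finite; Zeno behavior is then excluded because the number of transitions is finite. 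With that simplification your proof coincides with the paper's.
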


In the remainder of this section we prove Theorem~\ref{thm:main}. 
We now give a roadmap for these results.
The verification of correctness at the low level is broken down into two steps that we now describe. 
First, we show that the MA is non-blocking in Lemma~\ref{lem:nb}. The key requirements are summarized in 
Assumption~\ref{assum:MP}. The non-blocking condition ensures that MA trajectories continually evolve in time 
and stay within the invariant regions. We also put conditions to avoid chattering in which two discrete
transitions can occur in immediate succession. While physical systems never 
undergo infinite switching in finite time, if our model predictions diverge from reality, then we have no grounds 
to claim that Problem~\ref{prob:reachavoid} is indeed solved. Second, in Lemma~\ref{lem:executionruns} we show that to each 
closed-loop trajectory of \eqref{eq:thesystem} under the feedback controllers \eqref{eq:u} and a control
policy $c \in \cC$, we can associate a unique execution of the MA (defined below) and run of the PA. 

We begin by describing the semantics of the MA. These definitions are standard; see \cite{LYG99}. 
A {\em state} of the MA is a pair $(m,x)$, where $m \in M$ and $x \in \RR^n$. Trajectories of the MA are called {\em executions} 
and are defined over hybrid time domains that identify the time intervals when the trajectory of a hybrid system is in a 
fixed motion primitive $m \in M$. Precisely, a {\em hybrid time domain} of the MA is a finite or 
infinite sequence of intervals $\tau = \{ \cI_0, \ldots, \cI_{n_\tau}\}$, such that
\begin{itemize}
\item[(i)]
$\mathcal{I}_i = [\tau_i , \tau'_i]$, for all $0 \leq i < n_{\tau}$,
\item[(ii)]
if $n_{\tau}<\infty$, then either $\mathcal{I}_{n_{\tau}} =
[\tau_{n_{\tau}}, \tau'_{n_{\tau}}]$ or $I_{n_{\tau}} =[\tau_{n_{\tau}}, \tau'_{n_{\tau}})$,
\item[(iii)]
$\tau_i \leq \tau'_i = \tau_{i+1}$, for all $0 \leq i < n_{\tau}$.
\end{itemize}

\begin{defn}
An {\em execution} of the MA is a collection $\chi = (\tau, m(\cdot), \phiMA(\cdot,x_0))$ such that
\begin{itemize}
\item[(i)]
the initial condition of the execution satisfies: $(m(0),x_0)\in \QMA^0$. 
\item[(ii)]
the continuous evolution of the execution satisfies: 
for all $i \in \{ 0, \dots, n_{\tau} \}$ with $\tau_i < \tau'_i$,
then for all $t \in [\tau_i , \tau'_i]$, $m(\cdot)$ is constant and
$\frac{d}{dt}\phiMA (t,x_0) = f( \phiMA (t,x_0), u_{m(t)} (\phiMA (t,x_0)))$, while for all $t \in [\tau_i , \tau'_i)$, $\phiMA (t,x_0)\in \IMA(m(t))$. 

\item[(iii)]
a discrete transition of the execution satisfies: 
for all $i \in \{ 0, \dots, n_{\tau}-1\}$, there exists 
$\sigma_i \in \SMA(m(\tau_i'))$ such that $(m(\tau'_i),\sigma_i,m(\tau_{i+1})) =: e_i \in \EMA$,
$\phiMA ( \tau'_i, x_0 ) \in g_{e_i}$, and $\phiMA ( \tau_{i+1}, x_0 ) = r_{e_i}( \phiMA ( \tau'_i, x_0) )$.

\end{itemize}
\end{defn}
Given an execution $\chi = (\tau,m(\cdot),\phiMA(\cdot,x_0))$, we associate to it the {\em output trajectory 
of the MA} given by $\yMA(\cdot,x_0) := h(\phiMA(\cdot,x_0))$ (the subscript $\textsc{MA}$ is included to 
avoid confusion with output trajectories $y(\cdot,x_0)$ of the physical system \eqref{eq:thesystem} which 
do not undergo resets). The {\em execution time} of an execution $\chi$ is defined as 
$\mathscr{T}(\chi) := \sum ^{n_{\tau}}_{i=0}(\tau'_i - \tau_i) = \lim_{i\rightarrow n_\tau} \tau_i' $. 
An execution is called {\em finite} if $\tau$ is a finite sequence ending with a compact time interval. 
An execution is called {\em infinite} if either $\tau$ is an infinite sequence or if 
$\mathscr{T}(\chi) = \infty$. Finally, an execution is called {\em Zeno} if it is infinite but 
$\mathscr{T}(\chi) < \infty$.

\begin{remark}
\label{rem:Zeno1}
There are two types of Zeno behavior. In one type that we call chattering, transitions are instantaneous. 
The second more subtle type is when the times between discrete transitions of the MA converge to zero, but the 
transitions are not instantaneous. Assumptions \ref{assum:MP} (i) and (iv) ensure that we cannot have chattering. 
True Zeno behavior with convergent transition times is more difficult to identify in the setting when the MA is
formed as a parallel composition. Fortunately, for our reach-avoid objective, the induced MA executions cannot be 
Zeno since there are a finite number of transitions by construction, see Lemma \ref{lem:executionruns}. 
%More detailed analysis on Zeno behavior is left as future work.
\end{remark}

\begin{defn}
The MA is {\em non-blocking} if for all $(m(0),x_0)\in \QMA^0$, the set of all infinite executions of the 
MA with initial condition $(m(0),x_0)$ is non-empty.
\end{defn}

\begin{lemma}
\label{lem:nb}
Under Assumption~\ref{assum:MP}, the MA is non-blocking.
\end{lemma}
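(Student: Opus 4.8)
The plan is to construct, from an arbitrary initial condition $(m(0),x_0)\in\QMA^0$, an infinite execution by induction on the interval index; since each closed-loop vector field $\XMA(m)$ is globally Lipschitz, the flows used below are unique and defined for all forward time, so the construction is unambiguous. I would initialize with $\tau_0=0$, $m_0:=m(0)$, and $x_0\in\IMA(m_0)$, the last holding by the definition of $\QMA^0$. The inductive hypothesis at step $i$ is that intervals $\cI_0,\dots,\cI_{i-1}$ have been built and that the state $(m_i,x_i)$ reached at time $\tau_i$ satisfies $x_i\in\IMA(m_i)$.

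The inductive step splits into two cases. If $\SMA(m_i)=\emptyset$, then by Assumption~\ref{assum:MP}~(vi) the flow $\phiMA(\cdot,x_i)$ stays in $\IMA(m_i)$ for all $t\geq 0$, so I close the construction by taking the final interval $\cI_i=[\tau_i,\infty)$ with $m(\cdot)\equiv m_i$; then $\mathscr{T}(\chi)=\infty$ and $\chi$ is infinite. If instead $\SMA(m_i)\neq\emptyset$, then Assumption~\ref{assum:MP}~(vii) supplies a unique label $\sigma_i\in\SMA(m_i)$ and a unique dwell time $T_i\geq 0$ such that $\phiMA(t,x_i)\in\IMA(m_i)$ for all $t\in[0,T_i]$ and $\phiMA(T_i,x_i)\in g_e$ for every $e=(m_i,\sigma_i,m')\in\EMA$.

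The crucial step is the reset that re-establishes the inductive hypothesis. Since $\sigma_i\in\SMA(m_i)$, definition~\eqref{eq:SMA} guarantees at least one edge $e_i=(m_i,\sigma_i,m_{i+1})\in\EMA$, which I select arbitrarily. I set $\tau'_i=\tau_{i+1}:=\tau_i+T_i$ and define the next continuous state by the reset $x_{i+1}:=r_{e_i}(\phiMA(T_i,x_i))$. Because $\phiMA(T_i,x_i)\in g_{e_i}$, Assumption~\ref{assum:MP}~(v) yields $x_{i+1}\in r_{e_i}(g_{e_i})\subset\IMA(m_{i+1})$, so the hypothesis holds at index $i+1$. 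It then remains to check the three clauses in the definition of an execution on $\cI_i$: the continuous-evolution clause follows from (vii), which keeps the trajectory in $\IMA(m_i)$ on the (half-open) interval; the discrete-transition clause follows since the flow terminates on $g_{e_i}$ and is reset by $r_{e_i}$; and the hybrid-time-domain conditions $\tau_i\leq\tau'_i=\tau_{i+1}$ hold because $T_i\geq 0$.

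Finally, if the second case recurs forever the induction produces an infinite sequence of intervals, so $\tau$ is infinite and $\chi$ is an infinite execution; otherwise the first case eventually applies and terminates the construction with an unbounded final interval. In either case an infinite execution with the given initial condition exists, proving the MA is non-blocking. The main point requiring care is simply that invariant membership is propagated at every step---via (vii)/(vi) during flow and via (v) across each reset---so that the construction never blocks. I would note that nothing here rules out the dwell times $T_i$ summing to a finite value (a Zeno execution), but this is irrelevant to non-blocking, which asserts only existence; the exclusion of chattering is instead what Assumptions~\ref{assum:MP}~(i) and~(iv) handle, per Remark~\ref{rem:Zeno1}.
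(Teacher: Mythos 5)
Your proof is correct and takes essentially the same route as the paper's: the same case split on whether $\SMA(m)$ is empty, with Assumption~\ref{assum:MP}~(vi) handling the invariant case, (vii) supplying the unique label and dwell time, and (v) re-establishing membership in the next invariant after each reset. The only difference is that where the paper closes the argument by citing Lemma~1 of \cite{LYG99}, you inline that lemma's content as an explicit interval-by-interval induction, which makes the proof self-contained; your closing remark is also accurate, since an execution with infinitely many intervals is infinite under the paper's definition, so Zeno behavior is indeed irrelevant to non-blocking and is handled separately by Assumptions~\ref{assum:MP}~(i) and~(iv) as noted in Remark~\ref{rem:Zeno1}.
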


\begin{proof}
Let $(m,x) \in \QMA^0$. If $\SMA(m) = \emptyset$, then by Assumption~\ref{assum:MP}~(vi), 
$\IMA(m)$ is invariant, so the trajectory $\phiMA(t,x)$ starting at $(m,x)$ remains in $\IMA(m)$ 
for all future time. Therefore, trivially, the MA is non-blocking for this initial condition. 
If $\SMA(m) \neq \emptyset$, then by Assumption~\ref{assum:MP}~(vii), $\phiMA(t,x)$ remains 
in $\IMA(m)$ until it reaches a guard set. Additionally, by Assumption~\ref{assum:MP} (v), the
trajectory is mapped under the reset into the next invariant. By Lemma 1 of \cite{LYG99}, the 
MA is again non-blocking for this initial condition. Overall, the MA is non-blocking.
\end{proof}

The purpose of the Assumptions~\ref{assum:MP} is to guarantee consistency between low level continuous time behavior 
and the high level discrete plan. This consistency is formalized by way of a one-to-one correspondence between infinite 
MA executions and finite PA runs, both starting from the same initial condition. The proof is found in the appendix. 

\begin{lemma} 
\label{lem:executionruns}
Suppose we have an admissible control policy $c \in \cC$, and we have an MA satisfying Assumption~\ref{assum:MP}. 
For each $(l^0,m^0) \in \QPA^0$ and $x_0 \in \IMA(m^0)$ there exist a unique infinite MA execution 
$\chi = (\tau, m(\cdot), \phiMA(\cdot, x_0))$ and a unique finite PA run $\pi = q^0 q^1 \ldots q^N$.
\end{lemma}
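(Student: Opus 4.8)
The plan is to build the infinite MA execution $\chi$ and the finite PA run $\pi$ simultaneously, by induction on the discrete transitions, using the control policy $c$ to resolve the non-determinism of the MA at each jump. I initialize with $q^0 = (l^0, m^0)$, continuous state $\phiMA(0,x_0) = x_0 \in \IMA(m^0)$, and $\tau_0 = 0$. For the inductive step, suppose the execution and run have been constructed up to a state $q^i = (l^i,m^i) \in \QPA$ with $\phiMA(\tau_i,x_0) \in \IMA(m^i)$. If $\SMA(m^i) = \emptyset$, I invoke Assumption~\ref{assum:MP}~(vi) so that the trajectory remains in $\IMA(m^i)$ for all future time, close off $\chi$ with the final interval $[\tau_i,\infty)$, and terminate $\pi$ at $q^N := q^i$. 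Otherwise $\SMA(m^i) \neq \emptyset$, and Assumption~\ref{assum:MP}~(vii) supplies a unique label $\sigma^i \in \SMA(m^i)$ and a unique exit time $T_i \geq 0$ at which the trajectory meets the guard; this guard depends only on $(m^i,\sigma^i)$ by Assumption~\ref{assum:MP}~(ii), so the exit point is well-defined before the next mode is chosen.

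To continue the induction I must confirm that the label $\sigma^i$ selected by the continuous dynamics is usable at the high level. Since $q^i \in \QPA$ and $\sigma^i \in \SMA(m^i)$, the defining property of $\QPA$ yields an OTS edge $(l^i,\sigma^i,l^{i+1}) \in \EOTS$, unique by determinism of the OTS; hence $\sigma^i \in \SPA(q^i)$ and $c(q^i,\sigma^i)$ is defined. Setting $m^{i+1} := c(q^i,\sigma^i)$ gives the MA edge $e_i = (m^i,\sigma^i,m^{i+1}) \in \EMA$ and therefore the PA edge $(q^i,\sigma^i,q^{i+1}) \in \EPA$ with $q^{i+1} := (l^{i+1},m^{i+1})$. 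Applying the reset together with Assumption~\ref{assum:MP}~(v) places $\phiMA(\tau_{i+1},x_0) = r_{e_i}(\phiMA(\tau'_i,x_0))$ inside $\IMA(m^{i+1})$, so the induction proceeds; one checks directly that the resulting collection satisfies the three defining clauses of an MA execution.

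Next I show that $\pi$ is finite. By construction $\pi$ follows the control policy, so $\pi \in \Pi_c(q^0)$; since $q^0 \in \QPA^0$ we have $\Pi_c(q^0) = \Pi_c^f(q^0)$, whence $\pi$ reaches a goal state. As every state of $\QPA^f$ satisfies $\SMA = \emptyset$ and hence $\SPA = \emptyset$, a goal state has no outgoing edge, so any run reaching a goal terminates there; thus $\pi = q^0 \cdots q^N$ is finite with $q^N \in \QPA^f$. Since $\SMA(m^N) = \emptyset$, the terminating branch of the construction applies at step $N$, so $\chi$ consists of exactly $N$ jumps followed by an infinite continuous phase, forcing $\mathscr{T}(\chi) = \infty$; thus $\chi$ is infinite, and having only finitely many transitions it is non-Zeno, in agreement with Remark~\ref{rem:Zeno1}.

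Finally, uniqueness of both objects follows because every choice in the construction is forced: the continuous flow is unique by global Lipschitzness of the vector fields, the exit label and exit time are unique by Assumption~\ref{assum:MP}~(vii), the next motion primitive is pinned down by $c$, and the next location is pinned down by determinism of the OTS. Consequently $\pi$ is determined by the induced label sequence together with $c$ and the OTS, and $\chi$ is determined by $\pi$ together with the continuous flow. I expect the main obstacle to be the bookkeeping of this simultaneous construction---in particular, verifying at each step that the continuously selected label $\sigma^i$ is genuinely enabled at the PA level so that $c(q^i,\sigma^i)$ is meaningful, and that the reset lands in the next invariant---together with translating the reach-avoid property encoded in $\QPA^0$ into finiteness of the number of jumps.
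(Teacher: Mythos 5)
Your proof is correct and takes essentially the same route as the paper's: the same simultaneous inductive construction of the execution and the run, with the same division of labor among Assumption~\ref{assum:MP}~(vi)/(vii) (dichotomy and unique exit label/time), (ii)/(iii) (well-defined guard), (v) (reset lands in the next invariant), OTS determinism plus the policy $c$ (forcing the next PA state), and the definition of $\QPA^0$ in \eqref{eq:initPA} together with \eqref{eq:qpaf} (finiteness of the run and the terminal interval $[\tau_N,\infty)$). One caveat: your inference ``hence $\sigma^i \in \SPA(q^i)$'' does not strictly follow from the OTS edge alone, since a PA edge additionally requires some MA-successor $m'$ to make $(l^{i+1},m')$ a member of $\QPA$; however, the paper's own proof silently assumes exactly this when it sets $m^{k+1} := c(q^k,\sigma^k)$, so your write-up is no less rigorous --- indeed slightly more explicit --- than the paper's on this point.
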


Before we can prove Theorem~\ref{thm:main} we need one further preliminary result stating that because of the translational 
invariance of Assumption~\ref{assum:symmetry}, the continuous part of an MA execution has a unique correspondence to a 
closed-loop trajectory of the system \eqref{eq:thesystem}. The proof is straightforward and is omitted. 

\begin{lemma}
\label{lem:traj}
Let $m \in M$, $x_0 \in \IMA(m)$, $y \in\RR^p$, and $\tilde{x}_0 = x_0 + h^{-1}_o(y)$. Consider the trajectory 
$\phi(t,\tilde{x}_0)$ of \eqref{eq:thesystem} with the feedback control $u(x) = u_{m}(x - h^{-1}_o(y))$. Also consider 
the MA trajectory $\phiMA(t,x_0)$ with feedback control $u_m(x)$. For all $t \geq 0$ such that $\phiMA(t,x_0)\in \IMA(m)$,
\begin{equation*}
\phi(t,\tilde{x}_0) = \phiMA(t,x_0) + h^{-1}_o(y).
\end{equation*}
\end{lemma}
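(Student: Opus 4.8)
The plan is to exhibit an explicit candidate solution of the closed-loop system \eqref{eq:thesystem} and then appeal to uniqueness of solutions; translational invariance does all the real work. Define $\psi(t) := \phiMA(t,x_0) + h^{-1}_o(y)$ on the set of $t \geq 0$ for which $\phiMA(t,x_0) \in \IMA(m)$. First I would check the initial condition: $\psi(0) = \phiMA(0,x_0) + h^{-1}_o(y) = x_0 + h^{-1}_o(y) = \tilde{x}_0$, which matches the initial condition of $\phi(\cdot,\tilde{x}_0)$.

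Next I would verify that $\psi$ satisfies the same closed-loop ODE that drives $\phi(\cdot,\tilde{x}_0)$, namely $\dot{x} = f(x, u_m(x - h^{-1}_o(y)))$. Differentiating and using the continuous dynamics of the MA gives $\dot{\psi}(t) = \tfrac{d}{dt}\phiMA(t,x_0) = f(\phiMA(t,x_0), u_m(\phiMA(t,x_0)))$. The one substantive step is to rewrite this right-hand side in terms of $\psi(t)$. Since $\psi(t) - h^{-1}_o(y) = \phiMA(t,x_0)$, the applied feedback evaluates as $u_m(\psi(t) - h^{-1}_o(y)) = u_m(\phiMA(t,x_0))$; then translational invariance (Assumption~\ref{assum:symmetry}), applied with the shift $h^{-1}_o(y)$, yields $f(\psi(t), u_m(\phiMA(t,x_0))) = f(\phiMA(t,x_0) + h^{-1}_o(y), u_m(\phiMA(t,x_0))) = f(\phiMA(t,x_0), u_m(\phiMA(t,x_0)))$. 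Hence $\dot{\psi}(t) = f(\psi(t), u_m(\psi(t) - h^{-1}_o(y)))$, so $\psi$ solves exactly the closed-loop ODE of $\phi(\cdot,\tilde{x}_0)$.

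Finally I would invoke uniqueness. The MA vector field $f(\cdot, u_m(\cdot)) = \XMA(m)$ is globally Lipschitz on $\RR^n$ by the definition of the MA, and its shifted counterpart $f(\cdot, u_m(\cdot - h^{-1}_o(y)))$ is a translate of it (again by invariance), hence also globally Lipschitz. The initial value problem therefore has a unique solution, and since $\psi$ and $\phi(\cdot,\tilde{x}_0)$ share the initial state $\tilde{x}_0$ and satisfy the same ODE, they coincide on the stated interval, giving $\phi(t,\tilde{x}_0) = \phiMA(t,x_0) + h^{-1}_o(y)$.

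I do not anticipate any genuine obstacle: the entire content reduces to the single substitution enabled by translational invariance, which is precisely why the authors call the proof straightforward. The restriction to times with $\phiMA(t,x_0) \in \IMA(m)$ is needed only because that is the region where the motion primitive $m$ supplies the relevant dynamics; the uniqueness argument itself would extend to all $t \geq 0$, since both vector fields are defined and globally Lipschitz on all of $\RR^n$.
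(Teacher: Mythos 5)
Your proof is correct, and it is precisely the ``straightforward'' argument the paper alludes to when it omits the proof of Lemma~\ref{lem:traj}: exhibit the translate $\psi(t) = \phiMA(t,x_0) + h^{-1}_o(y)$ as a solution of the shifted closed-loop ODE via Assumption~\ref{assum:symmetry}, then conclude by uniqueness of solutions for globally Lipschitz vector fields. Your closing remark is also accurate: the identity holds for all $t \geq 0$, and the restriction to times with $\phiMA(t,x_0) \in \IMA(m)$ appears in the statement only because that is the regime in which the motion primitive $m$ governs the MA execution.
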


Finally we are ready to prove Theorem~\ref{thm:main}.

\begin{proof}[Proof of Theorem~\ref{thm:main}]

We must show that (i) output trajectories of system \eqref{eq:thesystem} remain within $\cP$, and 
(ii) output trajectories eventually reach and remain within the goal set $\cG$.
Let $\tilde{x}_0 \in \cX_0$. 
Choose any $(l_{j^0}, m^0) \in \QPA^0$ such that $x_0 := \tilde{x}_0 - h^{-1}_o(d \circ l_{j^0}) \in \IMA(m^0)$.
By Lemma~\ref{lem:executionruns}, we may associate a unique MA execution $\chi$ and 
a unique PA run $\pi$ to $(l_{j^0}, m^0) \in \QPA^0$ and $x_0 \in \IMA(m^0)$. Denote the hybrid time domain as 
$\tau = \{\cI_0, \ldots, \cI_N \}$ with $\cI_k = [\tau_k, \tau_k']$ for $k = 0, \ldots, N-1$ (with $\tau_0 = 0$) and 
$\cI_N = [\tau_N, \infty)$. The last interval follows from the definition of $(l_{j^N},m^N) \in \QPA^f$ \eqref{eq:qpaf}, 
since $\SMA(m^N) = \emptyset$ and thus Assumption~\ref{assum:MP}~(vi) implies that we must have that $\cI_N = [\tau_N, \infty)$. 
As in the proof of Lemma~\ref{lem:executionruns}, denote the corresponding sequence of events as $\sigma^0 \cdots \sigma^{N-1}$.

Using Lemma~\ref{lem:traj} with $y = d \circ l_{j^0}$, we have that $\phi(t,\tilde{x}_0) = \phiMA(t,x_0) + h^{-1}_o(d \circ l_{j^0})$. 
We claim that for all $k = 0, \ldots, N$ and $t \in \cI_k$,
\begin{equation} 
\label{eq:phi2phima}
\phi(t,\tilde{x}_0) = \phiMA(t,x_0) + h^{-1}_o(d \circ l_{j^k}).
\end{equation}
Clearly the result is true for $k = 0$.

We derive two facts to assist in proving this claim. Recall that by definition of the OTS edges, we have that for all $k = 0, \ldots, N-1$, $ \sigma^{k} = l_{j^{k+1}} - l_{j^{k}}$. Furthermore, by rearranging, multiplying component-wise by $d$, and taking the preimage $h^{-1}_o$, we have the first fact: for all $k = 0, \ldots, N-1$ that $ h^{-1}_o(d \circ l_{j^{k+1}}) = h^{-1}_o(d \circ l_{j^{k}}) + h^{-1}_o(d \circ \sigma^{k})$. Also by definition of the reset map and MA execution, we get the second fact: for all $k = 0, \ldots, N-1$, $r_{e^{k}}(\phiMA(\tau_{k}', x_0)) = \phiMA(\tau_{k}', x_0) - h^{-1}_o(d \circ \sigma^{k}) = \phiMA(\tau_{k+1}, x_0)$.

Returning to \eqref{eq:phi2phima}, by induction we assume that it is true for $0 \leq k < N$ and show that it is true for $k + 1$. Using the above facts and \eqref{eq:phi2phima} for $k$ at $t = \tau_k' = \tau_{k+1}$ yields
\begin{align*}
\phi(\tau_{k+1}, \tilde{x}_0) &= \phi(\tau_k', \tilde{x}_0) = \phiMA(\tau_k',x_0) + h^{-1}_o(d \circ l_{j^k}) \\
& = (\phiMA(\tau_{k+1},x_0) + h^{-1}_o(d \circ \sigma^k)) + h^{-1}_o(d \circ l_{j^k}) \\
& = \phiMA(\tau_{k+1},x_0) + h^{-1}_o(d \circ l_{j^{k+1}}).
\end{align*}
Applying Lemma~\ref{lem:traj} with $y = h^{-1}_o(d \circ l_{j^{k+1}})$ at the new initial condition $\phiMA(\tau_{k+1},x_0) \in \IMA(m^{k+1})$, we have that for $k+1$ and for all $t \in \cI_{k+1}$ that \eqref{eq:phi2phima} holds. When $k +1 = N$, the induction terminates and the claim is proven.

Using \eqref{eq:phi2phima} and projecting to the output space we conclude that for all $k = 0, \ldots, N$ and $t \in \cI_k$, $y(t, \tilde{x}_0) \in Y_{j^k}$. Since all the boxes are contained in $\cP$ by construction, then for all $t \geq 0$ we have (i). Moreover, since $l_{j^N} \in \LOTS^g$ implies the goal box $Y_{j^N}$ is contained in $\cG$ and $\cI_N = [\tau_N, \infty)$, we have (ii).
\end{proof}

\begin{remark}
The above result does not depend on the method of construction of the admissible control policy $c \in \cC$, nor does it require the control policy to be optimal. 
This allows for different path planning techniques on the PA, as we show in Section \ref{sec:3policy}.
\end{remark}

\begin{remark} \label{rem:reach-avoidseq}
The extension to a sequence of reach-avoid problems is straightforward, following the idea in \cite{WOL13}. First, the reach property (ii) of Problem \ref{prob:reachavoid} is relaxed to $y(T,x_0) \in \cG$. Next, suppose there is a finite sequence of goals $\LOTS^{g,i}$, $i = 1, ..., n_g > 1$. In contrast to \eqref{eq:qpaf}, we set the final PA states to be $\QPA^{f,i} = \{ (l,m) \in \LOTS^{g,i} \times M ~|~ \SMA(m) \neq \emptyset \}$ for $i = 1, \ldots, n_g -1$. Finally, one must design control policies $c_i$ with associated initial conditions $\QPA^{0,i}$ \eqref{eq:initPA} such that $\QPA^{f,i} \subset \QPA^{0,i+1}$ for $i = 1, \ldots, n_g -1$. For $i = n_g$, one may impose solutions to remain invariant or connect back to the first goal. 
\end{remark}

\section{Parallel Composition of Motion Primitives}
\label{sec:paracomp}

\begin{comment}
The final observation is that motion primitives may be constructed from the composition of \textit{atomic motion primitives} 
if the dynamics for each output can be decoupled. The formal procedure of parallel composition of MA is discussed in Section \ref{sec:paracomp}. As a preliminary example, consider the case when $p=2$. Suppose that each output is generated by an independent state equation, but the two state equations are identical. Also let $Y^*=[0,d]\times[0,d]$. Suppose we have designed atomic motion primitives \textit{Backward ($\mathscr{B}$), Forward ($\mathscr{F}$), Hold ($\mathscr{H}$)} for the first output $y_1$ with the following behaviors respectively: $y_1$ leaves the interval $[0,d]$ at the face $y_1=0$, $y_1$ leaves $[0,d]$ at $y_1=d$, $y_1$ remains in $[0,d]$. We design the same atomic motion primitives for $y_2$. Informally, the parallel composition generates all the possible combinations, so that the resulting motion primitives of the composed MA are then denoted as $(\mathscr{F},\mathscr{F}), (\mathscr{B},\mathscr{F}), (\mathscr{F},\mathscr{H}), \ldots,$. 
Figure~\ref{fig:product} shows an example of this situation, assuming any location $l_j$ is associated with the box $Y^*$. 
\end{comment}

In this section we describe the operation of parallel composition of two maneuver automata. By repeated 
application of this operation, more complex higher-dimensional MA's can be constructed by starting from 
simple low dimensional atomic motion primitives, such as those described in Section~\ref{sec:MAexample}. The key 
challenge is to ensure that the resulting parallel composed MA satisfies Assumptions~\ref{assum:MP}, 
if the two constituent MA's do. This is proved in Theorem~\ref{thm:parallel}. 
First we give some preliminary definitions and we fix some notation, followed by the formal definition of 
parallel composition of MA's. 

We consider two independent systems
\begin{equation}
\dot{x}^j = f^j(x^j, u^j), \;\; y^j = h^j(x^j),
\end{equation}
where $x^j \in \RR^{n^j}$, $u^j \in \RR^{\mu^j}$, and $y^j \in \RR^{p^j}$ for $j = 1, 2$. We use superscripts 
to identify the distinct subsystems. Assume that each system satisfies Assumption~\ref{assum:symmetry}. That
is, for $j = 1, 2$, $y^j_i = x^j_i$, $i = 1,\ldots, p^j$. Associated with each system $j = 1,2$ is the MA 
\begin{equation}
\cHMA^j = (\QMA^j, \Sigma^j, \EMA^j, \XMA^j, \IMA^j, \GMA^j, \RMA^j, \QMA^{0,j}).
\end{equation}
We additionally assume that $\cHMA^1$ and $\cHMA^2$ satisfy Assumption~\ref{assum:MP}. Denote the canonical 
boxes in the respective output spaces as $Y^{*,j}= \prod_{i = 1}^{p^j} [0, d_i^j]$. 
The event sets labelling the faces of $Y^{*,j}$ are $\Sigma^j = \{ -1, 0, 1 \}^{p^j}$. 
The empty strings are denoted as $\varepsilon^j := (0, \ldots, 0) \in \Sigma^j$, $j = 1, 2$, and the
empty string is $\varepsilon := (\varepsilon^1, \varepsilon^2)$. Other sets
are similarly denoted with a superscript to identify the system, such as the set of possible events $\SMA^j(m^j)$ 
for $m^j \in M^j$ and the output indices $o^j$. For the parallel composition we also require 
some extra notation. First, for $j = 1,2$ and for each $m^j \in M^j$, define the invariant set minus all the guard sets
\begin{equation}
I^j(m^j) := \IMA^j(m^j) \setminus \left( \bigcup_{ e^j = (m^j,\sigma^j,m^j_2) \in \EMA^j} g_{e^j} \right) .
\end{equation}
Next, we need three sets: an augmented set of edges that includes a transition with the empty string, an augmented 
set of possible events for a motion primitive $m \in M^j$, and an augmented set of next feasible motion primitives. 
That is, for $j = 1, 2$, we define
\begin{eqnarray*}
\oEMA^j                  & := & \EMA^j \cup \bigl\{ (m^j, \varepsilon^j, m^j_2) ~|~ m^j, m^j_2 \in M^j \bigr. \,,  \\
                                & &       \bigl. I^j(m^j) \subset \IMA^j(m^j_2)  \bigr. \;, \\
                                & &     \bigl. (\forall e^j_2 = (m^j_2, \sigma^j_2, m^j_3) \in \EMA^j) \bigr.  \\
                                & & \bigl.  I^j(m^j) \cap g_{e^j_2} = \emptyset \bigr\} \\
\oSMA^j(m^j)               & := & \SMA^j(m) \cup \{ \varepsilon^j \} \,, \qquad \qquad m^j \in M^j \\
\overline{M}^j(m^j,\sigma^j) & := & \{ m^j_2 \in M^j ~|~ (m^j, \sigma^j, m^j_2) \in \oEMA^j \} \,, \\
                                 & & \qquad \qquad \qquad m^j \in M^j, \sigma^j \in \oSMA^j(m^j). 
\end{eqnarray*}
We also define the products of these sets: 
\begin{eqnarray*}
\oSMA(m)               & := & \oSMA^1(m^1) \times \oSMA^2(m^2) \,,     \\                 
                              & &  \qquad \qquad m = (m^1, m^2) \in M \,, \\ 
\overline{M}(m,\sigma) & := & \ol{M}^1(m^1,\sigma^1) \times \ol{M}^2(m^2, \sigma^2) \,, \\
                              & & m = (m^1, m^2) \in M, \\
                              & & \sigma = (\sigma^1, \sigma^2) \in \oSMA(m) \,.
\end{eqnarray*}
Finally, the canonical box in the output space of the parallel composition is $Y^* = Y^{*,1} \times Y^{*,2}$. 
We can now define the parallel composition of two MA's.
\begin{defn}
Consider two MA's $\cHMA^1$ and $\cHMA^2$ each satisfying Assumption~\ref{assum:MP}. The parallel 
composition  $\cHMA^1 ~||~ \cHMA^2$ is $\cHMA = ( \QMA, \Sigma, \EMA, \XMA, \IMA, \GMA, \RMA, \QMA^0 )$
where
\begin{description}

\item[State Space]
$\QMA = M \times \RR^n$ with $M = M^1 \times M^2$ and $n = n^1 + n^2$.

\item[Labels]
$\Sigma = \Sigma^1 \times \Sigma^2 = \{-1, 0, 1 \}^{p}$ with $p = p^1 + p^2$.

\item[Edges]
$\EMA \subset M \times \Sigma \times M$, where $e = (m, \sigma, m') \in \EMA$ if $\sigma \neq \varepsilon$, 
$\sigma \in \oSMA(m)$, and $m' \in \overline{M}(m, \sigma)$. Observe that for all $m \in M$, $\SMA(m) = \oSMA(m) 
\setminus \{ \varepsilon \}$.

\item[Vector Fields]
For all $m = (m^1, m^2) \in M$, 
$\XMA(m) = \begin{bmatrix} f^1(x^1, u_{m^1}(x^1)) \\ f^2(x^2, u_{m^2}(x^2)) \end{bmatrix}$.
The state is $x := (x^1, x^2) \in \RR^n$, the control input is $u := (u^1, u^2) \in \RR^{\mu}$ where 
$\mu = \mu^1 + \mu^2$, and the output is $y := (y^1, y^2) \in \RR^p$. The output map is 
$h(x) = \begin{bmatrix} h^1(x^1) \\ h^2(x^2) \end{bmatrix}$, with $o(i) = o^1(i)$ for $i = 1, \ldots, p^1$ 
and $o(i) = n^1 + o^2(i - p^1)$ for $ i = p^1 +1 , \ldots, p$.

\item[Invariants]
For all $m = (m^1, m^2) \in M$, $\IMA(m) = \IMA^1(m^1) \times \IMA^2(m^2)$.

\item[Enabling and Reset Conditions]
Consider an edge $e = (m_1, \sigma, m_2) \in \EMA$, where $m_1 = (m^1_1, m^2_1) \in M$, 
$\sigma = (\sigma^1, \sigma^2) \in \oSMA(m)$, $m_2 = (m^1_2, m^2_2) \in \overline{M}(m_1, \sigma)$, 
and $e^j = (m^j_1, \sigma^j, m^j_2) \in \oEMA^j$ for $j = 1, 2$. 
If $\sigma^j \in \oSMA^j(m^j_1)$ and $\sigma^j = \varepsilon^j$, then we define
\begin{equation*}
g_{e^j}  := I^j(m_1^j), \; \; \; \;  r_{e^j} (x^j)  :=  x^j.
\end{equation*}
Otherwise if $\sigma^j  \in \SMA^j(m^j_1)$, we have $g_{e^j} = \GMA^j(e^j)$ 
and $r_{e^j} = \RMA^j(e^j)$, corresponding to their definitions in $\cHMA^j$. Finally, we define 
$g_e = g_{e^1} \times g_{e^2}$ and $r_e(x) = \begin{bmatrix} r_{e^1}(x^1) \\ r_{e^2}(x^2) \end{bmatrix}$.

\item[Initial Conditions]
$\QMA^0 \subset \QMA$ is the set of initial conditions given by 
$\QMA^0 = \{ (m,x) ~|~ (m^j, x^j) \in \QMA^{0,j}, i = 1, 2 \}$.

\end{description}
\tqed
\end{defn}

First, notice that for each $\cHMA^j$ and for each $m^j \in M^j$, the definition of $\oEMA^j$ automatically includes self-loop 
edges $(m, \varepsilon^j, m) \in \oEMA^j$. We include such transitions with $\varepsilon^j$ so that the parallel composition 
is properly constructed. For example, suppose a proper face of $Y^{*,1}$ is crossed by the first system, but no proper face 
of $Y^{*,2}$ is crossed by the second system. To correctly account for such possibilities, the overall transition 
for the composed MA must record the lack of crossing in $Y^{*,2}$ by the empty string $\varepsilon^2$. Second, notice that we have 
allowed for additional edges with $\varepsilon^j$ to allow for the possibility of switching to a different motion primitive over 
the same box $Y^{*,j}$ if the invariants overlap and are not mapped immediately to a guard set, as can be observed by the 
definition of $\oEMA^j$. Referring to Figure \ref{fig:ctrStrat}, an edge such as $((\sF,\sH),(1,0),(\sH,\sF)) \in \EMA$ consists of $(\sF,1,\sH) \in \EMA^1$ and $(\sH,0,\sF) \in \oEMA^2$, which encodes a turn from Right to Up. 

The main result is now stated; the proof is in the appendix. 

\begin{theorem}
\label{thm:parallel}
We are given $\cHMA^1$ and $\cHMA^2$, two MA's that satisfy Assumption~\ref{assum:MP}. The parallel
composition $\cHMA = \cHMA^1 ~||~ \cHMA^2$ defined above is an MA that also satisfies Assumption~\ref{assum:MP}.
\end{theorem}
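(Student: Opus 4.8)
The plan is to verify that $\cHMA = \cHMA^1 \,||\, \cHMA^2$ meets each structural requirement of an MA and each of the seven conditions of Assumption~\ref{assum:MP}, exploiting throughout the fact that the composed vector field is block-diagonal. The central observation, which I would establish first, is that the closed-loop flow decouples: for $m = (m^1,m^2)$ and $x_0 = (x_0^1, x_0^2)$ we have $\phiMA(t,x_0) = (\phiMA^1(t,x_0^1), \phiMA^2(t,x_0^2))$. Combined with the product forms $\IMA(m) = \IMA^1(m^1) \times \IMA^2(m^2)$, $g_e = g_{e^1} \times g_{e^2}$, and $r_e = (r_{e^1}, r_{e^2})$, this reduces nearly every verification to a component-wise statement about the two constituent MA's, plus the bookkeeping introduced by the augmented edges $\oEMA^j$. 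The well-definedness checks (bounded invariants contained in $h^{-1}(Y^*)$, guards contained in the invariant and lying on $\cF_\sigma$) follow directly from these products, noting that an $\varepsilon^j$-factor contributes $g_{e^j} = I^j(m^j)$, which corresponds to $\cF_{\varepsilon^j} = Y^{*,j}$.

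For the easier conditions I would argue as follows. Condition (i) is immediate, since edges of $\cHMA$ require $\sigma \neq \varepsilon$, so $\SMA(m) = \oSMA(m) \setminus \{\varepsilon\}$. Conditions (ii) and (v) are component-wise: for an active factor ($\sigma^j \neq \varepsilon^j$) they follow from the corresponding condition for $\cHMA^j$, while for an $\varepsilon^j$-factor one uses that $g_{e^j} = I^j(m^j)$ depends only on the source, and that the defining containment $I^j(m^j) \subset \IMA^j(m_2^j)$ in $\oEMA^j$ gives (v). For condition (iii), given distinct labels $\sigma_1 \neq \sigma_2$ from a common source, I would pick a subsystem $j$ with $\sigma_1^j \neq \sigma_2^j$ and show the corresponding guard factors are disjoint, so the product guards are disjoint; the case where both are nonempty uses condition (iii) of $\cHMA^j$, the mixed case uses that $I^j(m^j)$ is disjoint from every actual guard of $m^j$, and the both-$\varepsilon^j$ case cannot occur. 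Condition (vi) uses that $\SMA(m) = \emptyset$ holds exactly when both $\SMA^j(m^j) = \emptyset$, whence each factor invariant is forward-invariant by condition (vi) of $\cHMA^j$ and the product flow keeps $\IMA(m)$ invariant.

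The cleaner of the two remaining points is condition (iv). Writing $r_{e_1}(g_{e_1}) \cap g_{e_2}$ as a product of subsystem factors, I would classify when a factor can be nonempty: the case where both $\sigma_1^j, \sigma_2^j$ are nonempty is excluded by condition (iv) of $\cHMA^j$, and the case $\sigma_1^j = \varepsilon^j$, $\sigma_2^j \neq \varepsilon^j$ is excluded precisely by the universally-quantified clause $I^j(m^j) \cap g_{e_2^j} = \emptyset$ in the definition of $\oEMA^j$. Hence a nonempty factor forces $\sigma_2^j = \varepsilon^j$; if both factors were nonempty we would get $\sigma_2 = \varepsilon$, contradicting $e_2 \in \EMA$. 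Therefore some factor is empty and the product intersection is empty.

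The main obstacle is condition (vii), which requires synchronizing the two subsystems' exit behavior. Fixing $x_0 \in \IMA(m)$, I would let each active subsystem ($\SMA^j(m^j) \neq \emptyset$) supply its unique exit label $\sigma^j$ and exit time $T^j$ from condition (vii) of $\cHMA^j$, treating an inactive subsystem as never exiting by condition (vi). The composed exit time is then $T := \min_j T^j$ over active subsystems, and the composed label $\sigma$ records $\sigma^j$ for every subsystem exiting at time $T$ and $\varepsilon^j$ for the rest. The delicate steps are: showing that a subsystem which has not yet exited lies in $I^j(m^j)$ (the invariant with all guards removed) for $t < T^j$, so that it correctly populates the $\varepsilon^j$-factor $g_{e^j} = I^j(m^j)$ of the composed guard; verifying that $\phiMA(T,x_0)$ lands in $g_e$ for every $e = (m,\sigma,m')$ while remaining in $\IMA(m)$ on $[0,T]$ (immediate from $T \le T^j$ and the products); and confirming uniqueness, namely that no composed guard with a nonempty label is reached before $T$ (a nonempty factor needs an actual guard, available only at some $T^j \ge T$) and that the reached guard determines $\sigma$ uniquely, which is exactly the already-proved disjointness of condition (iii) evaluated at $\phiMA(T,x_0)$. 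Assembling these gives condition (vii) and completes the proof.
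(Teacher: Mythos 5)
Your proposal is correct and follows essentially the same route as the paper's proof: reduce every structural property and each of conditions (i)--(vii) to component-wise statements via the product forms of invariants, guards, and resets, use the defining clauses of $\oEMA^j$ to handle the $\varepsilon^j$-factors (notably in (iii), (iv), and (v)), and prove (vii) by taking the minimum of the subsystems' exit times with $\varepsilon^j$ recorded for the slower subsystem. If anything, your treatment is slightly more explicit than the paper's on the one delicate point it dismisses as ``easy (but tedious)'' --- that a not-yet-exited subsystem must lie in $I^j(m^j)$ (invariant minus all guards, by the uniqueness in condition (vii)) so that it correctly populates the $\varepsilon^j$-guard factor at the composed exit time.
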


\begin{remark}
We have defined the event set as $\Sigma = \Sigma^1 \times \Sigma^2$, but the usual parallel composition of automata would have 
$\Sigma = \Sigma^1 \cup \Sigma^2$ \cite{WON15}. Given the interpretation of the event set as crossing faces of $Y^*$, the 
cartesian product is the more natural choice.

\end{remark}

\section{Motion Primitives for Integrator Systems}
\label{sec:MAexample}

In this section we give the formal details for the MA consisting of the three motion primitives Hold ($\sH$), Forward ($\sF$), and Backward ($\sB$) introduced in Example \ref{ex:MAex}. This design is able to be succinctly expressed within the MA formalism since the underlying double integrator system satisfies Assumption~\ref{assum:symmetry}. By exploiting the parallel composition construction from Section \ref{sec:paracomp}, the usefulness of this MA is demonstrated in the context of multi-robot systems in Section \ref{sec:application}. 

Suppose the nonlinear control system is the double integrator system:
\begin{equation}
\dot{x}_1 = x_2, \;\; \dot{x}_2 = u_2, \;\;\;\;\; y = x_1, \;
\end{equation}
where $x := (x_1,x_2) \in \RR^2$, $u_2 \in \RR$, and the output $y$ is the position. Each motion primitive's 
invariant region is a polytopic set in the state space defined as the convex hull of vertices $v^k_2$, 
$k \in \{1, \ldots, 6\}$; see Figure \ref{fig:DoubleVF}. The vertices are determined by the segment 
length $d > 0$, and a pre-specified maximum control value $u_2^* > 0$. Let $\bar{u}_1 := \sqrt{d u_2^*}$.  
The vertices are $v^1_2 = (0,-\bar{u}_1)$, $v^2_2 = (0,0)$, $v^3_2 = (0,\bar{u}_1)$, $v^4_2 = (d,-\bar{u}_1)$, 
$v^5_2 = (d,0)$, and $v^6_2 = (d,\bar{u}_1)$. For each motion primitive $m \in M := \{\sH, \sF, \sB\}$, we define 
an affine feedback
\begin{equation}
\label{eq:doublecontrollaw}
u_m(x) = K_m x + g_m  \,.
\end{equation}
Our specific choices are 
$K_{\sH} = \begin{bmatrix} -2u_2^*/d & -2u_2^*/\bar{u}_1 \end{bmatrix}$, 
$K_{\sF} = K_{\sB} =  \begin{bmatrix} 0 &  -2u_2^*/\bar{u}_1 \end{bmatrix}$,
$g_{\sH} = g_{\sF} = u_2^*$, and $g_{\sB}   =  -u_2^*$.
These controllers are derived using reach control theory \cite{RB06,BG14}. One first
selects control values at the vertices of the polytopes so that trajectories remain in the
invariant region (for the Hold primitive) or they exit the polytope through a certain facet and not 
through others. In particular, we have chosen all the control values at the vertices to have magnitude $u_2^*$.
%; see Figure~\ref{fig:DoubleModes}. 
Then the velocity vectors at the vertices are affinely
extended to obtain affine feedbacks over the entire polytope, yielding the vector fields shown in 
Figure~\ref{fig:DoubleVF}. 

Now we construct the MA. The state space is $\QMA = M \times \RR^2$. 
The labels are $\Sigma = \{ -1,0,1 \}$. The set of edges $\EMA$ are 
shown in Figure \ref{fig:MATrans2}. In the context of parallel composition, one may compute that the augmented edges are
\begin{equation*}
\oEMA = \EMA \cup \{ (m,0,m) \}_{m \in M} \cup \{ (\sH,0,\sF), (\sH,0,\sB) \}.
\end{equation*}

For each $m \in M$, the closed-loop vector fields are given by $[\XMA(m)](x) = (x_2, u_m(x))$, which are clearly globally Lipschitz. The invariants are given by 
the convex hull of vertices, as seen in Figure~\ref{fig:DoubleVF},
and excluding the two points $(0, 0)$ and $(d, 0)$, so the invariants are clearly bounded. For example, $\IMA(\sH) = \conv \{v_2^k\}_{k=2}^5 \setminus \{(0,0),(d,0)\}$. 
%The two excluded points allow for the the edges $(\sH,0,\sF)$ (for Assumption~\ref{assum:MP}(iv) so that guards do not map to guards). 
The enabling conditions are constructed by taking
the convex hull of vertices of the exit facet and excluding again $(0,0)$ or $(d, 0)$. Specifically, the edges $(\sF,1,\sH), (\sF,1,\sF) \in \EMA$ both have guard sets 
$g_e = \conv \{v^5_2, v^6_2 \} \setminus \{(d,0)\} = \{d\} \times (0, \bar{u}_1]$, as shown highlighted in green on the invariant region of $\sF$ in Figure \ref{fig:DoubleVF}, whereas $(\sB,-1,\sH), (\sB,-1,\sB) \in \EMA$ both have guard sets $g_e = \conv \{v^1_2, v^2_2 \} \setminus \{(0,0)\} = \{0\} \times [-\bar{u}_1,0)$. The reset conditions are constructed according to their definition. The proof of the following result is found in the appendix.

\begin{lemma}
\label{lemma:doubles}
The double integrator MA satisfies Assumption~\ref{assum:MP}.
\end{lemma}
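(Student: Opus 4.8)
The plan is to verify each of the seven conditions of Assumption~\ref{assum:MP} directly, using the explicit data of the three motion primitives $M = \{\sH, \sF, \sB\}$, the edge set $\EMA$ in Figure~\ref{fig:MATrans2}, the invariants $\IMA(m)$, and the guard sets listed just above the lemma. First I would read off the event sets: from the edges, $\SMA(\sH) = \emptyset$, $\SMA(\sF) = \{1\}$, and $\SMA(\sB) = \{-1\}$. Condition (i) is then immediate since $\varepsilon = 0$ does not appear in any $\SMA(m)$. Condition (ii) requires guard sets to be independent of the target primitive; this holds because the two $\sF$-edges $(\sF,1,\sH)$ and $(\sF,1,\sF)$ share the guard $\{d\}\times(0,\bar{u}_1]$, and similarly the two $\sB$-edges share $\{0\}\times[-\bar{u}_1,0)$. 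Condition (iii) concerns guards on distinct labels out of the same primitive; since each of $\sF$ and $\sB$ emits only a single label, there are no two edges with $\sigma_1 \neq \sigma_2$ from a common source, so the condition is vacuously true.

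Next I would handle the reset-related conditions. For (iv) and (v), recall the reset for label $\sigma$ is $r_e(x) = x - h^{-1}_o(d\circ\sigma)$, which for $p=1$ subtracts $(d\sigma, 0)$. For the $\sF$-guard $\{d\}\times(0,\bar{u}_1]$ with $\sigma=1$, the reset gives $\{0\}\times(0,\bar{u}_1]$; for the $\sB$-guard with $\sigma=-1$ it gives $\{d\}\times[-\bar{u}_1,0)$. To check (v), $r_e(g_e)\subset\IMA(m_2)$, I would verify these reset images lie inside the invariants of the admissible next primitives $\sH$, $\sF$, $\sB$ (each a convex hull of the $v_2^k$ minus the two corner points); the key point is that the open-interval velocities $(0,\bar{u}_1]$ and $[-\bar{u}_1,0)$ avoid the excluded corners $(0,0),(d,0)$, so inclusion holds. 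For (iv), $r_{e_1}(g_{e_1})\cap g_{e_2}=\emptyset$ for composable edges $e_1=(m_1,\sigma_1,m_2)$, $e_2=(m_2,\sigma_2,m_3)$: the reset image sits on one face of $Y^*$ while the next guard $g_{e_2}$ sits on the opposite face (reset lands on $\{0\}$ while $\sF$'s guard is on $\{d\}$, and symmetrically for $\sB$), so the two are disjoint, ruling out chattering.

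The remaining conditions (vi) and (vii) describe the flow. For (vi), the only primitive with $\SMA(m)=\emptyset$ is $\sH$, so I must show $\IMA(\sH)$ is invariant under the Hold vector field; this is exactly the reach-control design intent — the chosen control values at the vertices $v_2^2,\dots,v_2^5$ keep trajectories inside the polytope and drive them toward the origin of the segment, so no trajectory exits. For (vii), applied to $\sF$ and $\sB$ (which have $\SMA(m)\neq\emptyset$), I must show every trajectory starting in $\IMA(m)$ stays inside until it reaches the unique guard, in finite time $T$, with the label $\sigma$ being unique. The main obstacle is this last verification: it requires analyzing the closed-loop flow of the affine system $\dot{x}_1=x_2$, $\dot{x}_2 = K_m x + g_m$ and confirming that with constant acceleration $+u_2^*$ (for $\sF$) every trajectory in the invariant exits precisely through the facet $\{d\}\times(0,\bar{u}_1]$ and through no other facet, in finite time. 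Since $\sF$ applies constant positive acceleration, $x_1$ is strictly increasing and $x_2$ is driven toward $\bar{u}_1$, so I would argue geometrically that the flow lines sweep across the polytope and pierce only the right exit facet; the invariant's construction via reach control theory is designed precisely to guarantee this, and the finite exit time follows from $x_1$ increasing at a bounded-below rate. The same argument mirror-images for $\sB$. I expect this flow analysis for $\sF$ and $\sB$ to be the crux, while (i)–(v) reduce to inspection of the finite edge data and simple set containments.
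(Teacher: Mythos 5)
Your overall plan coincides with the paper's proof: a direct, condition-by-condition verification, with (i)--(iii) read off from the finite edge data ($\SMA(\sH)=\emptyset$, $\SMA(\sF)=\{1\}$, $\SMA(\sB)=\{-1\}$, shared guards for the two $\sF$-edges and the two $\sB$-edges, (iii) vacuous since $|\SMA(m)|<2$), and (iv)--(v) checked by computing the reset images $\{0\}\times(0,\bar{u}_1]$ and $\{d\}\times[-\bar{u}_1,0)$ and noting they land on the face opposite the next guard and inside the next invariant. All of this matches the paper's proof essentially verbatim and is correct.

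The genuine problem is in your justification of (vii), which you yourself identify as the crux. You assert that $\sF$ ``applies constant positive acceleration $+u_2^*$'' and that $x_2$ is driven toward $\bar{u}_1$. Neither matches the paper's design: $K_{\sF} = [\,0 \;\; -2u_2^*/\bar{u}_1\,]$ and $g_{\sF} = u_2^*$ give the closed loop $\dot{x}_2 = u_2^*(1 - 2x_2/\bar{u}_1)$, a velocity feedback whose equilibrium velocity is $\bar{u}_1/2$, not a constant acceleration. The distinction is not cosmetic: under genuinely constant acceleration $+u_2^*$, a trajectory starting with $x_2$ at or near $\bar{u}_1$ would cross the top facet $x_2 = \bar{u}_1$ before reaching $x_1 = d$, i.e., it would leave $\IMA(\sF)$ through a facet carrying no guard --- exactly the failure that (vii) forbids, so your step as stated would not establish the claim. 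What actually makes the argument work is the damping term: at $x_2 = \bar{u}_1$ one has $\dot{x}_2 = -u_2^* < 0$ and at $x_2 = 0$ one has $\dot{x}_2 = u_2^* > 0$, so the top and bottom facets are repelling, $x_2$ is regulated toward $\bar{u}_1/2 > 0$, and $x_1$ reaches $d$ in finite time with $x_2 > 0$, forcing the exit through $\{d\}\times(0,\bar{u}_1]$ only; the mirror argument handles $\sB$. (A similar slip appears in your (vi): the Hold equilibrium is $(d/2,0)$, not the origin of the segment, though there the error is harmless since only invariance is needed.) For calibration, the paper's own proof of (vi)--(vii) simply points to the vector fields in Figure~\ref{fig:DoubleVF}; your instinct to argue via the flow is sound and even more explicit than the paper, but the argument must use the actual affine feedback, not constant acceleration.
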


\begin{remark}
\label{rem:Zeno2}
We noted in Remark~\ref{rem:Zeno1} that Zeno executions do not arise for reach-avoid specifications that,
by construction, involve only finite MA executions. However, one may be interested in analyzing whether an MA is
non-Zeno in its own right, independently of the high level plan or control specification for which it is used. 
It can be verified rather easily that the $p = 1$ double integrator MA design we have presented above is non-Zeno.
%It is worth noting that the two edges $(\sF,1,\sB), (\sB,-1,\sF)$ may also be added to the double integrator edges $\EMA$. They are useful in the context of a sequence of reach-avoids, although one must exercise caution as genuine Zeno behavior may result if these two edges alternate repeatedly.
The situation is considerably more complicated when considering an MA that is a parallel 
composition of these MA's or when considering an arbitrary MA. Generic conditions when hybrid systems have a 
Zeno execution have been studied in \cite{SAS99, SAS00}. However, further study of this problem
is needed in our context since existing results do not apply to all the situations that can arise in our MA.
\end{remark}

\begin{comment}
\begin{figure}[t]
\centering%
\includegraphics[width=\linewidth]{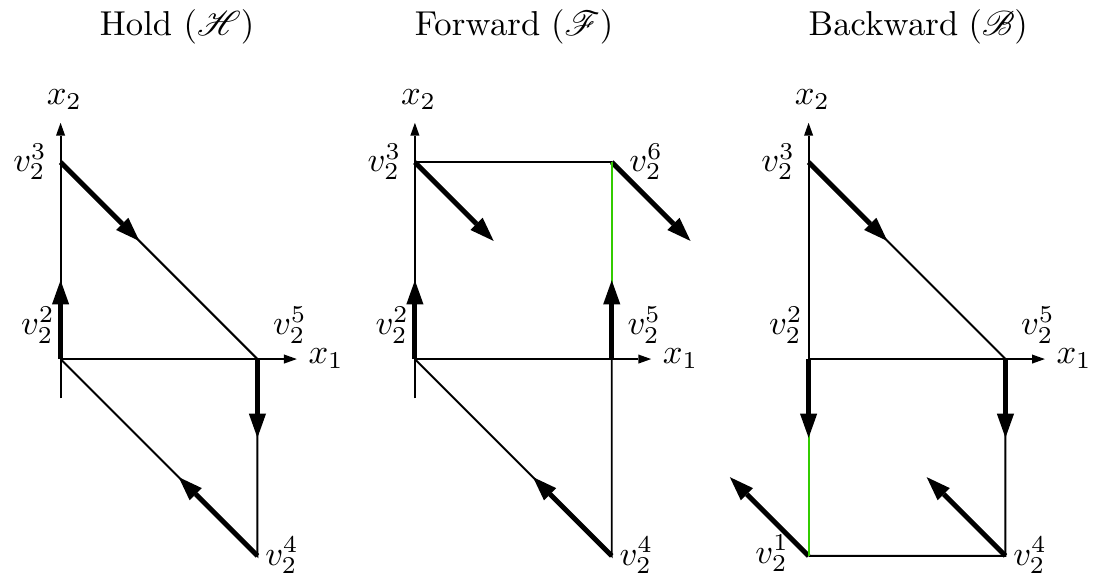} \\
\caption{The closed-loop vectors at the vertices for the Hold, Forward, and Backward motion primitives for 
double integrator dynamics.}%
\vspace{-2mm}
\label{fig:DoubleModes}%
\end{figure}
\end{comment}

\section{Quadrocopter Applications}
\label{sec:application}

In this section we apply our methodology to a {group of} quadrocopters. We first explain how motion primitives can be applied to the system, how to specify the reach-avoid objective, and the overall solution pipeline. Next, we compare and contrast three algorithms for computing a control policy. Then we present experimental results on three different scenarios. Lastly, we provide a discussion.

\subsection{Interfacing Multiple Quadrocopters}

%The standard quadrocopter dynamics model is ubiquitous in the literature \cite{SCHW14}. The vehicle dynamics are described by six degrees of freedom and are nonlinear. It is well known that this model is differentially flat \cite{SCHW14}. As a result, the dynamics for the position $(x_w, y_w, z_w)$ in the world frame each reduce to a double integrator and we are able to use the motion primitives from Section \ref{sec:MAexample}.

The standard quadrotor dynamical model has six degrees of freedom, which can be described by the inertial linear positions $(x_w, y_w, z_w)$ and the roll-pitch-yaw Euler angles $(\phi, \theta, \psi)$ \cite{AYAN17,MEL11}. It is well known that this system is differentially flat, relating the full state and motor inputs of the quadrotor to the flat outputs $(x_w, y_w, z_w, \psi)$ and their derivatives \cite{MEL11}. Rather than specifying positional reference trajectories, we use the motion primitives from Section \ref{sec:MAexample} independently in the $(x_w, y_w, z_w)$ directions to compute the linear accelerations as a feedback on the linear position and velocity states. Specifying an arbitrary yaw reference, differential flatness maps these linear accelerations to the $(\phi, \theta)$ angles and the total vehicle thrust, which through the use of an attitude tracking controller can be converted to motor inputs \cite{MEL11}.
%\cite{MEL11}. Finally, we use a standard attitude tracking controller to generate desired vehicle moments, which can then be converted to motor inputs \cite{MEL11}.
Although we have avoided computing motion primitives on the high dimensional nonlinear model, our experiments show that the quadrotor is fairly well approximated as double integrators in the $(x_w, y_w, z_w)$ directions using our proposed motion primitives.
% for we exploit the property that this system is differentially flat for the outputs the positions $(x_w, y_w, z_w)$ and the heading angle. 

%The vehicle dynamics are described by six degrees of freedom and are nonlinear. It is well known that this model is differentially flat \textcolor{blue}{in the robot's position $(x_w, y_w, z_w)$ and yaw angle (heading) \cite{AYAN17}. We use the motion primitives from Section \ref{sec:MAexample} to independently compute the $(x_w, y_w, z_w)$ acceleration inputs offboard as a feedback on the linear position and velocity state estimates, which are then used to compute onboard motor inputs. The yaw angle is stabilized to zero separately.}

%It is well known that \textcolor{blue}{the desired linear accelerations in the $(x_w, y_w, z_w)$ directions of the world frame can be used to compute low-level motor inputs to the quadrocopter. As such, we can use the motion primitives from Section \ref{sec:MAexample} to independently compute the $(x_w, y_w, z_w)$ acceleration inputs as a feedback on the linear position and velocity state estimates.}

We consider a centralized reach-avoid objective among $N$ quadrocopters. A copy of the gridded 3D workspace must be associated with each vehicle, resulting in a total of $p = 3N$ outputs. The $p$-dimensional MA representing the asynchronous motion capabilities of the multi-vehicle system is obtained by parallel composing $p$ times the single-output MA from Section \ref{sec:MAexample}. 

To specify the reach-avoid objective, we must identify the obstacle and goal boxes in $p = 3N$ dimensions. First we assume that the physical obstacles and goals for each vehicle are labelled on the physical 3D grid. Obstacle boxes in the output space correspond to any vehicle occupying a physical obstacle box or any two or more vehicles occupying the same physical box simultaneously.
%A margin of safety may be included to account for the quadrocopter size relative to the box size. 
To avoid the effects of downwash, we do not allow vehicles to simultaneously occupy boxes that are displaced only in the $z_w$ direction.
Goal boxes in the output space correspond to all the combinations of individual vehicle 3D goal boxes. For simplicity, we assume that each vehicle has a single 3D goal box.

The multi-vehicle reach-avoid problem is solved offline using our proposed methodology. The runtime workflow is depicted in Figure~\ref{fig:interface}. Each runtime component requires negligible computation, even for a large number of vehicles and outputs.

%The multi-vehicle reach-avoid problem can be solved using our proposed methodology, following the steps shown in Figure \ref{fig:methodology}. The output of the methodology is a hybrid controller, which consists of the motion primitive feedback controllers and a control policy, both of which are computed offline. 
%We discuss computational complexity and the modularity of our approach in Section \ref{}. 
%Once the hybrid controller is computed, the system can successfully execute the reach-avoid task from any starting configuration corresponding to a valid initial condition of the hybrid controller. The runtime workflow is depicted in Figure~\ref{fig:interface}, showing how the hybrid controller interfaces with the multi-vehicle system. 
%Due to the simplicity of a box partition and assuming that the next motion primitive can be looked-up in constant time, each runtime component requires a negligible amount of computation, even for a large number of vehicles and outputs.

\begin{figure}[t]
\centering%
\includegraphics[width=1\linewidth,trim=0cm 0cm 0cm 0cm, clip=false]{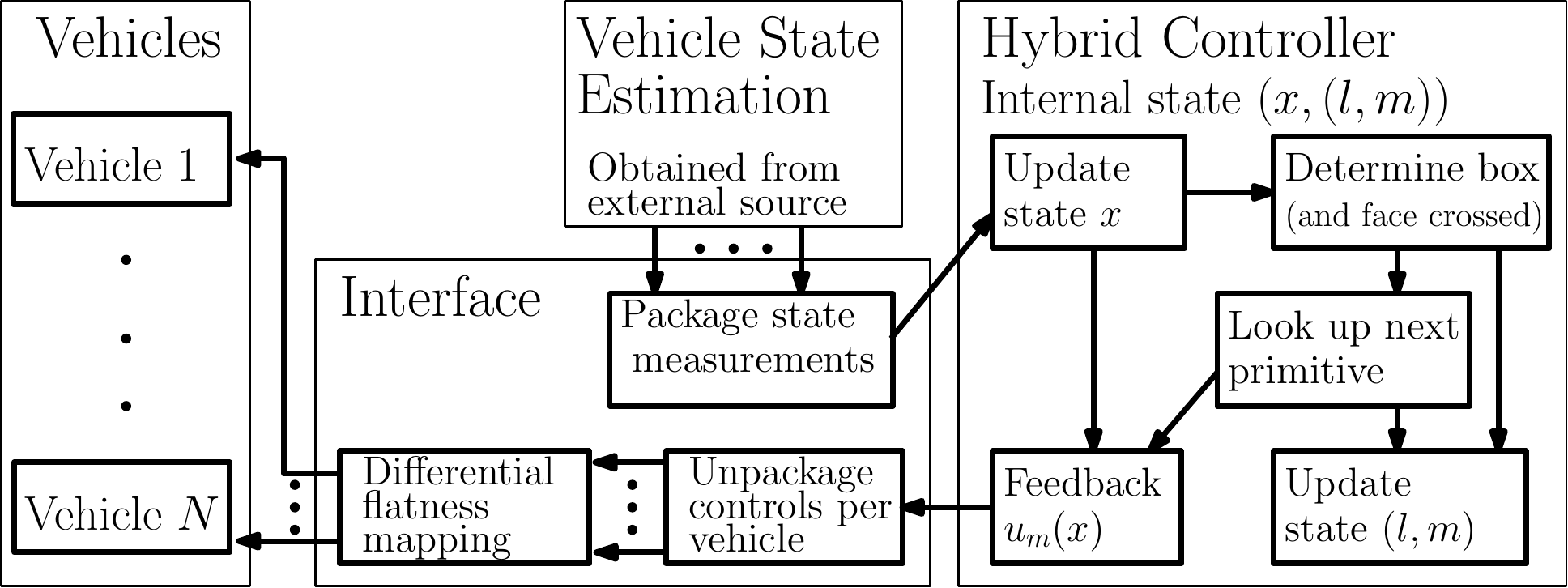}
\caption{Interface between multiple vehicles and the framework with $p = 3N$ outputs. The hybrid controller internal state consists of the joint state measurement of all the vehicles and includes the current (joint) box, $l$, and the current (joint) motion primitive, $m$. The internal state is updated via external state measurements (assumed to be given) and is used to compute the feedback controls.}%
\vspace{-2mm}
\label{fig:interface}%
\end{figure}

\subsection{Control Policy Generation} \label{sec:3policy}

We highlight three options for generating a control policy in the context of the multi-vehicle reach-avoid problem. For each, we give some implementation details and discuss its computational complexity. These are then compared in the experiments.

\subsubsection{Exhaustive Non-Deterministic Dijkstra (NDD)}

The first strategy follows the proposed methodology of Section \ref{sec:method}. We highlight our main implementation steps. First, we compute the OTS states and edges for the associated output space obstacle boxes described earlier. Second, the $p$ times parallel composed MA states and edges are computed. Third, the PA states and edges are computed. Fourth, the value function $V$ is computed using \eqref{eq:valueb}. This is done by initializing the value function to be zero at goal states and infinite elsewhere, and then propagating backwards along PA edges using a non-deterministic Dijkstra (NDD) algorithm \cite{BRO05, WOL13}. 
Once the value function is computed at all states, we compute the optimal control policy $c^{\star}$ using Corollary \ref{cor:policy}. The initial PA states \eqref{eq:initPA} correspond precisely to those states $q \in \QPA$ with $V(q) < \infty$.

The computational complexity grows exponentially as the number of inputs $p = 3N$ increases. Suppose that the physical grid has $(n_x, n_y, n_z)$ boxes in the $(x_w, y_w, z_w)$ directions. Since there are $3^p$ motion primitives, the number of PA states is bounded by $| \QPA | < (n_x n_y n_z)^N 3^p =: k_1$. 
The number of edges from an OTS state is bounded by $3^p - 1$ (the neighboring directions), whereas the number of edges from a MA state is bounded by $(2^p - 1) 3^p =: k_2$ (the neighboring directions times the possible next motion primitives). Since the MA neighboring directions are more restrictive, we have the number of PA edges is bounded by $| \EPA | < k_1 k_2$. 
The presence of obstacles can dramatically reduce the number of PA states and edges. The NDD algorithm generally must inspect all the PA states and edges to compute the value function. As a result, it is optimal and complete (with respect to the selected grid resolution and motion primitive capabilities), which results in the largest possible set of initial conditions $\cX_0$.

\subsubsection{Deterministic $\text{A}^*$}

In this strategy, we make two simplifying assumptions to compromise the quality of the control policy in exchange for better computational efficiency. First, we take the $p$ times composed MA and prune out motion primitives enabling simultaneous motion. Second, we forego computing the largest possible set of initial conditions and instead assume that a single physical initial box is specified for each vehicle. As such, it is sufficient to compute a single path of boxes in the OTS connecting the initial and goal boxes in the $p = 3N$ dimensional output space. From this path the control policy is immediately extracted, by assigning to each box the unique motion primitive leading to the next neighboring box along the path. The path is computed using a standard $\text{A}^*$ algorithm \cite{LAV06}, which starts from the initial box and propagates outwards until the goal box is reached. The (admissible) heuristic function is chosen to be the Manhattan distance, which is the sum of distances along each output direction from the current box to the goal box. 

%The computational complexity grows linearly as the number of outputs increases. 
The number of nodes that $\text{A}^*$ must investigate is bounded by the maximum number of OTS boxes, $(n_x n_y n_z)^N$, which still has exponential complexity in the number of robots. The pruned MA has $2p + 1$ motion primitives, corresponding to $\sF$ or $\sB$ in a single output component with $\sH$ elsewhere, plus the motion primitive $(\sH, \ldots, \sH)$. 
%Except for HoldAll, we also observe that each motion primitive has exactly one possible event in the active direction and that it may be followed by any other motion primitive.
Thus from the current box, we must check the $2p$ neighboring directions to select a feasible direction, taking into account out-of-bounds and obstacle configurations. 
In this implementation, the OTS, MA, and PA serve more as conceptual constructs, and do not need to be precomputed explicitly as it is expensive. In the worst case, the $\text{A}^*$ algorithm may investigate all boxes; as a result, it also produces a control policy that is complete with respect to the chosen grid and pruned MA motion capabilities. The policy produced by $\text{A}^*$ is of minimal length, but may have a long runtime execution.

\subsubsection{Deterministic Greedy Search}

This strategy also makes use of the two simplifying assumptions as with $\text{A}^*$ above, but differs in how the path is constructed. In greedy (best first) search \cite{LAV06}, the path is constructed by starting from the initial box in the output space and then extending it from the current box into any feasible neighboring direction that decreases the Manhattan distance to the goal box. 
Greedy search can often find a path very quickly, although not necessarily an optimal one. Moreover, since greedy search may fail to find a path, it is not complete.
% If a path is found, the maximum path length is $(n_x n_y n_z)N$ as a result of summing the worst case distance traveled by each robot, indicating linear complexity in the number of robots.
%When this is not possible, we extend the path in any feasible neighboring direction. 
%Since there is often more than one feasible choice, we make the choice randomly.
%The computational complexity is similar as in $\text{A}^*$ and it also finds an optimal path. In contrast, greedy search may fail to find a solution, so it is not complete.
%The main difference is that the algorithm must be specified to terminate after some maximum number of steps. As a result, it is neither complete nor optimal.

\subsection{Experimental Results}
\label{sec:experiment}

Our experimental platform is the Crazyflie 2.0; see Figure \ref{fig:exp_setup}. We used a VICON motion capture system to obtain the state estimates of the vehicles. Our implementation was done in Python 2.7.10 and ROS Kinetic, and computations were performed on a 64-bit Lenovo ThinkPad with an 8 core 3.0 GHz Intel Xeon processor and 15.4 GiB RAM. 
We illustrate three different scenarios and consider the three policy generation strategies on each of them. The corresponding video results are available at \url{http://tiny.cc/modular-3alg}.
%, and are able to convey the multi-vehicle motion much more effectively than the static plots shown here.

\subsubsection{Open Space}

\begin{figure*}
\centering%
\includegraphics[width=0.32\linewidth,trim=11cm 0cm 12cm 1.3cm, clip=true]{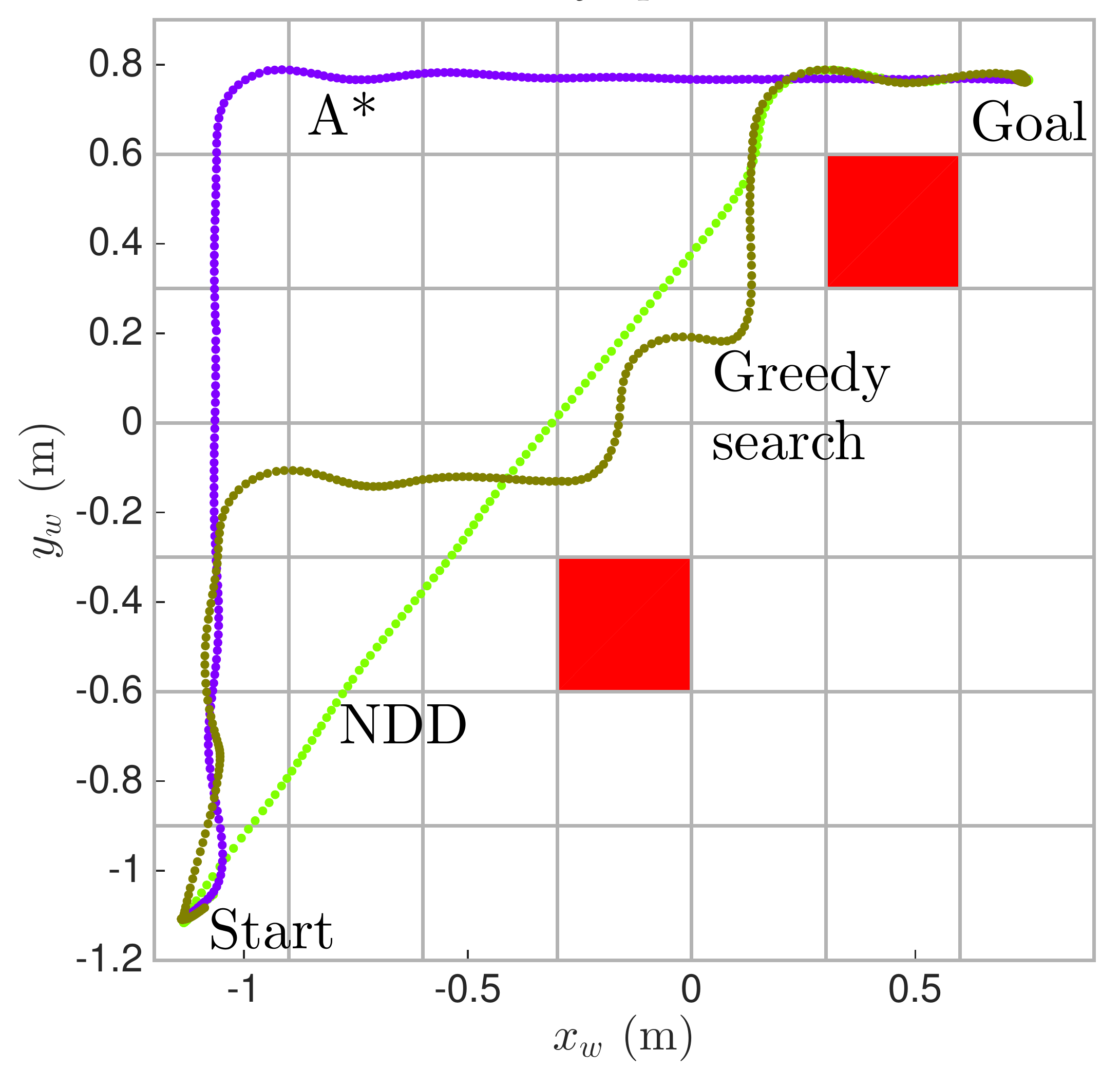}
\includegraphics[width=0.32\linewidth,trim=11cm 0cm 12cm 1.3cm, clip=true]{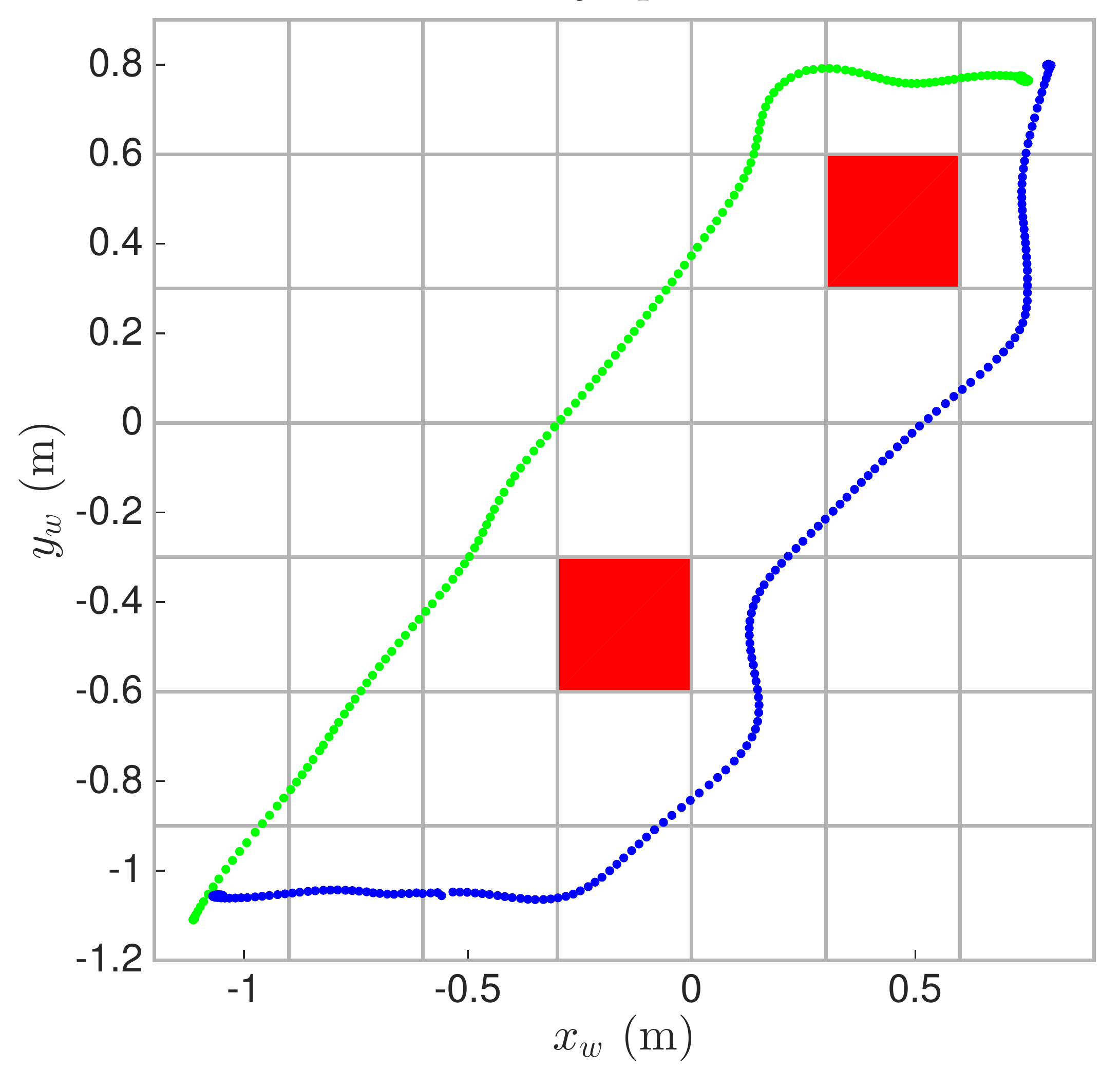}
\includegraphics[width=0.32\linewidth,trim=11cm 0cm 12cm 1.3cm, clip=true]{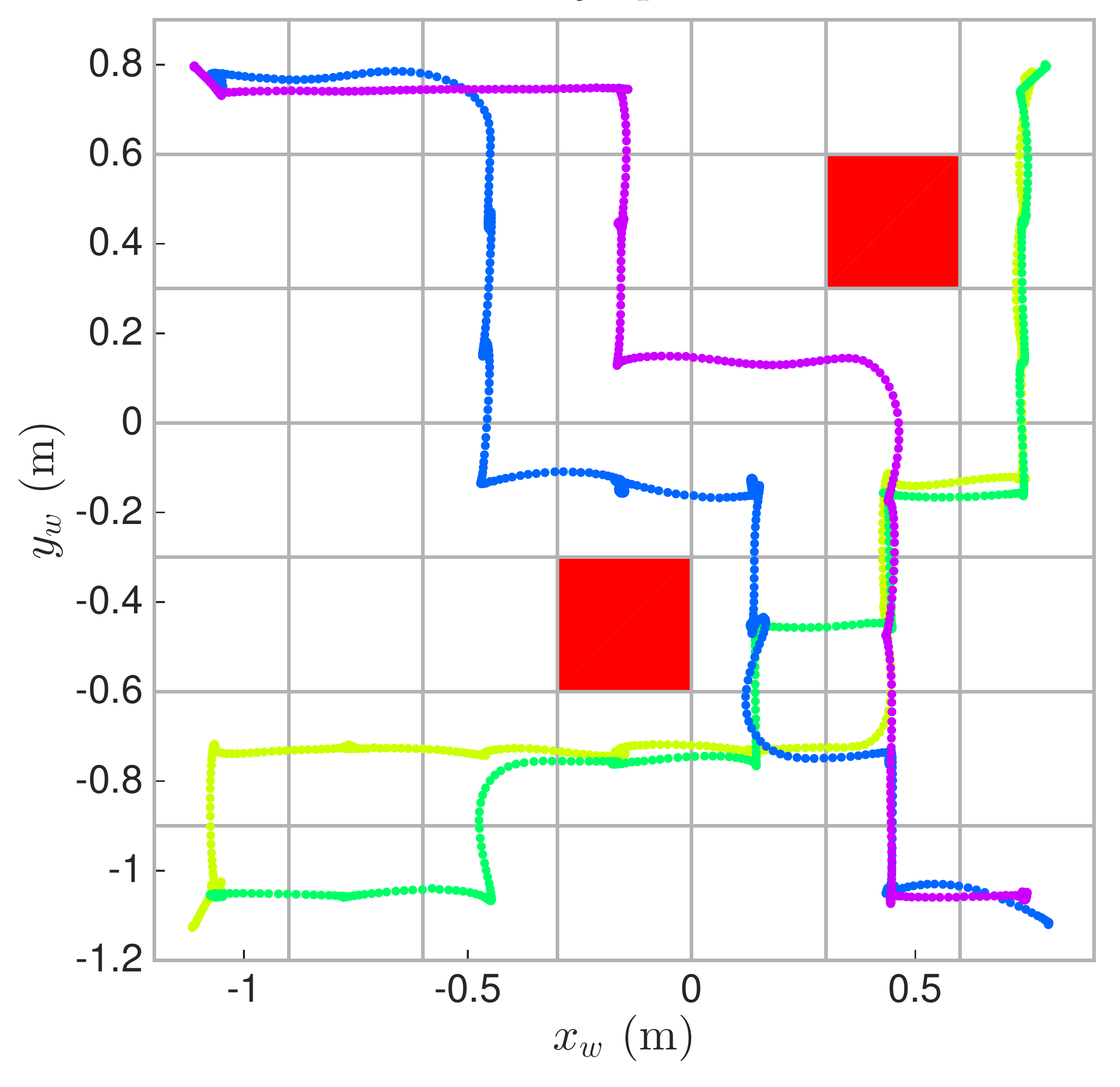}
\caption{Experimental results for the open scenario, projected onto the $(x_w, y_w)$ plane. In all plots, all the vehicles must swap corners of the room. The left plot compares trajectories for a single vehicle using the three different control policy generation strategies. The middle plot shows the resulting trajectories for two vehicles using non-deterministic Dijkstra. The right plot shows the resulting trajectories for four vehicles using greedy search, see also Figure \ref{fig:exp_setup}. Although difficult to depict, the maneuvers are safe, as the trajectories do not occupy the same physical boxes at the same time. }%
\vspace{-2mm}
\label{fig:openspace}%
\end{figure*}

The first representative scenario involves an open 3D space partitioned into a $7 \times 7 \times 2$ grid and a sparse collection of pillar-shaped obstacles. The left plot of Figure \ref{fig:openspace} compares the resulting 3D trajectories in the $(x_w, y_w)$ plane for the three strategies in the case of a single vehicle. The computation times were 40.63 milliseconds, 1.59 milliseconds, and 0.27 milliseconds for NDD, $\text{A}^*$, and greedy search, respectively. 
% NDD: OTS obs. 0.049ms, MA 6.80ms, OTS 3.67ms, PA states 2.41ms, NDD+optprims 27.70ms.
The NDD algorithm offers the best quality control policy in that there is simultaneous motion in the different degrees of freedom whenever possible and the same policy can be used from any starting box. The $\text{A}^*$ and greedy search algorithms offer similar results to each other, with both producing an optimal path of length 14. Both yield less efficient grid-like motion that is defined only along a single path from the initial box, although a new policy can quickly be recomputed from different starting boxes. Based on simulation tests for a single vehicle, each of these algorithms scale well to larger spaces or finer grids; even NDD is able to compute a solution on a $100 \times 100 \times 10$ grid in about two minutes in the worst cases. Next we compare each strategy on more vehicles.

The middle plot of Figure \ref{fig:openspace} shows the resulting trajectories for two vehicles using NDD. The control policy was computed in about 18 minutes and is defined on about PA 180000 states. 
%To accommodate the large memory requirements, the PA edges were computed locally from each PA state during execution of the NDD algorithm using the precomputed OTS and MA edges. 
%This illustrates the current capabilities of our implementation to generate a control policy in this fashion. 
While the resulting control policy yields highly efficient motion defined over a large set of initial conditions, adding more vehicles or more boxes generally explodes the computation time and memory requirements. 
Thus NDD is best suited for small scenarios involving a modest number of vehicles, when one can afford to spend time precomputing the control policy.
%In the future, more sophisticated implementations may alleviate some computational burden, such as the use of parallelization.

The right plot of Figure \ref{fig:openspace} shows the resulting trajectories for four vehicles swapping corners of the room using greedy search. Since the vehicles and physical obstacles occupy a single box, greedy search performs well, as each action typically results in one vehicle making progress towards the goal. The computation time was about four milliseconds.
%, with a control policy involving only 53 states (the length of the path). 
Simulation results on a $100 \times 100 \times 10$ grid with eight vehicles placed randomly demonstrate that greedy search is usually able to find a solution on the order of one second. As one would expect, greedy search typically fails to find a solution if long wall-like or non-convex obstacles are introduced, or if the goals are not spaced out sufficiently. Furthermore, the time to execute the entire maneuver scales with the number of vehicles.

Finally we consider the deterministic $\text{A}^*$ algorithm. Although the resulting trajectories follow a path of optimal length, they look quite similar to those found by greedy search and thus are not shown. Moreover, the method quickly becomes more computationally expensive beyond three vehicles.
% In the case of four vehicles, the computation time was X milliseconds, with a control policy involving X states. As such, significantly more time is spent searching for an optimal policy that is hardly better than ones found by greedy search. Moreover, the method quickly becomes much more computationally expensive beyond four vehicles.

\subsubsection{Channel Swapping}

The second representative scenario involves two rooms connected by a channel, defined over a $5 \times 2 \times 1$ grid, see Figure \ref{fig:channel}. Two of the vehicles must continually swap places, while the third is required to act as a gatekeeper. We specify this objective as an infinitely looping sequence of two distinct reach-avoid problems.
%, where for each reach-avoid task the goals of the first two vehicles are on opposite sides of the room and the goal of the third vehicle is at the channel. 
This illustrates that reach-avoid is a useful building block for addressing more complex specifications.

The NDD algorithm produced both control policies in about 10 seconds, while the $\text{A}^*$ algorithm took about 0.03 seconds. 
% NDD: 6.64s MA, 0.06 OTS, 0.07 + 1.44 PA, 0.21s NDD, (second policy) 0.06 OTS, 0.27+1.38 PA, 0.20 NDD
Greedy search fails to find a solution because it is unable to coordinate the third vehicle away from its goal to make space for the other two. Since the resulting trajectories overlap in physical space, Figure \ref{fig:channel_ndd} shows the trajectories as a function of time using the policy computed with NDD. The trajectories are highly non-trivial, but show that the objective is satisfied for at least one cycle of both reach-avoids. 
Although not shown, the trajectories computed using $\text{A}^*$ are similar but take a few seconds longer to execute the objective since the motion primitives are deterministic.

\begin{figure}[t]
\centering%
\includegraphics[width=0.8\linewidth,trim=0cm 0cm 0cm 0cm, clip=true]{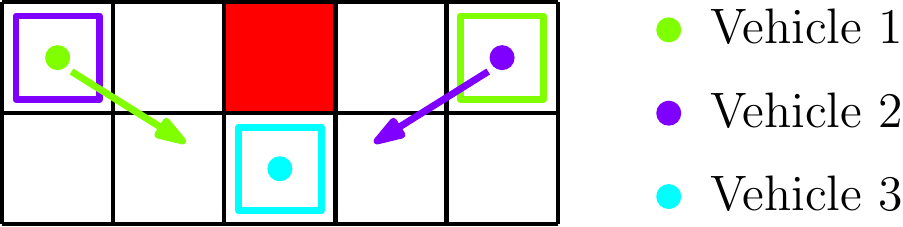}
\caption{This figure shows the channel swapping experiment involving three vehicles. In particular, it shows the specification for the first reach-avoid, where goals are shown as the colored boxes. For the second reach-avoid, the initial and goal boxes of vehicle 1 and 2 are swapped.}%
\vspace{-2mm}
\label{fig:channel}%
\end{figure}

\begin{figure}[t]
\centering%
\includegraphics[width=1\linewidth,trim=2cm 0cm 2cm 0cm, clip=true]{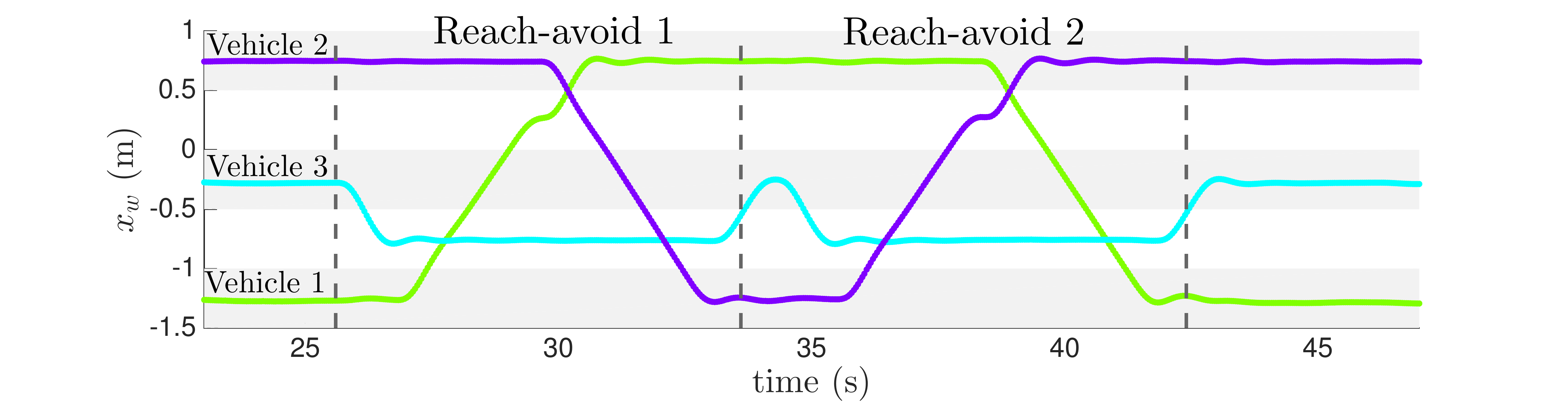}
\includegraphics[width=1\linewidth,trim=2cm 0cm 2cm 0cm, clip=true]{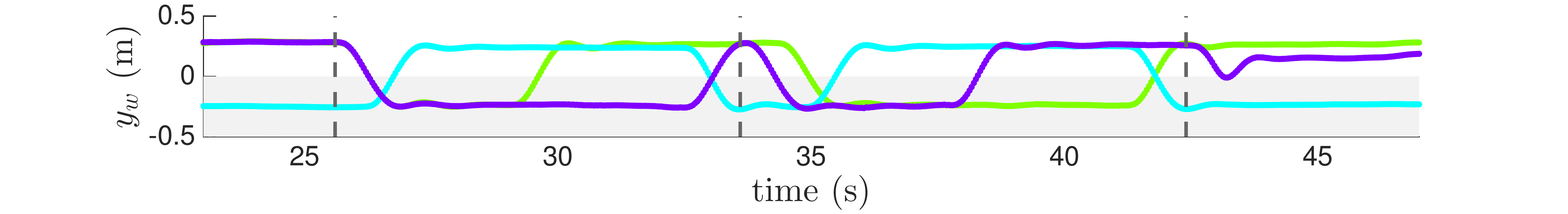}
\caption{Trajectories using non-deterministic Dijkstra for the channel experiment. The alternating grey and white areas reflect the size of the grid boxes. The duration for each of the two reach-avoid specifications is highlighted.}%
\vspace{-2mm}
\label{fig:channel_ndd}%
\end{figure}

\subsubsection{8-Puzzle}

We conclude our experimental results with the well-known 8-puzzle. On a $3 \times 3 \times 1$ grid, eight vehicles are placed randomly and must return to an ordered configuration, see Figure \ref{fig:puz8}. For this application, the $\text{A}^*$ algorithm is the most suitable, computing the control policy in 0.32 seconds. % with a path length of 24. 
The NDD approach would spend too much time precomputing edges in the high dimensional output space, while greedy search would never make progress. Results are available to view in the video.

\begin{figure}[t]
\centering%
\includegraphics[width=0.7\linewidth,trim=0cm 0cm 0cm 0cm, clip=false]{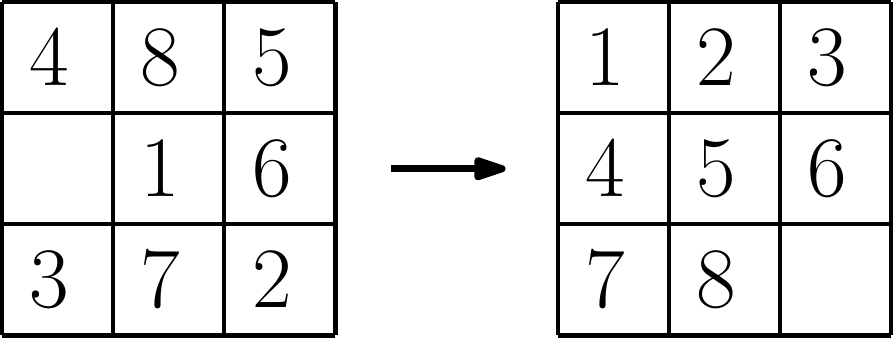}
\caption{The 8-puzzle: eight vehicles must coordinate into the ordered configuration.}%
\vspace{-2mm}
\label{fig:puz8}%
\end{figure}

\subsection{Discussion} \label{sec:discussion}

Throughout the various experimental scenarios presented, we have demonstrated the modularity offered by our approach. The designer can customize their own algorithms for generating a control policy in order to trade-off solution quality with computational efficiency. Depending on the specific application scenario, a different control policy generation strategy may be more suitable. 

In our analysis, all of the complexity was associated with the generation of the control policy for a given MA. The MA formalism enables us to generate control policies with no further regard to the continuous time trajectories that may result, due to the guarantees on discrete behavior encoded in the MA edges. On the other hand, the generation of a MA for an arbitrary system is a difficult challenge in its own right and is left to the discretion of the designer, although the design we have presented in Section \ref{sec:MAexample} can potentially be applied to control systems that are feedback-linearizable into a collection of double integrators. Taking care that the outputs are translationally invariant and that obstacle boxes can be computed, this includes end effector control of fully actuated robotic manipulators \cite{SPO05} and some wheeled vehicles through the use of look-ahead points \cite{ANDR91}.

%As long as disturbances are not too severe, the underlying feedback-based motion primitives will compensate for it. In the case of larger disturbances, the high-level control policy may be able to guide the vehicles along a new path to the goal configuration. Thus, our hybrid control strategy achieves a robust maneuver that does not require any online recomputation or generation of timed reference trajectories. These robustness features were explored more thoroughly in our previous work \cite{VUK16, VUK17}. 
Our approach offers robustness through the use of feedback-based motion primitives, as the construction of invariant regions ensures a wide range of initial conditions for which output trajectories exit through appropriate guard sets into subsequent boxes.
Since the motion primitives are updated during execution based on the measured box transitions and control policy, we do not require timing estimates for completing box transitions, which can be difficult to compute. 
%Consequently, the exact trajectory that the vehicles follow is not known in advance, but emerges dynamically as a result of the employed control policy. 
These features are advantageous under model uncertainty, which we must contend with since we base our motion primitive design on the double integrator model rather than the more complex quadrocopter model, and since aerodynamic effects arise when multiple quadrocopters fly in close proximity. Our previous work also demonstrated similar robustness of operation under wind disturbances generated by a fan on a larger quadrocopter \cite{VUK17}.
Finally, we note that our framework can easily be applied to a heterogeneous team of robots; if each vehicle has its own MA, the parallel composition automatically constructs the overall MA for the multi-vehicle system.

Of course, our solution to Problem~\ref{prob:reachavoid} is conservative because we have restricted ourselves to a particular discretization, namely the choice of a partition into boxes and the use of motion primitives. As we have demonstrated, this is a reasonable trade-off, especially since the resolution of the output space discretization, the richness of motion primitives, and the complexity of the control policy are all design parameters.

\section{Conclusion} \label{sec:conclusion}
We have developed a modular, hierarchical framework for motion planning of multiple robots in known environments.
It consists of several modules. An output transition system (OTS) models the allowable motions of the robots by partitioning their workspace into boxes. A set of motion primitives is designed based on reach control on polytopes. A maneuver automaton (MA) captures constraints on successive motion primitives. Finally, a control policy is generated based on the synchronous product of the OTS and the discrete part of the MA. 
Overall we obtain a two-level control design which is highly robust, modular, and conceptually elegant. 
We presented a specific maneuver automaton for the double integrator system, and we showed how this 
design can be composed to obtain maneuver automata for multi-robot systems. The methodology was experimentally validated on a group of quadrocopters.
Future work includes application of our methodology to different vehicle classes such as robotic manipulators or wheeled vehicles, and integration with more advanced multi-robot planning algorithms in dynamic environments.

\bibliographystyle{IEEEtranS}

\begin{comment}
\begin{IEEEbiography}
%[{\includegraphics[width=1in,height=1.25in,clip,keepaspectratio]{figs/mario.jpg}}]{Author name}
   Biography text
\end{IEEEbiography}

\begin{IEEEbiography}
[{\includegraphics[width=1in,height=1.25in,clip,keepaspectratio]{figs/zach.jpg}}]{Author name}
   Biography text
\end{IEEEbiography}

\begin{IEEEbiography}
[{\includegraphics[width=1in,height=1.25in,clip,keepaspectratio]{figs/angela2.jpg}}]{Author name}
   Biography text
\end{IEEEbiography}

\begin{IEEEbiography}
[{\includegraphics[width=1in,height=1.25in,clip,keepaspectratio]{figs/broucke_small.jpg}}]{Author name}
   Biography text
\end{IEEEbiography}
\end{comment}

\begin{appendix}

\begin{proof}[Proof of Theorem~\ref{thm:value}]
First we prove \eqref{eq:valuea}. Consider $q = (l,m) \in \QPA \setminus \QPA^f$ and suppose $|\SPA(q)|>0$. 
By the definition of $J$, for any $c \in \cC$,
\begin{equation}
\label{eq:J2a}
J(q,c) = \max \limits_{e = (q,\sigma,q') \in \EPA} \{ \DPA(e) + J(q',c) \} \,,
\end{equation}
where $q'=(l',c(q,\sigma)) \in \QPA$. Observe that given $q = (l,m) \in \QPA$ and $\sigma \in \SPA(q)$,
there exists a unique $l' \in \LOTS$ such that $(l,\sigma,l') \in \EOTS$ (since the OTS
is a deterministic automaton). Therefore, when we take the maximum over $e = (q,\sigma,q') \in \EPA$ in 
\eqref{eq:J2a} with $q' = (l',c(q,\sigma))$, the only free variable to maximize over is $\sigma \in \SPA(q)$. 
Therefore, \eqref{eq:J2a} is equivalent to
\begin{equation}
\label{eq:J2b}
J(q,c) = \max \limits_{\sigma \in \SPA(q)} \{ \DPA(e) + J(q',c) \} \,,
\end{equation}
where, as before, $q' = (l',c(q,\sigma)) \in \QPA$ and $e = (q,\sigma,q') \in \EPA$. By definition of $V$
\begin{eqnarray*}
J(q,c) \geq \max \limits_{\sigma \in \SPA(q)} \{ \DPA(e) + V(q') \} \,.
\end{eqnarray*}
Again by definition of $V$
\begin{equation*}
V(q) = \min \limits_{c \in \cC} J(q,c) \geq \min \limits_{c \in \cC}
\left\lbrace \max \limits_{\sigma \in \SPA(q)} \{ \DPA(e) + V(q') \} \right\rbrace \,.
\end{equation*}
Thus,
\begin{equation*}
V(q) \geq \min \limits_{c(q)\in \cM(q)}
\left \lbrace \max \limits_{\sigma \in \SPA(q)} \{ \DPA(e) + V(q') \} \right\rbrace \,.
\end{equation*}
To prove the reverse inequality, suppose by the way of contradiction that there exists an admissible control
assignment at $q \in \QPA$, $\hat{c}(q) \in \cM(q)$, such that
\begin{eqnarray*}
V(q)  &   >  & \max \limits_{\sigma \in \SPA(q)} \{ \DPA(\hat{e}) + V(\hat{q}) \} \\
      & \geq & \min \limits_{c(q)\in \cM(q)}
               \left\lbrace \max \limits_{\sigma \in \SPA(q)} \{ \DPA(e) + V(q') \} \right\rbrace \,,
\end{eqnarray*}
where $\hat{q} = ( \hat{l}, \hat{c}(q,\sigma) ) \in \QPA$, $\hat{e} = ( q, \sigma, \hat{q} ) \in \EPA$, 
$q' = (l',c(q,\sigma)) \in \QPA$, and $e = ( q, \sigma, q' ) \in \EPA$. Suppose the maximum for $\hat{c}(q)$ 
is achieved with $\sigma^* \in \SPA(q)$. We define $q^* = ( l^*, \hat{c}(q,\sigma^*) )$, and 
$e^* = (q,\sigma^*,q^*) \in \EPA$. Then
\[
V(q)  > \DPA(e^*) + V(q^*) \,.
\]
Suppose an admissible optimal control policy for $q^*$ to achieve $V(q^*)$ is $c^* \in \cC$. Define a new 
policy $c = c^*$ on $\QPA \setminus \{ q \}$ and $c(q) = \hat{c}(q)$. Then 
\begin{align*}
J(q,c) &= \max \limits_{\sigma \in \SPA(q)} \{ \DPA(e) + V(q') \} \\
&= \DPA(e^*) + V(q^*) < V(q) \,,
\end{align*}
a contradiction. Hence, it must be that
\[
V(q) \le \min \limits_{c(q)\in\cM(q)}
         \left\lbrace \max \limits_{\sigma \in \SPA(q)} \{ \DPA(e) + V(q') \} \right\rbrace \,,
\]
as desired. This proves \eqref{eq:valuea}.

Second we prove \eqref{eq:valueb}. Consider $q = (l,m) \in \QPA \setminus \QPA^f$ and suppose $|\SPA(q)|>0$.
Let $\bar{c}(q)\in\cM(q)$ be an admissible control assignment at $q$ such that for all $\sigma \in \SPA(q)$,
\begin{equation}
\label{eq:minc}
\bar{c}(q,\sigma) \in \argmin \limits_{\bar{m} \in \cM(q,\sigma)} \{ \DPA(\bar{e}) + V(\bar{q}) \} \,,
\end{equation}
where $\bar{q} = (\bar{l},\bar{m}) \in \QPA$, and $\bar{e}=(q,\sigma,\bar{q})\in\EPA$. We will show that
\begin{eqnarray}
\label{eq:property1}
&& \max \limits_{\sigma \in \SPA(q)} \{ \DPA(\bar{e}) + V(\bar{q}) \} \leq \\
&& \min \limits_{c(q) \in \cM(q)}
\left \lbrace
\max \limits_{\sigma \in \SPA(q)} \{ \DPA(e) + V(q') \}
\right \rbrace \,, \nonumber
\end{eqnarray}
where $\bar{q} = ( \bar{l},\bar{c}(q,\sigma) ) \in \QPA$, $\bar{e} = (q,\sigma,\bar{q}) \in \EPA$, 
$q' = ( l',c(q,\sigma) ) \in\QPA$, and $e = (q,\sigma,q') \in \EPA$. Suppose the minimum and maximum 
on the r.h.s. are achieved with $c^*(q) \in \cM(q)$ and $\sigma^* \in \SPA(q)$.
Also, suppose the maximum on the l.h.s. is achieved with $\bar{\sigma}^* \in \SPA(q)$. Then 
\eqref{eq:property1} becomes
\begin{equation}
\label{eq:property1b}
\DPA(\bar{e}^*) + V(\bar{q}^*) \le \DPA(e^*) + V(q^*) \,,
\end{equation}
where $\ol{q}^* = ( \ol{l}^*, \ol{c}(q,\ol{\sigma}^*) ) \in \QPA$, 
$\ol{e}^* = ( q, \ol{\sigma}^*,\ol{q}^* ) \in \EPA$, $q^* = ( l^*, c^*(q,\sigma^*) ) \in \QPA$, and 
$e^* = ( q, \sigma^*, q^* ) \in \EPA$. Suppose by way of contradiction that \eqref{eq:property1} does not hold.
Then \eqref{eq:property1b} does not hold. That is,
\[
\DPA(\bar{e}^*) + V(\bar{q}^*) > \DPA(e^*) + V(q^*) \,.
\]
By the maximality of $\sigma^* \in \SPA(q)$ we have
\begin{eqnarray*}
&&  \DPA(e^*) + V(q^*) \\
               &  =  & \DPA( (q, \sigma^*, (l^*,c^*(q,\sigma^*))) ) + V((l^*,c^*(q,\sigma^*))) \\
               & \ge & \DPA( (q, \ol{\sigma}^*, (\ol{l}^*, c^*(q,\ol{\sigma}^*))) ) 
                       + V( (\ol{l}^*, c^*(q,\ol{\sigma}^*)) ) \,.
\end{eqnarray*}
Therefore
\begin{eqnarray*}
&& \DPA(\bar{e}^*) + V(\bar{q}^*) \\
&=& \DPA( (q, \ol{\sigma}^*, (\ol{l}^*, \ol{c}(q,\ol{\sigma}^*))) ) + V( (\ol{l}^*, \ol{c}(q,\ol{\sigma}^* )) )  \\
&>& \DPA( (q, \ol{\sigma}^*, (\ol{l}^*,    c^*(q,\ol{\sigma}^*))) ) + V( (\ol{l}^*,    c^*(q,\ol{\sigma}^* )) ) \,.
\end{eqnarray*}
This contradicts the definition of $\bar{c}(q,\bar{\sigma}^*)$ in \eqref{eq:minc}. We conclude that
\eqref{eq:property1} must hold.

Now consider $\bar{c}^* \in \cC$ such that $\bar{c}^*(q) = \bar{c}(q)$ and $\bar{c}^*$ is any admissible 
optimal control policy for $q' \neq q$. Using \eqref{eq:J2b} we have
\begin{eqnarray*}
J( q, \bar{c}^*) &   =  & \max \limits_{\sigma \in \SPA(q)} \{ \DPA(\bar{e}) + J(\bar{q},\bar{c}^*) \} \\
                 &   =  & \max \limits_{\sigma \in \SPA(q)} \{ \DPA(\bar{e}) + V(\bar{q}) \} \,,
\end{eqnarray*}
where $\bar{q} = (\ol{l}, \bar{c}^*(q,\sigma)) = (\ol{l}, \bar{c}(q,\sigma)) \in \QPA$ and 
$\bar{e} = (q, \sigma, \bar{q}) \in \EPA$. Now by \eqref{eq:valuea} and \eqref{eq:property1} we have
\begin{eqnarray*}
J( q, \bar{c}^*) &   =  & \max \limits_{\sigma \in \SPA(q)} \{ \DPA(\bar{e}) + V(\bar{q}) \} \\
                 & \leq & \min \limits_{c(q) \in \cM(q)} 
                          \left \lbrace \max \limits_{\sigma \in \SPA(q)} \{ \DPA(e) + V(q') \} \right \rbrace
                          = V(q) \,. 
\end{eqnarray*}
By definition of $V$, we obtain that $J(q,\bar{c}^*) = V(q)$. That is,
\begin{equation}
\label{eq:JeqV}
J(q,\bar{c}^*) = V(q) = \max\limits_{\sigma \in \SPA(q)} \{ \DPA(\bar{e}) + V(\bar{q}) \}
\end{equation}
with $\bar{q} = (\ol{l}, \bar{c}(q,\sigma)) \in \QPA$ and $\bar{e} = (q, \sigma, \bar{q}) \in \EPA$.
However, we know $\bar{c}^*(q) = \bar{c}(q)$, and $\bar{c}(q)$ satisfies \eqref{eq:minc}. Therefore,
\begin{eqnarray*}
V(q) & = & \max \limits_{\sigma \in \SPA(q)}
           \left \lbrace
           \min \limits_{\ol{m} \in \cM(q,\sigma)}
           \{ \DPA(\bar{e}) + V(\bar{q}) \}
           \right \rbrace \,,
\end{eqnarray*}
where now $\bar{q} = (\ol{l}, \ol{m} ) \in \QPA$, and this proves \eqref{eq:valueb}.
\end{proof}

\begin{proof}[Proof of Lemma~\ref{lem:executionruns}]
Let $(l^0,m^0) \in \QPA^0$ and $x_0 \in \IMA(m^0)$. The initial MA state of the MA execution is 
$(m(0), x_0) = (m^0, x_0) \in \QMA^0$, and the initial PA state of the PA run $\pi$ is $q^0 = (l^0, m^0)$.
The hybrid time domain of $\chi$ will be denoted as $\tau = \{ \cI_i \}_{i = 0}^{n_{\tau}}$, and is initialized as $\tau = \{ \cI_0 \}$, where $\cI_0 = \{\tau_0 \}$ and $\tau_0 = 0$. With the base case $k = 0$ established, we construct the remainder of the MA execution and PA run by induction.

The run so far is $\pi = q^0 \ldots q^{k}$, where $q^i = (l^i, m^i)$ for $i = 0, \ldots, k$, and $\cI_k = \{\tau_k\}$.
Suppose $\SMA(m^k) = \emptyset$. Then by Assumption~\ref{assum:MP} (vi), $\phiMA(t, x_0) \in \IMA(m^k)$ for all 
$t$ in the extended interval $\cI_k = [\tau_k, \infty)$. The complete PA run is $\pi = q^0 \cdots q^k$ and the induction terminates.
Suppose instead $\SMA(m^k) \neq \emptyset$. Then by Assumption~\ref{assum:MP} (vii), there exist unique $\sigma^k \in \SMA(m^k)$ and $T^k \geq 0$, such that $\phiMA(t, x_0) \in \IMA(m^k)$ for all $t$ in the extended interval $\cI^k = [\tau_k, \tau'_k]$, $\tau'_k := \tau_k + T^k$. 
Also, for each $e = (m^k, \sigma^k, m') \in \EMA$, there exists a guard set $g_e$ such that $\phiMA(\tau'_k, x_0) \in g_e$. 
Assumption~\ref{assum:MP} (ii) tells us that for all such $m'$, the guard set is the same. Also,
Assumption~\ref{assum:MP} (iii) ensures that $\sigma^k$ is unique. 
Now we invoke the control policy to select a specific $m'$. Let $m^{k+1} := c(q^k, \sigma^k)$ so that 
$e^k := (m^k, \sigma^k, m^{k+1}) \in \EMA$ and $\phiMA(\tau'_k, x_0) \in g_{e^k}$. Define 
$x_0^{k+1} := r_{e^k}(\phiMA(\tau'_k, x_0) )$. 
By Assumption~\ref{assum:MP} (iv), $x_0^{k+1} \not \in g_e$ for any $e = (m^{k+1},\sigma,m') \in \EMA$. 
The next PA state is $q^{k+1} = (l^{k+1}, m^{k+1})$, 
where $l^{k+1} \in \LOTS$ is uniquely determined through $(l^k, \sigma^k, l^{k+1}) \in \EOTS$, by the determinism of 
the OTS. The PA run so far is $\pi = q^0 \cdots q^{k+1}$ and the new interval $\cI_{k+1} = \{\tau'_k\}$ is added to $\tau$.  

The above inductive process is guaranteed to terminate with a finite PA run by definition of $\QPA^0$. That is, since $(l^0,m^0) \in \QPA^0$ there will be a smallest $N$ such that $(l^N,m^N) \in \QPA^f$. Moreover, by definition of $\QPA^f$ \eqref{eq:qpaf}, we have that $\SMA(m^N) = \emptyset$ and so the run cannot be extended further.
The resulting MA execution is infinite with a finite number of intervals in the hybrid time domain $\tau$, 
and it is non-blocking by Lemma~\ref{lem:nb}. 
\end{proof}

\begin{proof} [Proof of Theorem~\ref{thm:parallel}]
We employ the following two standard facts regarding products, intersections, and subsets of sets.
Formally, if $A,B,C,D$ are sets, then
\begin{align} 
\label{eq:prodcart}
(A \cap C) \times (B \cap D) &= (A \times B) \cap (C \times D), \\
\label{eq:prodsub}
A \subset C \text{ and } B \subset D &\Rightarrow (A \times B) \subset (C \times D).
\end{align}

First we show that the resulting $\cHMA$ is in fact an MA according to the definition. Clearly the composed vector fields are also globally Lipschitz and the composed invariants are bounded. The non-trivial points to show are that (a) the stacked system satisfies Assumption~\ref{assum:symmetry}, (b) the invariants project within the canonical box, (c) the enabling conditions lie both within the invariant and on an appropriate face determined by $\sigma \in \Sigma$, (d) the reset conditions are determined only by the event $\sigma \in \Sigma$, and (e) the initial conditions are the entire invariants. We prove each of these in turn.

(a) We show that Assumption~\ref{assum:symmetry} for the stacked system holds. For the first condition, it can be verified by direct expansion that the definition of $h$ necessarily produces the injective output map $o: \{ 1, \ldots, p\} \rightarrow \{1, \ldots, n\}$ defined earlier. For the second condition, letting $x = (x^1, x^2)$, $u = (u^1, u^2)$ and $y = (y^1, y^2)$, we must show that $f(x,u) = f(x + h^{-1}(y), u)$. First, by Assumption~\ref{assum:symmetry} on each system, $f^j(x^j, u^j) = f^j(x^j +(h^j_{o^j})^{-1}(y^j), u^j)$. Second, it is easy (but tedious) to show that  $h^{-1}_o(y) = ((h^1_{o^1})^{-1}(y^1), (h^2_{o^2})^{-1}(y^2))$. Putting these two facts together gives the desired result.

(b) We show that for all $m \in M$, $\IMA(m) \subset h^{-1}(Y^{*})$. Letting $m = (m^1, m^2) \in M$, we have by the fact that each system is an MA that $\IMA^j(m^j) \subset (h^j)^{-1}(Y^{*,j})$ for $j = 1,2$. It is easy (but tedious) to show that $h^{-1}(Y^*) = (h^1)^{-1}(Y^{*,1}) \times (h^2)^{-1}(Y^{*,2})$. The result then follows by applying \eqref{eq:prodsub}.

(c) We show that for all $e = (m_1, \sigma, m_2) \in \EMA$, $g_e \subset  h^{-1}(\cF_{\sigma}) \cap \IMA(m_1)$. Let $e = (m_1, \sigma, m_2) \in \EMA$ and decompose it as $e^j = (m_1^j, \sigma^j, m_2^j) \in \oEMA^j$ for $j = 1,2$. For $j = 1,2$, if $\sigma^j \neq \varepsilon^j$, then $g_{e^j} \subset (h^j)^{-1}(\cF_{\sigma^j}) \cap \IMA^j(m_1^j)$ since each system is an MA. Otherwise, if $\sigma^j = \varepsilon^j$, observe that $\cF_{\varepsilon^j} = Y^{*,j}$ and $g_{e^j} = I^j(m_1^j) \subset \IMA^j(m_1^j) \subset (h^j)^{-1}(Y^{*,j})$ by construction. Consequently $g_{e^j} \subset (h^j)^{-1}(\cF_{\sigma^j}) \cap \IMA^j(m_1^j)$ again. Next, by definition $g_e = g_{e^1} \times g_{e^2}$ and $\IMA(m) = \IMA^1(m_1^1) \times \IMA^2(m_1^2)$. It is also easy (but tedious) to show that $h^{-1}(\cF_{\sigma}) = (h^1)^{-1}(\cF_{\sigma^1}) \times (h^2)^{-1}(\cF_{\sigma^2})$. The result then follows by applying \eqref{eq:prodcart} and \eqref{eq:prodsub}.

(d) We show that for all $e = (m_1, \sigma, m_2) \in \EMA$, $r_e(x) = x - h_o^{-1}(d \circ \sigma)$. Let $e = (m_1, \sigma, m_2) \in \EMA$ and decompose it as $e^j = (m_1^j, \sigma^j, m_2^j) \in \oEMA^j$ for $j = 1,2$. First, by definition $r_e(x) = (r_{e^1}(x^1), r_{e^2}(x^2))$. Then for $j = 1,2$, if $\sigma^j \neq \varepsilon^j$, then $r_{e^j}(x^j) = x^j - (h^j_{o^j})^{-1}(d^j \circ \sigma^j)$ since each system is an MA. This is also the case when $\sigma^j = \varepsilon^j$ because $r_{e^j}(x^j) = x^j$ and $(h^j_{o^j})^{-1}(d^j \circ \varepsilon^j) = 0$. Next, since $d = (d^1, d^2)$ and $\sigma = (\sigma^1, \sigma^2)$, component-wise multiplication gives $d \circ \sigma = (d^1 \circ \sigma^1, d^2 \circ \sigma^2)$. Using $x = (x^1, x^2)$ and the decomposition $h^{-1}_o(y) = ((h^1_{o^1})^{-1}(y^1), (h^2_{o^2})^{-1}(y^2))$ established in (a) with $y = d \circ \sigma$ proves the result.

(e) We must show that $\QMA^0 = \{ (m, x) \in \QMA ~|~ x \in \IMA(m) \}$. This follows immediately from the definitions of $\QMA$, $\IMA$, and $\QMA^0$.

Next we prove that (i)-(vii) of Assumption~\ref{assum:MP} hold.

(i) We must show that for all $m \in M$, $\varepsilon \not \in \SMA(m)$. This follows immediately from the definition of the edges since for all $m \in M$, $\SMA(m) = \oSMA(m) \setminus \{ \varepsilon \}$.

(ii) We must show that for all $e_1, e_2 \in \EMA$ such that $e_1 = (m_1, \sigma, m_2)$ and $e_2 = (m_1, \sigma, m_3)$, 
$g_{e_1} = g_{e_2}$. To that end, we write $e^j_1 = (m_1^j, \sigma^j, m_2^j) \in \oEMA^j$ and 
$e^j_2 = (m_1^j, \sigma^j, m_3^j) \in \oEMA^j$ for $j = 1, 2$. To show that $g_{e_1} = g_{e_2}$, we must show that $g_{e^j_1} = g_{e^j_2}$ for $j = 1, 2$. Let $j \in \{ 1, 2 \}$. If $\sigma^j = \varepsilon^j$, 
then by construction $g_{e^j_1} = I^j(m_1^j) = g_{e^j_2}$. Otherwise, 
if $\sigma^j \neq \varepsilon^j$, then $g_{e^j_1} = g_{e^j_2}$ follows from Assumption~\ref{assum:MP} (ii) on the $j$-th system.

(iii) We must show that for all $e_1, e_2 \in \EMA$ such that $e = (m_1, \sigma_1, m_2)$ and $e_2 = (m_1, \sigma_2, m_3)$, 
if $\sigma_1 \neq \sigma_2$, then $g_{e_1} \cap g_{e_2} = \emptyset$. To that end, 
we write $e^j_1 = (m_1^j, \sigma_1^j, m_2^j) \in \oEMA^j$ and $e^j_2 = (m_1^j, \sigma_2^j, m_3^j) \in \oEMA^j$
for $j = 1, 2$. If $\sigma_1 \neq \sigma_2$, then suppose w.l.o.g. that $\sigma^1_1 \neq \sigma^1_2$. 
To show $g_{e_1} \cap g_{e_2} = \emptyset$, by \eqref{eq:prodcart} it suffices to show that 
$g_{e^1_1} \cap g_{e^1_2} = \emptyset$. If both $\sigma^1_1$ and $\sigma^1_2$ are not equal to $\varepsilon^1$, 
then by Assumption~\ref{assum:MP}~(iii) $g_{e^1_1} \cap g_{e^1_2} = \emptyset$. If one of $\sigma^1_1$ or 
$\sigma^1_2$ is $\varepsilon^1$, say $\sigma_1^1$, then we cannot invoke Assumption~\ref{assum:MP}~(iii). 
However, by definition $g_{e_1^1} = I^1(m_1^1)$ is not intersecting with any other guards, so that 
$g_{e^1_1} \cap g_{e^1_2} = \emptyset$, as desired. 

(iv) We must show that for all $e_1, e_2 \in \EMA$ such that $e_1 = (m_1, \sigma_1, m_2)$ and $e_2 = (m_2, \sigma_2, m_3)$, 
$r_{e_1}(g_{e_1}) \cap g_{e_2} = \emptyset$. To that end,
we write $e^j_1 = (m_1^j, \sigma_1^j, m_2^j) \in \oEMA^j$ and $e^j_2 = (m_2^j, \sigma_2^j, m_3^j) \in \oEMA^j$
for $j = 1, 2$. By \eqref{eq:prodcart}, it suffices to show that $r_{e^j_1}(g_{e^j_1}) \cap g_{e^j_2} = \emptyset$ for at least one of 
$j = 1, 2$. Since $\varepsilon$ cannot be an event by Assumption~\ref{assum:MP}~(i), at least one of $j_1 = 1, 2$ 
must have $\sigma_1^{j_1} \neq \varepsilon^{j_1}$, and at least one of $j_2 = 1, 2$ must have 
$\sigma_2^{j_2} \neq \varepsilon^{j_2}$. Formally, there are several cases, but they can be summarized as 
follows. If there is at least one matching $j = j_1 = j_2$, where both $\sigma_1^j, \sigma_2^j \neq \varepsilon^j$, 
then both $e^j_1, e^j_2 \in \EMA^j$, and we invoke Assumption~\ref{assum:MP}~(iv) to get 
$r_{e^j_1}(g_{e^j_1}) \cap g_{e^j_2} = \emptyset$. There are two remaining cases which are similar so we only 
look at one of them. Suppose $\sigma_1^1 = \varepsilon^1$ and $\sigma_2^2 = \varepsilon^2$. Then 
$e^1_1 \not \in \EMA^1$ and $e^2_2 \not \in \EMA^2$ so we cannot invoke Assumption~\ref{assum:MP}~(iv). However, by construction $e_1^1 \in \oEMA^1$ implies that $r_{e_1^1}(g_{e_1^1}) = I^1(m_1^1)$ is not intersecting with any guards in $\EMA^1$. In particular, $e^1_2 \in \EMA^1$ since it must be that $\sigma_2^1 \neq \varepsilon^1$, and so $r_{e^1_1}(g_{e^1_1}) \cap g_{e^1_2} = \emptyset$.

(v) We must show that for all $e = (m_1, \sigma, m_2) \in \EMA$, $r_e(g_e) \subset \IMA(m_2)$. To that end, we write $e^j = (m_1^j, \sigma^j, m_2^j) \in \oEMA^j$ for $j = 1,2$. For $j = 1,2$, if $\sigma^j \neq \varepsilon^j$, then $r_{e^j}(g_{e^j}) \subset \IMA^j(m_2^j)$ follows from Assumption~\ref{assum:MP}~(v) on the individual system. This is also the case when $\sigma^j = \varepsilon^j$, since by definition of $\oEMA^j$ we have $r_{e^j}(g_{e^j}) = I^j(m_1^j) \subset \IMA^j(m_2^j)$. Next by definition we have that $\IMA(m_2) = \IMA^1(m_2^1) \times \IMA^2(m_2^2)$. Also, it is easy to verify that $r_e(g_e) = r_{e^1}(g_{e^1}) \times r_{e^2}(g_{e^2})$. The result follows by applying \eqref{eq:prodsub}.

(vi) We must show that for all $m \in M$, if $\SMA(m) = \emptyset$ then $\IMA(m)$ is invariant.
Let $m = (m^1, m^2) \in M$, $x_0 = (x_0^1, x_0^2) \in \IMA(m)$, and suppose that $\SMA(m) = \emptyset$. 
Then for $j = 1, 2$, $\SMA^j(m^j) = \emptyset$.
To see this, suppose one was not empty, say $\sigma^1 \in \SMA^1(m^1) \subset \oSMA^1(m^1)$, where by Assumption~\ref{assum:MP}~(i) $\sigma^1 \neq \varepsilon^1$. By construction, $\varepsilon^2 \in \oSMA^2(m^2)$ and by Assumption~\ref{assum:MP}~(i) proven above $(\sigma^1, \varepsilon^2) \in \oSMA(m) \setminus \{ \varepsilon \} = \SMA(m)$, so we get a contradiction. 
Appealing to the Assumption~\ref{assum:MP}~(vi) for $j = 1,2$, for all $t \geq 0$ the individual trajectories satisfy $\phiMA^j(t, x_0^j) \in \IMA^j(m^j)$. Thus for all $t \geq 0$, $\phiMA(t, x_0) \in \IMA(m)$.

(vii) We must show that for all $m \in M$, if $\SMA(m) \neq \emptyset$ then $\IMA(m)$ forces all trajectories to exit in finite time through some guard.
Let $m = (m^1, m^2) \in M$, $x_0 = (x_0^1, x_0^2) \in \IMA(m)$, and suppose that $\SMA(m) \neq \emptyset$. Reversing the argument used in Assumption~\ref{assum:MP}~(vi), now we can conclude for at least one of $j = 1,2$ that $\SMA^j(m^j) \neq \emptyset$. Suppose first that only $\SMA^1(m^1) \neq \emptyset$, then applying Assumption~\ref{assum:MP}~(vii) for $j = 1$ and Assumption~\ref{assum:MP}~(vi) for $j = 2$ furnishes the event $\sigma = (\sigma^1, \varepsilon^2) \in \SMA(m)$, with exit time $T^1$. A similar argument holds if only $\SMA^2(m^2) \neq \emptyset$. If for both $j = 1,2$, $\SMA^j(m^j) \neq \emptyset$, then Assumption~\ref{assum:MP}~(vii) for both $j = 1,2$ furnishes the event $\sigma = (\sigma^1, \sigma^2)$ with $\sigma^j\in \SMA^j(m^j)$, with exit times $T^j$. If $T^1 < T^2$, then the overall exit time is $T = T^1$ with event $\sigma = (\sigma^1, \varepsilon^2) \in \SMA(m)$. If $T^1 > T^2$, then the exit time is $T = T^2$ with event $\sigma = (\varepsilon^1, \sigma^2) \in \SMA(m)$. Otherwise, $T = T^1 = T^2$, and the event is 
$\sigma = (\sigma^1, \sigma^2) \in \SMA(m)$. In all the cases above, it is then easy (but tedious) to verify that for all $e = (m, \sigma, m') \in \EMA$ and for all $t \in [0, T]$, $\phiMA(t, x_0) \in \IMA(m)$ and $\phiMA(T, x_0) \in g_e$.

\end{proof}

\begin{proof}[Proof of Lemma~\ref{lemma:doubles}]
(i) We must show that for all $m \in M$, $0 \not \in \SMA(m)$. This is clearly true since there is no edge in $\EMA$ containing the label $0$.

(ii) We must show that for all $e_1, e_2 \in \EMA$ such that $e_1 = (m_1, \sigma, m_2)$ and $e_2 = (m_1, \sigma, m_3)$, $g_{e_1} = g_{e_2}$. This is clearly true since we have designed $g_{(\sF,1,\sH)} = g_{(\sF,1,\sF)}$ and $g_{(\sB,-1,\sH)} = g_{(\sB,-1,\sB)}$.

(iii) We must show that for all $e_1, e_2 \in \EMA$ such that $e = (m_1, \sigma_1, m_2)$ and $e_2 = (m_1, \sigma_2, m_3)$, if $\sigma_1 \neq \sigma_2$, then $g_{e_1} \cap g_{e_2} = \emptyset$. This is trivially true since for all $m \in M$, $|\SMA(m)| < 2$.

(iv) We must show that for all $e_1, e_2 \in \EMA$ such that $e_1 = (m_1, \sigma_1, m_2)$ and $e_2 = (m_2, \sigma_2, m_3)$, $r_{e_1}(g_{e_1}) \cap g_{e_2} = \emptyset$. Using Assumption \ref{assum:MP}~(ii) above, we only need to check two cases, that is, $e_1 = e_2 = (\sF,1,\sF)$ and $e_1 = e_2 = (\sB, -1,\sB)$. Both cases satisfy the condition because of the reset action on the first coordinate; for example, the first case gives $r_{e_1}(\{d\} \times (0, \bar{u}_1]) \cap \{d\} \times (0, \bar{u}_1] = \emptyset$.

(v) We must show that for all $e = (m_1, \sigma, m_2) \in \EMA$, $r_e(g_e) \subset \IMA(m_2)$. This is easily verified for all four edges in $\EMA$, for example, if $e = (\sF, 1, \sH)$, clearly $r_{e_1}(\{d\} \times (0, \bar{u}_1]) \subset \IMA(\sH)$.

(vi) We must show that for all $m \in M$, if $\SMA(m) = \emptyset$ then $\IMA(m)$ is invariant. We have that only $\SMA(\sH)=\emptyset$. As can be seen in Figure \ref{fig:DoubleVF}, the closed-loop vector field does not allow trajectories to cross outside of $\IMA(\sH)$, and therefore for all $x_0 \in \IMA(\sH)$, and for all $t\geq 0$, $\phiMA(t,x_0)\in \IMA(\sH)$.

(vii) We must show that for all $m \in M$, if $\SMA(m) \neq \emptyset$ then $\IMA(m)$ forces all trajectories to exit in finite time through some guard. Consider $\sF$ with $\SMA(\sF)= \{ 1 \}$. As can be seen in Figure \ref{fig:DoubleVF}, for all $x_0\in\IMA(\sF)$, there exists $T\geq 0$ such that for all $t\in[0,T]$, $\phiMA(t,x_0)\in \IMA(\sF)$, and $\phiMA(T,x_0)\in \{ d \} \times (0, \bar{u}_1]$. Since both $g_{(\sF,1,\sH)} = g_{(\sF,1,\sF)} = \{d\} \times (0, \bar{u}_1]$, the assumption holds. A similar argument can be made for $\sB$.
\end{proof}

\end{appendix}

\end{document}